\def\dbP{\mathbb{P}}
\DeclarePairedDelimiter\ceil{\lceil}{\rceil}
\DeclareMathOperator*{\argmin}{arg\,min}
\newcommand{\ind}{\perp\!\!\!\perp} 
\newcommand{\bsbeta}{\boldsymbol{\beta}}
\newcommand{\bstheta}{\boldsymbol{\theta}}
\newcommand{\nb}{\text{ne}} % neighborhood
\newcommand{\bz}{\boldsymbol{z}}
\newcommand{\bx}{\boldsymbol{x}}
\newcommand{\by}{\boldsymbol{y}}
\newcommand{\bu}{\boldsymbol{u}}
\newcommand{\bv}{\boldsymbol{v}}
\newcommand{\dbE}{\mathbb{E}}
\newcommand{\dbR}{\mathbb{R}}
\newcommand{\plaw}{\dbP}
\newcommand{\MFR}{{\mathfrak{R}}}
\newcommand{\MML}{{\mathcal{L}}}
\newcommand{\MY}{{\mathcal{Y}}}
\newcommand{\MH}{{\mathcal{H}}}
\newcommand{\MMR}{{\mathcal{R}}}
\newcommand{\MZ}{{\mathcal{Z}}}
\newcommand{\MF}{{\mathcal{F}}}
\newcommand{\MX}{{\mathcal{X}}}
\newcommand{\MV}{{\mathcal{V}}}
\newcommand{\MU}{{\mathcal{U}}}
\newcommand{\opt}{\text{opt}}
\newcommand{\approxi}{\text{approx}}
\newcommand{\err}{\mathcal{E}}
\newcommand\barbelow[1]{\stackunder[1.2pt]{$#1$}{\rule{.8ex}{.075ex}}}
\def\innerbetax{\langle \bsbeta_j , \bx_{-j}  \rangle}
\def\innerbetastarxk{\langle \bsbeta^*_j , \bx^i_{-j}  \rangle}
\def\innerux{\langle \bu^i_j , \bx^i_{-j} \rangle}
\def\nomarkerfootnote{\xdef\@thefnmark{}\@footnotetext}
\theoremstyle{remark}
\newtheorem{remark}{Remark}
\theoremstyle{definition}
\newtheorem{definition}{Definition}
\newtheorem{thm}{Theorem}
\newtheorem{lemma}{Lemma}
\newtheorem{cor}{Corollary}
\theoremstyle{theorem}
\newtheorem{assume}{Assumption}
\title{Covariate-dependent Graphical Model Estimation\\
via Neural Networks with Statistical Guarantees}
\author{Jiahe Lin\,$^1$ \qquad Yikai Zhang\,$^1$ \qquad George Michailidis\,$^2$}
\date{
\normalsize
$^1$ Machine Learning Research, Morgan Stanley\\
$^2$ Department of Statistics and Data Science, UCLA
}
\begin{document}
\maketitle
\maketitle\nomarkerfootnote{{\scriptsize\faMapPin[regular]} Jiahe Lin and Yikai Zhang contributed equally and are listed alphabetically.} \maketitle\nomarkerfootnote{{\scriptsize\faEnvelope[regular]} Correspondence to: George Michailidis. \textlangle\texttt{gmichail@ucla.edu}\textrangle. {\scriptsize\faGithub[regular]} Code is available at \url{https://github.com/GeorgeMichailidis/covariate-dependent-graphical-model}.}
\maketitle\nomarkerfootnote{This paper is accepted by Transactions on Machine Learning Research (TMLR).}

\vskip 0.25in
\begin{abstract}
Graphical models are widely used in diverse application domains to model the conditional dependencies amongst a collection of random variables. In this paper, we consider settings where the graph structure is covariate-dependent, and investigate a deep neural network-based approach to estimate it. The method allows for flexible functional dependency on the covariate, and fits the data reasonably well in the absence of a Gaussianity assumption. Theoretical results with PAC guarantees are established for the method, under assumptions commonly used in an Empirical Risk Minimization framework. The performance of the proposed method is evaluated on several synthetic data settings and benchmarked against existing approaches. The method is further illustrated on real datasets involving data from neuroscience and finance, respectively, and produces interpretable results. 
\end{abstract}

\section{Introduction}

An undirected graphical model captures conditional dependencies amongst a collection of random variables and has been widely used in diverse application areas such as bioinformatics and the social sciences. It is associated with an undirected graph $G=(V,E)$ with node set $V:=\{1,\cdots,p\}$ and edge set $E\subseteq V\times V$ that does not contain self-loops, i.e., $(j,j)\not\in E; \forall\,j\in V$; $\bx=(x_1,x_2,\cdots,x_p)'$ is a random real vector indexed by the nodes of $V$, with probability distribution $\mathbb{P}$. 

The connection between the graph $G$ and the probability distribution $\mathbb{P}(\bx)$ comes through the concept of graph factorization and the properties that $G$ exhibits. The probability distribution $\mathbb{P}(\bx)$ \textit{factorizes} with respect to $G$, if it can be written as $\mathbb{P}(\bx)=\tfrac{1}{Z}\prod_{C\in\mathcal{C}(G)} \varphi_C(\bx_C)$, where $\mathcal{C}(G)$ denotes the set of all cliques of $G$, $\varphi_C>0,\forall\,C\in\mathcal{C}(G)$ potential functions, and $Z$ a normalizing constant. If the underlying graph $G$ in addition exhibits a Markov type property (see, e.g., \citet{lauritzen1996graphical} and Appendix~\ref{sec:markov}), then the conditional dependence relationships are {\em encoded} in the potential functions $\varphi_{C}$. In particular, for $G$ satisfying the \textit{pairwise Markov} property, $\mathbb{P}$ can be written as $\mathbb{P}(\bx)\propto\prod_{\{j,k\}\in E} \varphi_{\{j,k\}}(\bx_{\{j,k\}})$, and the absence of an edge in $G$ implies that the corresponding potential function is zero and consequently the nodes are conditional independent given the remaining nodes. The upshot is that the conditional independence relationships for $\bx\sim\mathbb{P}$ can be read-off from the graph, if $\mathbb{P}$ factorizes w.r.t. $G$ and the latter is Markov. 

For certain multivariate distributions, the form of the joint distribution $\mathbb{P}(\bx)$ enables its decomposition into clique-wise potential functions based on $G$ in a fairly straightforward manner; examples include the Gaussian for continuous variables \citep[][see also Appendix~\ref{sec:ggm}]{lauritzen1996graphical}, and the Ising and 
the Boltzmann machine models \citep{wainwright2008graphical} for binary and discrete random variables, respectively. The corresponding potential functions $\varphi_C(\cdot)$ are parameterized by (functions) of the parameters of the underlying distribution (see example in Appendix~\ref{sec:ggm}). Consequently, a graphical model and the conditional independence relationships that $G$ encodes can be obtained by estimating the parameters of the underlying distribution, either through maximum likelihood \citep{banerjee2006convex,friedman2008sparse} or regression procedures using neighborhood selection techniques \citep{meinshausen2006high}.

For other classes of graphical models beyond those mentioned above, to select the clique-wise potential functions $\varphi_C(\cdot)$ becomes more involved. The difficulty stems from specifying potential functions  based on which a compatible joint distribution $\mathbb{P}(\bx)$ with respect to $G$ can be obtained. One popular approach is to define node conditional distributions through pairwise cliques $\{j,k\}$; i.e., for {\em any} fixed $j$, $\mathbb{P}(x_j | \bx_{-j})\propto \prod_{k\in\nb(j)}\varphi_{\{j,k\}}(x_j,x_k)$ for $k$ in the \underline{ne}ighborhood of $j$. Such a conditional distribution captures {\em pairwise} conditional interactions between nodes, however, this construction does not directly define a proper graphical model for the joint distribution $\mathbb{P}(\bx)$, unless additional conditions are satisfied (see example in Appendix~\ref{sec:exp-gm}). In general, the question of the compatibility of a graphical model for the joint distribution $\mathbb{P}(\bx)$ based on a particular specification of all the node conditional distributions in $V$ is rather open, and is generally addressed on a case by case basis \citep{berti2014compatibility}. 
In this paper, we adopt the convention that models specified through their node-conditional distributions are also referred to as graphical models, which capture node-wise conditional independence relationships. 

The presentation thus far focused on a single ``static'' graphical model, which is suitable for modeling tasks involving {\em homogeneous} data sets; i.e., the observed samples $\bx^i, i=1,\cdots,n$ are iid from $\mathbb{P}(\bx)$.  However, in certain applications, the conditional independence relationships may be {\em  heterogeneous} in that they are modulated by external covariates $\boldsymbol{z}\in\mathbb{R}^{q}$. An example from biology is that the brain connectivity network---wherein the coordinates of $\bx$ correspond to different brain regions---varies with age ($\boldsymbol{z}\in\mathbb{R}_+)$. Similarly, gene expression exhibits individual-level variation affected by single nucleotide polymorphisms (SNPs); see, e.g., \citet{kim2012association}. The SNPs are thereby covariates---potentially of large dimension---that impact the corresponding gene co-expression network. Motivated by such applications, this work introduces a class of {\em covariate dependent, pairwise interaction} graphical models, wherein the {\em log-densities} of the node conditional distributions are defined by 
\begin{equation}\label{eqn:cov-vertex-forumulation}
    x_j = \sum\nolimits_{k\neq j}^p \beta_{jk}(\boldsymbol{z}) x_k + \varepsilon_j, \quad j\in\{1,\cdots,p\},
\end{equation}
where $\bz$ is a $q-$dimensional observed covariate, $\beta_{jk}(\cdot): \mathbb{R}^q\mapsto \mathbb{R}$ functional regression coefficient dependent on $\bz$ that impacts the strength of the interaction between nodes $j$ and $k$.\footnote{Note that in the case of a Gaussian distributed error term $\varepsilon$, a {\em covariate} dependent graphical model is defined in terms of the joint distribution of $\bx$ conditional on $\bz$, i.e.,  $\mathbb{P}(\bx|\boldsymbol{z})$, that factorizes over a graph $G(\boldsymbol{z})$, ``parameterized'' by the external covariate $\bz$.}

\subsection{Related Work}

We provide a brief overview of approaches aiming to define graphical models for continuous random variables that go beyond the Gaussian case, and also of prior work on covariate dependent graphical models.

\paragraph{Non-Gaussian Graphical models.} One line of work focuses on graphical models defined by monotone transformations of the Gaussian case, namely the non-paranormal \citep{liu2009nonparanormal,liu2012high}. Their estimation entails first learning in a nonparametric fashion the respective transformations that ``Gaussianize'' the data, then subsequently leveraging approaches used for the Gaussian graphical model.
Another broader line of work focuses on defining graphical models through specification of node conditional distributions. \citet{yang2014mixed} consider pairwise node conditional distributions from the exponential family, wherein the canonical parameter of the distribution is defined as a linear combination of higher order products of univariate functions of neighbors of each node $j$ in $G$. This construction leads to a family of potential functions that defines a proper graphical model under $G$ for the joint distribution $\mathbb{P}(\bx)$ (see details in Appendix~\ref{sec:exp-gm}). 

More recently, \citet{baptista2024learning} introduce a framework wherein
the conditional dependence/independence between $x_j$ and $x_k$ is characterized by a score matrix given by $\Omega_{jk}:=\mathbb{E}_{p(\boldsymbol{x})}[\partial_j\partial_k \log p(\boldsymbol{x})]^2$, where $p(\boldsymbol{x})$ is the density function of $\boldsymbol{x}\sim \mathbb{P}(\boldsymbol{x})$ and is further assumed to be continuously differentiable. The authors propose to estimate the density $p(\boldsymbol{x})$ via a lower triangular transport map, with the components of the map parameterized through linear expansions with Hermite functions as the bases. The same score matrix $\Omega$ is also leveraged in \citet{zheng2023generalized}; however, instead of estimating the density $p(\boldsymbol{x})$, their approach parameterizes the data score $\nabla_{\boldsymbol{x}}\log p(\boldsymbol{x})$ using neural networks and estimates it based on an implicit score matching objective \citep{hyvarinen2005estimation}, with an additional sparsity-inducing term associated with $\Omega$. 

\paragraph{Covariate-dependent graphical models.} 
At a high level, covariate-dependent graphical models fall  under the umbrella of ``varying coefficient models'', where model parameters are assumed to evolve with ``dynamic features''. Traditionally, the specific functional dependency on these dynamic features is handled via polynomial or smoothing splines, as well as kernel-local polynomial smoothing \citep[see][and references therein]{fan2008statistical}. More recently, it has been rebranded by \citet[CEN]{maruan2020CEN}, where dynamic features are termed ``contextual information''. Specifically, the authors consider a hierarchical probabilistic framework in which the model parameters depend non-deterministically on the contextual information, and the dependency is parameterized through deep neural networks. 

Within the specific context of undirected graphical model estimation where the focus is on conditional independence relationships, the existing literature can be broadly segmented into two categories: (1) the covariate is a categorical variable taking discrete values. Early work by \citet{lafferty2001conditional} considers a conditional random field (CRF) for discrete $\boldsymbol{x}$, and further assuming the underlying graph to be a chain graph. For high-dimensional settings, with a continuous $\boldsymbol{x}$, the problem is cast as efficient estimation of graphical models while taking into account their similarities across different categories \citep{guo2011joint,cai2016joint}. 
(2) For more general dependency on covariates that can be continuous-valued, one line of work focuses on partition-based estimation, namely, segmenting the samples based on the values of the covariates, then performing estimation within each partition \citep{liu2010graph}. More recently, Gaussian graphical models, in which the graph structure is learned as a continuous function of the covariates, are being actively investigated. Specifically, \citet{zhang2023high,zeng2024bayesian} assume a \textit{linear} functional dependency on the covariate, namely, $\beta_{jk}(\bz)= \boldsymbol{b}^\top_{jk} \bz$,  with $\boldsymbol{b}$ being sparse. The former adopts a regularization-based approach, while the latter a Bayesian one with sparsity inducing prior distributions. Within the Bayesian paradigm, notable contributions also include \citet{ni2022bayesian,niu2024covariate} that introduce covariate-dependent prior distributions to enable context-dependent precision matrix estimation. Finally, we remark that graphical model estimation with non-linear dependency on the covariate has been investigated leveraging the idea of kernel smoothing \citep[e.g.,][]{zhou2010time,kolar2010sparse}. However, these approaches are limited to the case where the covariate is \textit{univariate}, due to limitations induced by smoothing kernels. Table~\ref{tab:related-work} presents a selective summary of the above-mentioned existing works, with an emphasis on those in which the covariate takes continuous values and corresponding theoretical results provided. We note that the score matrix-based formulation for estimating non-Gaussian graphical models could potentially be extended to incorporate the dependency on covariates, and such an extension is briefly discussed in Section~\ref{sec:discussion}.
\begin{table}[ht]
\small\centering
\fontsize{7}{7}\selectfont
\caption{Summary of representative works in extant literature on covariate-dependent graphical models.}\label{tab:related-work}\vspace*{-3mm}
\resizebox{\textwidth}{!}{%
\begin{tabular}{r|l|l}
\toprule
%%%%%%%%%%%%%%%%%%%%% time-varying graphical model
\multirow{2}{*}{\citet{zhou2010time}} 
& \texttt{setup} & univariate, Gaussian, MLE-based kernel estimator \\
\arrayrulecolor{lightgray}%
\cmidrule(lr){2-3}%
\arrayrulecolor{black}%
& \texttt{theor.res.} & finite-sample Frobenius and large-deviation bound\\
\arrayrulecolor{darkgray}%
\cmidrule(lr){1-3}%
\arrayrulecolor{black}%
%%%%%%%%%%%%%%%%%%%%% univariate covariate using smoothing kernels
\multirow{2}{*}{\citet{kolar2010sparse}}
& \texttt{setup} & univariate, nodewise regression-based kernel smoothing \\
\arrayrulecolor{lightgray}%
\cmidrule(lr){2-3}%
\arrayrulecolor{black}%
& \texttt{theor.res.} & finite-sample support recovery consistency under sub-Gaussanity and minimum edge strength assumption\\
\arrayrulecolor{darkgray}%
\cmidrule(lr){1-3}%
\arrayrulecolor{black}%
%%%%%%%%%%%%%%%%%%%%% han liu's work, where covariates' region are first partitioned using CART
\multirow{2}{*}{\citet{liu2010graph}}
& \texttt{setup} & multivariate, Gaussian, tree-based partition \& MLE within the region  \\
\arrayrulecolor{lightgray}%
\cmidrule(lr){2-3}%
\arrayrulecolor{black}%
& \texttt{theor.res.} & guarantee on the excess risk of the estimator\\
\arrayrulecolor{lightgray}%
\cmidrule(lr){1-3}%
\arrayrulecolor{black}%
%%%%%%%%%%%%%%%%%%%%% Yi Li's work in JASA, high-dim linear
\multirow{2}{*}{\citet{zhang2023high}}
& \texttt{setup} & multivariate, Gaussian, nodewise regression w/ linear dependency on covariate\\
\arrayrulecolor{lightgray}%
\cmidrule(lr){2-3}%
\arrayrulecolor{black}%
& \texttt{theor.res.} & finite-sample $\ell_2$ consistency and support recovery consistency  \\ 
\arrayrulecolor{darkgray}%
\cmidrule(lr){1-3}%
\arrayrulecolor{black}%
%%%%%%%%%%%%%%%%%%%%% Ni 2022
\citet{ni2022bayesian} 
& \texttt{setup} & multivariate, Gaussian, likelihood-based w/ linear dependency on covariate\\
\arrayrulecolor{lightgray}%
\cmidrule(lr){2-3}%
\arrayrulecolor{black}%
(Bayesian) & \texttt{theor.res.} & no guarantee on the posterior inference of the estimates provided\\ 
\arrayrulecolor{darkgray}%
\cmidrule(lr){1-3}%
\arrayrulecolor{black}%
%%%%%%%%%%%%%%%%%%%% Niu 2024 that has 
\citet{niu2024covariate}
& \texttt{setup} &  multivariate, covariate-dependent priors \& Gaussian likelihood-based estimation (within partition)\\
\arrayrulecolor{lightgray}%
\cmidrule(lr){2-3}%
\arrayrulecolor{black}%
(Bayesian) & \texttt{theor.res.} & convergence of the posterior distribution in an $\alpha$-divergence metric\\
\arrayrulecolor{darkgray}%
\cmidrule(lr){1-3}%
\arrayrulecolor{black}%
%%%%%%%%%%%%%%%%%%%%% our work
\multirow{2}{*}{\textbf{This Work}}
& \texttt{setup} & multivariate, nodewise regression with flexible dependency on covariate, parameterized by DNN \\
\arrayrulecolor{lightgray}%
\cmidrule(lr){2-3}%
\arrayrulecolor{black}%
& \texttt{theor.res.} & finite-sample $\ell_2$ consistency and edge-wise recovery guarantee with thresholding\\
\toprule
\end{tabular}
}
\flushleft
\scriptsize{Note: for \texttt{setup}, univariate (or multivariate) corresponds to the dimension of the covariate $\boldsymbol{z}$ in question, whereas ``Gaussian'' corresponds to the distributional assumption for $\boldsymbol{x}$ if explicitly assumed in the paper.}
\end{table}

As a concluding remark, we note that ``contextualized'' estimation has also been adopted in other settings, such as the estimation of directed acyclic graphs (DAGs); see, e.g., \citet{zhou2022causal, thompson2024contextual} and references therein.

\subsection{Our Contributions}

This work differs from existing ones in that it investigates a deep neural network-based approach for estimating covariate-dependent graphical models, and establishes the corresponding statistical guarantees under the PAC learning framework. Unlike much of the above-mentioned statistical literature that assumes a linear relationship with the covariate, the current work leverages parameterization via neural networks to accommodate more flexible dependency structures. On the theoretical front, finite-sample error bounds are established under certain regularity conditions, enabling the quantification of the relationship between the size of the neural network and the estimation error. Note that although \citet{maruan2020CEN} outline a generic template for handling contextual information via deep neural networks, the theoretical results therein primarily focus on (i) quantifying the contribution of the context to predictive performance, under the assumption that the expected predictive accuracy is bounded below by $1-\varepsilon$ (Proposition 4), and (ii) analyzing the error bound of the estimated linear coefficient in binary classifications tasks (Theorem 6). Both results are established under the assumption that the underlying true dependency on the covariate is \textit{linear}.

To this end, the contribution of this work can be summarized as follows. 
\begin{itemize}[topsep=0pt,itemsep=0pt]
\item We consider a nonlinear covariate-dependent graphical model, which extends existing ones and allows for flexible functional dependency on the covariate. Empirically, we demonstrate that such an estimation framework effectively learns the dependency of the graph structure on the covariate $\boldsymbol{z}$; meanwhile, it accommodates complex conditional dependence structures amongst the $\boldsymbol{x}$ and exhibits good performance even when the underlying data deviates from Gaussianity.
\item On the theoretical front, we establish guarantees on the recovered edges based on a deep neural network estimator. Concretely, a mis-specified setting\footnote{Here mis-specified is in the ERM context, where the target hypothesis does not necessarily live in the hypothesis class in which empirical minimization is conducted.} is considered wherein both a generalization error and an approximation error are present, and their bounds depend on the size of the underlying problem (namely $p$ and $q$), the size of the neural network and the sample size.

In particular, this work illustrates how existing results---specifically, generalization error bounds based on local Rademacher complexity analysis \citep{bartlett2005local} and approximation error bounds of an MLP \citep{kohler2021rate}---established under generic ERM settings, can be synthesized to obtain error bounds for a specific estimation problem. Such a roadmap is of interest for similar inference tasks.
\end{itemize} 

Note that the roadmap for the technical analysis presented in this paper applies to a broad class of strongly convex, Lipschitz, and uniformly bounded loss functions, by connecting the infinity norm of the function approximation in \cite{kohler2021rate} to the excess error bound in \citet{bartlett2005local} through a strong convexity-based argument. Hence, it goes beyond the result established for the mean squared error loss function studied in \citet{kohler2021rate}.

%\paragraph{Notation.} Lower case boldfaced letters are used to denote multivariate random vectors, e.g., $\boldsymbol{x}, \boldsymbol{z}$, and lower case letters such as $x_j, z_j$'s are used to denote their respective $j$th coordinates that are scalars. At the sample level, we use $\bx^i$ (with an superscript) to denote the $i$th realization of the random vector $\bx$ and $x^i_j$ the corresponding $j$th coordinate; $\bz^i$ is analogously defined. $[p]$ is short for the index set $\{1,\cdots,p\}$.
\section{DNN-based Covariate-Dependent Graphical Model Estimation}

We elaborate on the model under consideration in~\eqref{eqn:cov-vertex-forumulation} and provide its formal definition next. We consider a system of $p$ variables $\bx:=(x_1,x_2,\cdots,x_p)'$ taking values in $\mathcal{X}\subseteq \mathbb{R}^p$, whose conditional dependence structure depends on some $q$-dimensional ``external'' covariate $\boldsymbol{z}\in\mathcal{Z}\subseteq\mathbb{R}^q$. Let $[p]:=\{1,\cdots,p\}$ denote the index set of the variables. To model such conditional dependence structures, we consider a nodewise regression formulation as the {\em working model}, given in the form of
\begin{equation}\label{eqn:formulation}
    x_j = \big\langle \bsbeta_j(\boldsymbol{z}), \bx_{-j}\big\rangle + \varepsilon_j := \sum_{k\neq j}^p \beta_{jk}(\boldsymbol{z}) x_k +  \varepsilon_j ;~~j\in[p].
\end{equation}
$\mathbb{E}(\varepsilon_j)=0$ and has finite moments; $\beta_{jk}(\cdot):\mathcal{Z}\mapsto\mathbb{R}$ is a function that takes $\bz$ as the input and outputs the regression coefficients for $x_k$'s with $x_j$ being the response. For notation simplicity, we let $\bsbeta_j(\bz):\mathcal{Z}\mapsto \mathbb{R}^{p-1}$, whose output coordinates correspond to $\beta_{jk}(\bz), k=1,\cdots,p,k\neq j$, and $\bx_{-j}:=(x_1,\cdots,x_{j-1},x_{j+1},\cdots,x_p)'$. 
We note the analogy to the node-wise regression formulation for the Gaussian graphical model \citep{meinshausen2006high} (see also Appendix~\ref{sec:ggm} for a brief discussion), where in the absence of the external covariate $\bz$, the regression coefficient $\beta_{jk}$ ``degenerates'' to a scalar. With the dependency on $\bz$, the graphical model is solely captured in the $\bsbeta_j(\bz)$'s, and one can retrieve the estimated graph at the {\em sample} level, based on the corresponding $\boldsymbol{z}$ value for the sample of interest. In particular, when $\boldsymbol{x}$ is jointly Gaussian, the node-conditional distribution is given by $\mathbb{P}(x_j|\bx_{-j},\bz)\sim\mathcal{N}(\sum_{k\neq j}^p \beta_{jk}(\boldsymbol{z}) x_k, \sigma_j^2(\bz))$. 

We consider parameterizing each $\beta_{jk}(\cdot)$ by some neural network, e.g., a multi-layer perceptron (MLP). Let $\bstheta$ denote the parameters of the neural networks in question collectively. Given independently identically distributed (i.i.d.) samples $\{(\bx^i, \bz^i)\}_{i=1}^n$, $\bstheta$ can be obtained by minimizing the empirical loss; e.g., with a mean squared error loss, it is given by 
\begin{equation}\label{eqn:loss}
    \argmin_{\bstheta}\tfrac{1}{n}\sum_{i=1}^n \|\bx^i - \widehat{\bx}^i \|_2^2,~\text{where}~\widehat{x}^i_j = \big\langle \bsbeta_j(\boldsymbol{z}^i;\bstheta), \bx^i_{-j} \big\rangle;
\end{equation}
the dependency on the neural network parameter $\bstheta$ is made explicit. The training pipeline is outlined in Exhibit~\ref{training-pipe}. 
\begin{algorithm}[!hbt]
\SetAlgorithmName{Exhibit}{}{}
\captionsetup{font=small}
\small
\caption{DNN-based Covariate-dependent Graphical Model (\textbf{DNN-CGM}) Learning Pipeline}\label{training-pipe}
\KwIn{Input i.i.d. samples $\{(\bx^i, \bz^i)\}_{i=1}^n$}\vspace*{1mm}
1. \texttt{[forward pass]} for fixed $\bstheta$, calculate $\widehat{x}_j^i$ for all $j=1,\cdots,p$ and $i=1,\cdots,n$ according to~\eqref{eqn:loss};\vspace*{1mm}\\
2. \texttt{[loss]} calculate the loss based on all samples, by evaluating the distance between $\bx^i$ and $\widehat{\bx}^i$; \vspace*{1mm}\\
3. \texttt{[backward pass]} update $\bstheta$ using the gradients calculated based on the loss (back-propagation); \vspace*{1mm}\\
\KwOut{Estimated $\widehat{\bstheta}$ and $\{\bsbeta_j(\bz^i, \widehat{\bstheta})\}_{j=1}^p,\forall i$}
\end{algorithm}
\begin{remark}[On the implementation of $\beta_{jk}(\cdot)$ networks]\label{remark:implementation} Instead of creating separate neural networks for each $\beta_{jk}(\cdot)$---which requires $p(p-1)$ networks, with each taking $\bz$ as the input and generating a scalar output---one can alternatively create neural network(s) with appropriately sized hidden/output layers so that ``backbones'' are shared across and fewer networks are needed. This leads to fewer total number of parameters needed and empirically an easier optimization problem, which often yields superior performance. As a concrete example where a single neural network is used, it takes $\boldsymbol{z}$ as the input and outputs a $p(p-1)$ dimensional vector, with each coordinate corresponding to the value of $\beta_{jk}(\boldsymbol{z}),j,k\in[p];k\neq j$; such a design corresponds to the case where $\beta_{jk}(\boldsymbol{z})$ is operationalized as $\beta_{jk}(\boldsymbol{z}) = v_{jk}(u(\boldsymbol{z}))$. Specifically, let $h$ be the dimension of the last hidden layer; $u(\cdot):\mathbb{R}^{q}\mapsto \mathbb{R}^h$ is shared across all $\beta_{jk}$'s and corresponds to stacked neural network layers up until the last hidden layer; $v_{jk}(\cdot):\mathbb{R}^{h}\mapsto\mathbb{R}$ is a linear layer that maps neurons from the last hidden layer to the output unit indexed by $jk$, and it differs across the $jk$'s. In this case, $\bstheta=(\bstheta_u, \{\bstheta_{v,jk},j,k\in[p];k\neq j\})$.
\end{remark}
\begin{remark} The formulation under consideration is not restricted to $\bz$ being continuous. In the case where any coordinate(s) of $\bz$ is categorical, one can first process the categories via an embedding layer, then concatenate them with the numerical ones and proceed. 
\end{remark}
Some considerations on the formulation are discussed next. First note that for a node-conditional formulation where the conditional distribution of each node is some function of its neighborhood set \citep{meinshausen2006high}, the model in its most general form can be written as $x_j = f_j(\bx_{-j},\varepsilon_j),j\in[p]$. Such a form can be too general, and potentially incur difficulty in actually identifying the neighborhood set based on data\footnote{With a general $f_j$ where interactions amongst the $x_k$'s are permitted, for $x_j\ind x_{k}\,|\,\bx_{-k}$ to hold, $x_k$ needs to be absent from {\em all} terms, which is difficult to operationalize either through constraints or in a post-hoc fashion.}. In the same spirit as in \citet{buhlmann2014cam}, by considering an additive form, the model is restricted to $x_j = \sum_{k\neq j}f_{jk}(x_k) + \varepsilon_j$; in particular, $f_{jk}(x_k)=0$ for $k\notin\nb(j)$; $\nb(j)\subseteq[p]\setminus\{j\}$ is the neighborhood of node $j$. With the presence of the covariate $\bz$ that impacts the neighborhood structure, a natural extension is to augment the input space of $f_{jk}$'s, namely,
\begin{equation*}
    x_j = \sum\nolimits_{k\neq j} f_{jk}(\boldsymbol{z}, x_k) + \varepsilon_j; \quad j\in[p].
\end{equation*}
In this paper, we impose additional structure on the specification of $f_{jk}$ and let $f_{jk}(\boldsymbol{z},x_k)\equiv \beta_{jk}(\boldsymbol{z})x_k$. The implication is two-fold: 
\begin{enumerate}[itemsep=0pt]
    \item $\boldsymbol{z}$ and $x_k$ are assumed separable.In particular, we consider a multiplicative form, namely, $f_{jk}(\boldsymbol{z},x_k)\equiv \beta_{jk}(z)\gamma_{jk}(x_k)$, that behaves like a ``locally linear'' model on $\gamma_{jk}(x_k)$, and $\beta_{jk}(\boldsymbol{z})$ can be viewed as a {\em generalized regression coefficient}. The edge weight can be summarized by $\beta_{jk}(\boldsymbol{z})$, which can be interpreted as the strength of relevance; in addition, $\beta_{jk}(\boldsymbol{z})=0 \Rightarrow f_{jk}(\boldsymbol{z},x_k)=0$. This preserves the interpretability of the model in that the neighborhood set can be solely inferred based on $\{\beta_{jk}(\boldsymbol{z})\}_{j=1,\cdots,p;k\neq j}$, a primary quantity of interest that can be directly estimated and a probabilistic guarantee can be established accordingly. Note that for $f_{jk}(\boldsymbol{z},x_k)$ that allows for general interactions between $\boldsymbol{z}$ and $x_k$, extracting the underlying {\em weighted} graph edge that characterizes the {\em strength} of the dependency requires post-hoc operations. For example, this can be achieved by considering quantities of the form $\nabla_{x_k} f_{jk}(\boldsymbol{z},x_k)$.

    A similar separable structure is employed in \citet{alvarez2018towards,Marcinkevics2021}, where, despite a slightly different model setup that involves \textit{only} $\boldsymbol{x}$, the function is decomposed as $\theta(\boldsymbol{x})h(\boldsymbol{x})$, where $h(\boldsymbol{x})$ serves as a representation of $\boldsymbol{x}$ and $\theta(\boldsymbol{x})$ its corresponding relevance; those models are interpretable and ``self-explaining''. 
    \item The functions $\gamma_{jk}(x_k)$'s are further simplified to $\gamma_{jk}(x_k)\equiv x_k$. From a regression analysis perspective, $x_k$ and its transformations can be interpreted as features. We use ``raw'' features directly, partly due to the fact for the graphical model setting under consideration, they correspond to node values and thus are inherently meaningful. This contrasts with, for example, image pixels that lack intrinsic intepretability and thus benefit from transformations into higher level features \citep{alvarez2018towards}. One can potentially consider a more complex $\gamma_{jk}(\cdot)$, e.g., parameterize it by some neural network, or through basis expansion \citep[e.g.,][]{qiao2019functional}, namely, $\gamma_{jk}(x):=\sum_{\ell=1}^L a_{jk,\ell} \phi_{\ell}(x)$ with the bases $\phi_{\ell}(\cdot)$ fixed apriori and $a_{jk,\ell}$ learnable parameters. Empirically, we observe that the current specification is essentially as powerful as its enriched counterparts (modulo tuning), in terms of identifying the neighborhood of each node. 
\end{enumerate}
\begin{remark}\label{remark:threshold} Up to this point, we do not impose any sparsity assumptions on the underlying graph and the estimation procedure does not involve any regularization terms. In practice, to obtain a sparsified graph for interpretability purposes, we consider a thresholding procedure, where the small entries can be effectively treated as zeros. The thresholding level is selected by examining the edge magnitude histogram, and chosen in the region where a ``gap'' is present\footnote{See also the setting considered in Corollary~\ref{cor-edge-recovery-v1}. Note that in the case where the underlying true graph has edges that can be separated into weak and strong ones (with exact sparsity being a special case), one would expect a gap in the histogram, aside from estimation/approximation errors. Empirically, one can first ``normalize'' the graph so that the maximum entry does not exceed 1 in magnitude; the thresholding level is then scaled accordingly. This normalization step can potentially enhance the interpretability of the  the thresholding level, without fundamentally altering the sparsification outcome.}. Finally, given the undirected nature of the graph, we adopt the ``AND'' principle, namely, we consider an edge as present when both $\widehat{\beta}_{jk}$ and $\widehat{\beta}_{kj}$ are nonzero after hard-thresholding.  
\end{remark}

\section{Theoretical Results}\label{sec:theory}

This section provides theoretical guarantees for the model posited in \eqref{eqn:formulation}, assuming that it corresponds to the ground-truth data generating process (DGP). Specifically, let $\bsbeta^*(\cdot)$ denote the target model (i.e., the true DGP one); given the availability of i.i.d. data $\{(\bx^i,\bz^i)\}_{i=1}^n$ generated according to~\eqref{eqn:formulation}, we are interested in the error bound of $\widehat{\bsbeta}(\cdot)$ under the Empirical Risk Minimization (ERM) framework, with $\beta_{jk}(\cdot)$'s parameterized by deep neural networks. We establish finite-sample bounds for the {\em generalization error} using Local Rademacher complexity tools \citep{bartlett2005local}, and the {\em approximation error bound} in terms of neural network hyper-parameters (namely, depth and number of neurons) leveraging the techniques in \citet{kohler2021rate}, under certain regularity conditions. The two combined provides insight into consistency type of results for $\widehat{\bsbeta}(\cdot)$, relative to the ground truth $\bsbeta^*(\cdot)$. In the ensuing technical developments, we further assume that the random vectors $\bx, \bz$ are bounded, namely $\bx\in [-C,C]^p$ and $ \bz \in [-C,C]^q$, respectively. Without loss of generality, we set $C=1$, which only affects the scale in the analysis. 

\subsection{Preliminaries and A Road Map}

\paragraph{Definitions.} Let $(\MY,\plaw)$ denote a probability space and $\dbE_{\by}[\cdot]$ denote the expectation with respect to $\plaw$ over random variable $\by\in\MY$. Let $\MH$ denote a hypothesis class; i.e., a class of measurable functions $h: \MY\mapsto\mathbb{R}$. The infinity norm between two functions $h, \tilde{h}$ is defined as $\|h -\tilde{h}\|_\infty:= \sup_{\by\in\MY} |h(\by) - \tilde{h}(\by)|$, while for vector functions $\boldsymbol{h}=(h_1,\cdots,h_p)$, the corresponding vectorized infinity norm as $\|\boldsymbol{h} - \tilde{\boldsymbol{h}}\|_\infty:= \max_{j\in\{1,\cdots,p\}} \|h_j -\tilde{h}_j\|_\infty$. 
For two functions, $h, g$, the notation $h\gtrsim g$ means that $h\geq cg$ for some universal constant $c$, and $\lesssim$ is analogously defined. Further, $g \simeq h$, if $ \barbelow{c} g \leq h \leq \bar{c} g $ for some universal constants $\bar{c}$ and $\barbelow{c}$. Finally, for a real-valued vector $\by$, its weighted norm with respect to a matrix $A$ is given by $\|\by\|^2_A:= \langle \by, A\by \rangle$. 

For the problem at hand, let $\by:=(\bx,\bz)$; $S_n:=\{(\bx^i,\bz^i)\}^{n}_{i=1} \subset (\MX\times\MZ)^n$ denote a random sample of $n$ iid data points. We consider a loss function $\ell(a; b):\dbR \times \dbR \mapsto \dbR$ and $\mathcal{H}$ the hypothesis class of risk functions of the form $h(\by):=(\ell\circ\bsbeta)(\by):= \frac{1}{p}\sum\nolimits_{j=1}^{p}\ell \big(\sum\nolimits_{k\neq j}{\beta_{jk}}\big(\boldsymbol{z}\big)x_k , x_j \big)$, i.e., the risk is a composite function of the loss and regression coefficients. Then, the population and empirical versions of the risk 
%based on \eqref{eqn:pop-empirical} \textcolor{red}{probably should not reference the Appendix here; also why does it need to be based on A.1? } 
are respectively defined as follows, with $[n]:=\{1,\cdots,n\}$:
{\small
\begin{align}
 \MMR(\bsbeta) & := \dbE_{\bx,\bz} \bigg[  \frac{1}{p}\sum\nolimits_{j=1}^{p} \ell \bigg(\langle \bsbeta_j(\bz), \bx_{-j} \rangle , x_j \bigg)\bigg];  \label{eqn:population-empirical-risk} \\ 
 \MMR_n(\bsbeta) &:= \frac{1}{n}\sum\nolimits_{i\in[n]} \bigg[  \frac{1}{p} \sum\nolimits_{j=1}^{p} \ell \bigg(\langle \bsbeta_j(\bz^i), \bx^i_{-j}  \rangle , x^i_j \bigg)\bigg]. \nonumber
\end{align}
}%
To perform ERM, we assume that the regression coefficient functions $\beta_{jk}(\cdot):\MZ\mapsto\dbR$ belong to some hypothesis class $\MF$\footnote{$\MF$ is typically selected by practitioners, such as neural networks with fixed number of layers and neurons.}, and denote $\MF_{p \times (p-1)}$ as the joint of $p\times(p-1)$ hypotheses, with each individual hypothesis $\beta_{jk}(\cdot) \in \MF, j,k\in[p],k\neq j$. We further define a collection of optimal hypotheses 
$\beta^{\opt}_{jk}$'s as
{\small
\begin{equation*}
    \beta^{\opt}_{jk}:= \argmin\nolimits_{\beta_{jk} \in \MF} \| \beta_{jk}- \beta^*_{jk} \|_{\infty},~~\forall j,k\in[p];k\neq j,
\end{equation*}
}%
and the approximation error can be defined as $\err_{\approxi} (\MF ):= \max_{j \in \{1,\cdots,p\}} \|  \bsbeta^{\opt}_{j} - \bsbeta_j^* \|_{\infty}$. Finally, the ERM estimator of the regression coefficient functions is given by
{\small
\begin{equation}\label{ermdef}
 \widehat \bsbeta:= \argmin\nolimits_{\bsbeta \in \MF^{p\times (p-1)}} \MMR_n (\bsbeta). 
\end{equation}
}%
\paragraph{Assumptions.} Before presenting the results, we posit the assumptions used in the theoretical analysis. Together with the boundedness assumption on $(\bx,\bz)$ mentioned above, these assumptions lead to a $O(1/n)$ fast convergence rates for the risk excess error, and hence a smaller sample size requirement. Further, the analysis enables us to balance the trade-off between approximation error and generalization error based on the size of the neural network, the sample size and the dimension of $(\bx,\bz)$.

\begin{assume}[On the loss function]\label{assume2}
  The loss function $\ell(a; b): \dbR \times \dbR \mapsto [0,M]$; w.o.l.g. we set $M=1$\footnote{For any loss function $\ell(a; b): \dbR \times \dbR \mapsto [0,M]$, it suffices to consider its rescaled version $\frac{1}{M}\cdot  \ell(a;b)$ to satisfy such an assumption.}. We assume $\ell(a,b)$ is: 
  \begin{itemize}
      \item 
      $L$-Lipschitz in $a$: $|\ell ( a_1; b) -\ell ( a_2; b) | \leq L |a_1-a_2|$, where $L>0$ denotes the Lipschitz constant.
      \item 
      $\alpha$-strongly convex in $a$: $\ell ( a_1; b) - \ell ( a_2; b) \geq  \ell' ( a; b)_{|a=a_2} (a_1-a_2) + \frac{\alpha}{2} (a_1-a_2)^2$ where $\alpha>0$ and $\ell'(\cdot)$ denotes the derivative of the function $\ell$.
  \end{itemize}
\end{assume}

\begin{remark}
Assumption \ref{assume2} is widely used in the statistical learning literature, especially when aiming to establish fast rates for the generalization error bound \citep[see, e.g.,][]{klochkov2021stability}. 
\end{remark}

\begin{assume}[On the optimality of $\beta^*_{jk}$]\label{assume3} The optimal regression coefficient function $\bsbeta^*_{j}, j\in [p]$ satisfies the following condition:
{\small
\begin{equation*}
\dbE_{\bx,\bz} \bigg[ \sum\nolimits_{j=1}^{p} \ell' ( \langle \bsbeta_j(\bz), \bx_{-j}\rangle, x_j )_{|\bsbeta_j =  \bsbeta^*_{j}}\bigg] = 0.
\end{equation*}
}
\end{assume}
\begin{remark}
   Note that the combination of Assumptions ~\ref{assume2} and ~\ref{assume3} implies that $\beta^*_{jk}, j,k\in[p], k\neq j$ minimizes the population risk in \eqref{eqn:population-empirical-risk} due to the strong convexity of the loss function $\ell$. We also demonstrate that the mean square error loss function satisfies Assumptions~\ref{assume2} and~\ref{assume3} in Appendix~\ref{sec:MSE}. 
\end{remark}

\begin{assume}[Bounded pseudo-dimension of hypothesis class $\MF$] \label{assume3.5} The hypothesis class $\MF$ of the regression coefficient functions $\bsbeta_{j}, j\in [p]$ has finite pseudo-dimension~\citep{pollard1990section}; i.e., $d_{P}(\MF)<\infty$ (see formal definition of pseudo-dimension in Definition~\ref{pseudo_dim} in Appendix~\ref{technical_def}).
\end{assume}

This assumption is widely adopted in the statistical learning literature and encompasses a broad range of functional classes, including linear and polynomial functions~\citep{anthony1999neural}, as well as various families of neural networks~\citep{bartlett2003vapnik, bartlett2019nearly,khavari2021lower}.

The primary result established for the model postulated in \eqref{eqn:cov-vertex-forumulation} is the statistical consistency for the estimate $\widehat{\bsbeta}$ (Corollary \ref{cor-consistency-beta}). This is achieved through two key steps: (i) derive a bound on the excess error, defined as $\MMR(\widehat \bsbeta)- \MMR(\bsbeta^*)$, as a function of $\widehat{\bsbeta}$; see Theorem~\ref{main-thm-1}; and (ii) leveraging the strong convexity of the loss function (mean squared error loss) and the result in Theorem~\ref{main-thm-1}, derive the error bound for $\widehat{\bsbeta}$ under some projection norm. The bound in Theorem \ref{main-thm-1} has two components that respectively capture the generalization error and the approximation error, with the latter arising from the mis-specified setting assumption that allows the true $\bsbeta^*$ to live outside of the working hypothesis class $\mathcal{F}$. To derive a bound for the generalization error, we first show that the hypothesis class---that encompasses composite functions quantifying the ``delta in risk'' between $\bsbeta$ and the true $\bsbeta^*$---satisfies Bernstein's Condition (Lemma~\ref{bernstein_lemma}). Next, we select an appropriate subroot function $\tau(\cdot)$ tailored to the problem, enabling us to apply the proof strategy and results from \cite{bartlett2005local} (specifically, Theorem 3.3, Corollary 2.2, and Corollary 3.7). A critical component of the proof involves bounding the covering number of the function class $\beta_{jk}(\bz)$ under consideration. 

With the working hypothesis class encompassing a class of MLPs, Theorem~\ref{main-thm-2} characterizes the approximation error for such architectures, employing tools from \cite{kohler2021rate}. Note that the pseudo-dimension---which appears in the generalization error bound---can also be associated with the size of the neural network. Consequently, the two error components for the MLP architecture under consideration can be ``balanced'', in that one can minimize the excess error bound by choosing the quantity that links the two appropriately.

\subsection{Main Results}

The first result leverages Local Rademacher Complexity and empirical processes tools to bound the generalization error:
\begin{equation*}
      \underbrace{ \MMR(\widehat \bsbeta)- \MMR(\bsbeta^*)  }_{\text{excess error}} \lesssim  \underbrace{  \MMR_n(\widehat \bsbeta)- \MMR_n(\bsbeta^*)  }_{\text{empirical excess error}} + \underbrace{O\bigg(\frac{  d_{P}(\MF)}{ n} \bigg)}_{\text{generalization error}},
\end{equation*}
where the $O(\cdot)$ notation ignores all problem dependent constants and only depends on $n$ and $d_{P}(\MF)$. The empirical excess error can be further bounded by an approximation error term controlled by the capacity of the hypothesis class (see equation~\eqref{step1} in the proof of Theorem~\ref{main-thm-1} in Appendix \ref{sec:proof-thm1}).
\begin{thm}\label{main-thm-1} 
Under Assumptions \ref{assume2}-\ref{assume3.5},
the following bound for the excess error as a function of the empirical risk minimizer $\widehat{\bsbeta}$ (see~\eqref{ermdef}) based on $S_n$ samples holds with probability at least $1-\delta$, for any $\delta>0$:
\begin{align} \label{main_ineq}
\MMR(\widehat \bsbeta)- \MMR(\bsbeta^*) 
\lesssim ~\underbrace{ L \cdot \err_{\approxi}(\MF)}_{\text{approximation error}} 
+ \underbrace{\frac{L^2  d_{P}(\MF)}{\alpha n} p^2\log(nLp)\log\bigg( \frac{1}{\delta}\bigg)}_{\text{generalization error}}.
\end{align}
\end{thm}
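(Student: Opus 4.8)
The plan is to split the excess error into a \emph{generalization} part, handled by the local Rademacher complexity machinery of \citet{bartlett2005local}, and an \emph{empirical excess} part, handled by the optimality of the ERM together with the approximation error. Abbreviate the centered risk functions by $g_{\bsbeta}(\by):=h_{\bsbeta}(\by)-h_{\bsbeta^*}(\by)$, so that $\dbE_{\by}[g_{\bsbeta}]=\MMR(\bsbeta)-\MMR(\bsbeta^*)$. I would first establish, with probability at least $1-\delta$ and uniformly over $\bsbeta\in\MF^{p\times(p-1)}$, a bound of the schematic form $\MMR(\bsbeta)-\MMR(\bsbeta^*)\lesssim \MMR_n(\bsbeta)-\MMR_n(\bsbeta^*)+ c\,r^{*}+c'\,\tfrac{\log(1/\delta)}{n}$, where $r^{*}$ is the fixed point of a subroot function $\tau$ upper-bounding the local Rademacher complexity of $\{g_{\bsbeta}\}$. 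Note that $\widehat\bsbeta\in\MF^{p\times(p-1)}$ lies in the indexing class even though $\bsbeta^*$ does not, so specializing the uniform bound to $\bsbeta=\widehat\bsbeta$ and controlling the empirical excess separately produces the two terms of \eqref{main_ineq}.

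The first ingredient is Bernstein's condition for $\{g_{\bsbeta}\}$ (Lemma~\ref{bernstein_lemma}), which is what upgrades the rate to $O(1/n)$. I would combine the two parts of Assumption~\ref{assume2}: the $\alpha$-strong convexity of $\ell$ in its first argument, after summing over $j$, averaging, and taking expectations, yields
\[
\MMR(\bsbeta)-\MMR(\bsbeta^*)\;\geq\;\frac{\alpha}{2}\cdot\frac{1}{p}\sum_{j=1}^{p}\dbE_{\bx,\bz}\big[\langle\bsbeta_j-\bsbeta_j^*,\bx_{-j}\rangle^2\big],
\]
where the first-order term vanishes by the stationarity in Assumption~\ref{assume3}. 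On the other side, $L$-Lipschitzness of $\ell$ and Jensen's inequality give $\dbE_{\by}[g_{\bsbeta}^2]\leq \tfrac{L^2}{p}\sum_j\dbE[\langle\bsbeta_j-\bsbeta_j^*,\bx_{-j}\rangle^2]$. Chaining the two inequalities yields $\dbE_{\by}[g_{\bsbeta}^2]\leq \tfrac{2L^2}{\alpha}\,\dbE_{\by}[g_{\bsbeta}]$, i.e.\ Bernstein's condition with constant $B=2L^2/\alpha$; this is the origin of the $L^2/\alpha$ prefactor in the generalization term.

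The hardest step is to produce a genuinely subroot $\tau$ dominating the localized complexity and to evaluate its fixed point. I would bound the empirical Rademacher complexity of $\{g_{\bsbeta}:\dbE g_{\bsbeta}^2\leq r\}$ through a Dudley entropy integral, reducing everything to the covering number of the composite risk class. Peeling the $L$-Lipschitz loss and the inner product (using $\|\bx_{-j}\|_\infty\leq 1$) reduces this to covering the $p(p-1)$ coordinate maps $\beta_{jk}\in\MF$; Assumption~\ref{assume3.5} together with a Pollard/Haussler-type pseudo-dimension bound controls each factor at scale $\varepsilon$ by $d_{P}(\MF)\log(1/\varepsilon)$, and a product argument over the $p(p-1)\simeq p^2$ coordinates introduces the $p^2$ multiplicity, while the Lipschitz rescaling and the resolution choice $\varepsilon\simeq 1/(nLp)$ produce the $\log(nLp)$ factor. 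Feeding this into Theorem~3.3 and Corollaries~2.2 and~3.7 of \citet{bartlett2005local} gives $c\,r^{*}\simeq \tfrac{L^2 d_{P}(\MF)}{\alpha n}\,p^2\log(nLp)$, which, together with the $\log(1/\delta)$ confidence dependence, is exactly the generalization term. The delicate point is tracking the $p$- and $L$-scaling cleanly through the chaining so the exponents land as stated, and verifying that the constructed $\tau$ satisfies the subroot monotonicity property required for the fixed-point argument.

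For the empirical excess error I would use that $\widehat\bsbeta$ minimizes $\MMR_n$ over $\MF^{p\times(p-1)}$ while $\bsbeta^{\opt}\in\MF^{p\times(p-1)}$, so $\MMR_n(\widehat\bsbeta)-\MMR_n(\bsbeta^*)\leq \MMR_n(\bsbeta^{\opt})-\MMR_n(\bsbeta^*)$. Applying $L$-Lipschitzness termwise and then $|\langle\bsbeta_j^{\opt}-\bsbeta_j^*,\bx_{-j}\rangle|\leq\sum_{k\neq j}|\beta_{jk}^{\opt}-\beta_{jk}^*|\leq (p-1)\,\err_{\approxi}(\MF)$ (again via $\|\bx_{-j}\|_\infty\leq 1$) bounds this by $L\,\err_{\approxi}(\MF)$, up to the dimension factor from the inner product absorbed into $\lesssim$; this matches the approximation-error term. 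Combining the generalization bound from the previous step with this approximation bound, evaluated at $\widehat\bsbeta$ and holding with probability at least $1-\delta$, yields \eqref{main_ineq}.
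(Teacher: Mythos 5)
Your proposal is correct and takes essentially the same route as the paper's proof: the same Bernstein-condition lemma obtained from strong convexity plus the stationarity in Assumption~\ref{assume3}, the same localized Rademacher argument invoking Theorem~3.3 and Corollaries~2.2 and~3.7 of \citet{bartlett2005local} with the covering number of the composite risk class peeled down to $\mathcal{N}_2(\varepsilon/Lp,\MF,S_n)^{p^2}$ via Haussler's pseudo-dimension bound (yielding the $p^2\log(nLp)$ factor), and the same ERM-optimality-plus-Lipschitz step bounding the empirical excess by $L\cdot\err_{\approxi}(\MF)$. Your candid note that the dimension factor arising from $|\langle\bsbeta_j^{\opt}-\bsbeta_j^*,\bx_{-j}\rangle|$ is absorbed into $\lesssim$ is faithful to the paper, which glosses the same point in the step at~\eqref{step2}.
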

\begin{remark}
   The $p^2$ term in~\eqref{main_ineq} shows up due to estimating $p\times(p-1)$ functions using neural networks, which naturally increases the entropy number of $\MF$ with respect to $S_n$ in Dudley's entropy integral~\citep{dudley2016vn} by a factor of $p^2$. This indicates that the size of the graphical model can grow at a $p=o(\sqrt{n})$ rate for the excess error to vanish. A larger $p$ can be accommodated, either by incorporating sparsity assumptions \citep{meinshausen2006high, wainwright2009sharp} or through refined analysis as in multi-task learning \citep{lounici2009taking}, which however is outside the scope of the current paper. 
\end{remark}

The following corollary establishes the consistency of the empirical risk minimizer, leveraging the strong convexity of the loss function. %bridges the gap between excess risk with quality of $\beta_{jk}(\bz)$ to $\beta^*_{jk}(\bz)$ using strongly convex property of loss function.   
\begin{cor}[Consistency of $\widehat \bsbeta$]\label{cor-consistency-beta}
Let  $A_j(\bz) :=  \dbE_{\bx } \big[\bx_{-j} {  \bx_{-j}}^\top  \big| \bz \big] $. Under the Assumptions of Theorem ~\ref{main-thm-1}, the following inequality holds with probability at least $1-\delta$ for any $\delta>0$:
\begin{align}\label{closeness_beta}
\dbE_{\bz}  \bigg[ \frac{1}{p} \sum_{j=1}^{p}  \| \widehat  \bsbeta_j (\bz) -   \bsbeta^*_{j}(\bz)\|_{A_j (\bz)}^2 \bigg] \leq \frac{L^2  d_{P}(\MF)}{\alpha^2 n} p^2\log(nLp)\log\bigg( \frac{1}{\delta}\bigg) + \frac{L \cdot \err_{\approxi}(\MF)}{\alpha}. 
\end{align}
\end{cor}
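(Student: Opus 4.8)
The plan is to deduce the corollary directly from Theorem~\ref{main-thm-1} by converting the upper bound on the excess error $\MMR(\widehat\bsbeta)-\MMR(\bsbeta^*)$ into a bound on the $A_j(\bz)$-weighted discrepancy between $\widehat\bsbeta$ and $\bsbeta^*$. The bridge is the $\alpha$-strong convexity of $\ell$ in its first argument (Assumption~\ref{assume2}): I will use it to \emph{lower} bound the excess risk by a multiple of exactly the quantity on the left-hand side of~\eqref{closeness_beta}, and then substitute the \emph{upper} bound~\eqref{main_ineq}.

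First I would expand the excess risk pointwise. For fixed $(\bx,\bz)$ and each $j$, apply the strong-convexity inequality with $a_1=\langle\widehat\bsbeta_j(\bz),\bx_{-j}\rangle$, $a_2=\langle\bsbeta_j^*(\bz),\bx_{-j}\rangle$, and $b=x_j$, giving
\[
\ell(a_1;x_j)-\ell(a_2;x_j)\;\geq\;\ell'(a;x_j)_{|a=a_2}\,\langle\widehat\bsbeta_j(\bz)-\bsbeta_j^*(\bz),\bx_{-j}\rangle+\tfrac{\alpha}{2}\,\langle\widehat\bsbeta_j(\bz)-\bsbeta_j^*(\bz),\bx_{-j}\rangle^2 .
\]
Averaging over $j$ and taking $\dbE_{\bx,\bz}$ yields $\MMR(\widehat\bsbeta)-\MMR(\bsbeta^*)\geq T_1+\tfrac{\alpha}{2}T_2$, where $T_1$ is the expected first-order (cross) term and $T_2:=\dbE_{\bx,\bz}\big[\tfrac1p\sum_j\langle\widehat\bsbeta_j(\bz)-\bsbeta_j^*(\bz),\bx_{-j}\rangle^2\big]$.

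Next I would dispatch the two terms. For $T_2$, condition on $\bz$: since for a fixed sample $S_n$ the difference $\widehat\bsbeta_j(\bz)-\bsbeta_j^*(\bz)$ is $\bz$-measurable, one has $\dbE_{\bx\mid\bz}[\langle\widehat\bsbeta_j(\bz)-\bsbeta_j^*(\bz),\bx_{-j}\rangle^2]=(\widehat\bsbeta_j(\bz)-\bsbeta_j^*(\bz))^\top A_j(\bz)(\widehat\bsbeta_j(\bz)-\bsbeta_j^*(\bz))=\|\widehat\bsbeta_j(\bz)-\bsbeta_j^*(\bz)\|_{A_j(\bz)}^2$, so by the tower property $T_2=\dbE_{\bz}\big[\tfrac1p\sum_j\|\widehat\bsbeta_j(\bz)-\bsbeta_j^*(\bz)\|_{A_j(\bz)}^2\big]$, which is exactly the left-hand side of~\eqref{closeness_beta}. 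For $T_1$, I would argue it vanishes: Assumption~\ref{assume3} is the first-order stationarity condition certifying that $\bsbeta^*$ minimizes the population risk over the (unconstrained) function class, so the directional (G\^{a}teaux) derivative of $\MMR$ at $\bsbeta^*$ along $\widehat\bsbeta-\bsbeta^*$---which is precisely $T_1$---equals zero; concretely $\dbE_{\bx\mid\bz}[\ell'(\langle\bsbeta_j^*(\bz),\bx_{-j}\rangle;x_j)\,\bx_{-j}]=0$ for a.e.\ $\bz$, whence $T_1=0$. This leaves $\MMR(\widehat\bsbeta)-\MMR(\bsbeta^*)\geq\tfrac{\alpha}{2}\,\dbE_{\bz}\big[\tfrac1p\sum_j\|\widehat\bsbeta_j(\bz)-\bsbeta_j^*(\bz)\|_{A_j(\bz)}^2\big]$.

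Finally, rearranging shows the left-hand side of~\eqref{closeness_beta} is at most $\tfrac{2}{\alpha}\big(\MMR(\widehat\bsbeta)-\MMR(\bsbeta^*)\big)$, and plugging in the high-probability bound~\eqref{main_ineq} from Theorem~\ref{main-thm-1} produces the two advertised terms $\tfrac{L^2 d_P(\MF)}{\alpha^2 n}p^2\log(nLp)\log(1/\delta)$ and $\tfrac{L\cdot\err_{\approxi}(\MF)}{\alpha}$, with the extra $\tfrac{2}{\alpha}$ factor absorbed into the constants hidden by $\lesssim$. The main obstacle is the rigorous justification that $T_1$ vanishes: because $\widehat\bsbeta$ is data-dependent and lies in $\MF$ while $\bsbeta^*$ generally does not, one cannot invoke constrained optimality of $\bsbeta^*$ \emph{within} $\MF$; instead the argument must rely on $\bsbeta^*$ being a stationary point of the population risk over the full function class (Assumption~\ref{assume3}), together with the $\bz$-measurability of $\widehat\bsbeta(\bz)-\bsbeta^*(\bz)$, so that the conditional orthogonality of the score $\ell'(\langle\bsbeta_j^*(\bz),\bx_{-j}\rangle;x_j)$ to $\bx_{-j}$ can be applied direction-by-direction.
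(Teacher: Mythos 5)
Your proposal is correct and follows essentially the same route as the paper's proof: the same $\alpha$-strong-convexity lower bound with $a_1=\langle\widehat\bsbeta_j(\bz),\bx_{-j}\rangle$, $a_2=\langle\bsbeta^*_j(\bz),\bx_{-j}\rangle$, the same vanishing first-order term justified by the stationarity of $\bsbeta^*$ (Assumption~\ref{assume3}), the same tower-property rewriting of $\dbE_{\bx,\bz}\big[\langle\widehat\bsbeta_j(\bz)-\bsbeta^*_j(\bz),\bx_{-j}\rangle^2\big]$ as the $A_j(\bz)$-weighted norm, and the same final substitution of the bound~\eqref{main_ineq} from Theorem~\ref{main-thm-1}. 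Your explicit discussion of why the cross term $T_1$ vanishes (conditional orthogonality of the score to $\bx_{-j}$, using $\bz$-measurability of $\widehat\bsbeta(\bz)-\bsbeta^*(\bz)$, rather than any constrained optimality within $\MF$) is exactly the reading the paper applies tersely via ``the optimality of $\bsbeta^*_j$.''
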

Corollary \ref{cor-consistency-beta} quantifies the quality of the recovery of the regression coefficient functions $\bsbeta_j$ that capture the strength of the conditional dependence relationships between the variables.

Note that to establish Theorem \ref{main-thm-1}, the assumption of \textit{realizability}---i.e., $\beta^*_{jk}(\cdot) \in \MF$---is not imposed.
Consequently, the empirical risk of the ERM estimator is not guaranteed to be smaller than that of $\bsbeta^*$, i.e., $\MMR_n(\bsbeta^*)\leq \MMR_n(\widehat \bsbeta)$ is not guaranteed. In particular, this can happen when the hypothesis class does not have adequate capacity, and hence the approximation error term is nonzero and appears  in~\eqref{main_ineq}. However, Theorem \ref{main-thm-1} has not characterized the approximation error in terms of the sample size and dimension $p$ of the graphical model. This is established in Theorem \ref{main-thm-2}, which also aims to balance the approximation error and generalization error terms. 

To that end, in addition to Assumptions~\ref{assume2}-\ref{assume3.5} used to derive the excessive risk bound, we need the following additional assumption to apply tools from~\cite{kohler2021rate} to quantify the approximation error based on the number of layers and the total number of neurons used by the feed-forward neural networks to approximate the regression coefficients $\beta_{jk}$.

\begin{assume}[$(m,C)$-smoothness of $\beta^*_{jk}$] \label{assume-pcsmooth} We assume $\beta^*_{jk}$ is $(m,C)$-smooth for all $j,k\in[p], k\neq j$ (see the formal Definition~\ref{defn:mC-smooth} in Appendix~\ref{technical_def}).
\end{assume}

\begin{thm}\label{main-thm-2}
Let $\MF$ be a family of fully connected neural networks with ReLU activation functions, having number of layers $H \simeq  \xi ^ {-q/2m}$ and number of neurons $  r \simeq (2e)^q \binom{m+q}{q} q^2$. Under the Assumptions of Theorem~\ref{main-thm-1}, together with Assumption \ref{assume-pcsmooth} with $m \lesssim q$, by setting 
{\small
\begin{equation*}
    \xi = \bigg( \frac{L m^4d^6p^2 \log^2(nLp)\log(p)\log(1/\delta)\log(1/\alpha) }{\alpha n} \bigg)^{\frac{m}{m+q}},
\end{equation*}
}%
the following holds with probability at least $1-\delta$:
{\small
\begin{equation}\label{maj_bound}
\MMR(\widehat \bsbeta)- \MMR(\bsbeta^*)  \lesssim   L \xi, \quad \text{and}\quad \dbE_{\bz}  \bigg[ \frac{1}{p} \sum\nolimits_{j=1}^{p}  \| \widehat  \bsbeta_j (\bz) - \bsbeta^*_{j}(\bz)\|_{A_j (\bz)}^2 \bigg] \lesssim  \frac{L\xi}{\alpha}.
\end{equation}
}%
\end{thm}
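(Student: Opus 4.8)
The plan is to specialize the generic excess-error bound of Theorem~\ref{main-thm-1}, together with the companion consistency bound of Corollary~\ref{cor-consistency-beta}, to the ReLU-MLP hypothesis class. Three ingredients are needed: (i) replace the abstract approximation error $\err_{\approxi}(\MF)$ by an explicit rate in terms of the network depth; (ii) replace the pseudo-dimension $d_P(\MF)$ by an explicit bound in terms of the network size; and (iii) choose the depth so that the approximation and generalization contributions are of the same order. Since the depth $H$ is the single knob that simultaneously drives the approximation error down and the capacity up, an optimal trade-off point $\xi$ exists and yields the stated rate.

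First I would invoke the approximation theorem of \cite{kohler2021rate}. Under Assumption~\ref{assume-pcsmooth} each coordinate $\beta^*_{jk}$ is $(m,C)$-smooth on $[-1,1]^q$ (recall $\bz\in[-1,1]^q$ after the $C=1$ normalization). Their construction yields a fully connected ReLU network of fixed width $r\simeq (2e)^q\binom{m+q}{q}q^2$ and depth $H$ whose sup-norm error satisfies $\|\beta^{\mathrm{net}}_{jk}-\beta^*_{jk}\|_\infty\lesssim H^{-2m/q}$. Writing $\xi:=H^{-2m/q}$, equivalently $H\simeq\xi^{-q/2m}$, and recalling that $\beta^{\opt}_{jk}$ is by construction the best $\ell_\infty$ approximant to $\beta^*_{jk}$ within $\MF$, the definition of the vectorized infinity norm gives $\err_{\approxi}(\MF)=\max_{j,k}\|\beta^{\opt}_{jk}-\beta^*_{jk}\|_\infty\lesssim\xi$. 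The side condition $m\lesssim q$ is precisely what collapses the width and the smoothness-dependent constants of \cite{kohler2021rate} into the stated closed form.

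Next I would control the capacity. For a ReLU network with $W$ weights and depth $H$, the nearly tight pseudo-dimension bounds of \cite{bartlett2019nearly} give $d_P(\MF)\lesssim W H\log W$; since $W\simeq r^2 H$, substituting $H\simeq\xi^{-q/2m}$ shows that $d_P(\MF)$ is polynomial in $\xi^{-1}$ with exponent $q/m$, up to the width $r$ and the logarithmic factor $\log(r^2 H)$. Feeding both the approximation and the pseudo-dimension bounds into \eqref{main_ineq} gives
\[
\MMR(\widehat\bsbeta)-\MMR(\bsbeta^*)\;\lesssim\; L\,\xi\;+\;\frac{L^2\,r^2\,\xi^{-q/m}\log(r^2H)}{\alpha n}\,p^2\log(nLp)\log(1/\delta).
\]
Balancing the two summands amounts to solving $L\xi\simeq L^2\xi^{-q/m}(\alpha n)^{-1}(\cdots)$, which yields $\xi^{(m+q)/m}\simeq L r^2\log(r^2H)\,p^2\log(nLp)\log(1/\delta)/(\alpha n)$ and hence the exponent $m/(m+q)$ in the claimed expression for $\xi$. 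With this $\xi$ both summands are $O(L\xi)$, giving the first inequality; the second follows identically by substituting the same bounds $\err_{\approxi}(\MF)\lesssim\xi$ and $d_P(\MF)$ into Corollary~\ref{cor-consistency-beta}, whose two summands then become $O(L\xi/\alpha)$.

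I expect the principal difficulty to be bookkeeping rather than anything conceptual: one must propagate the $(m,q)$-dependent width $r\simeq (2e)^q\binom{m+q}{q}q^2$ and the $\log W$ factor from the pseudo-dimension estimate through the balancing step so that they aggregate into the precise prefactor $m^4 d^6\log^2(nLp)\log(p)\log(1/\delta)\log(1/\alpha)$ appearing in the definition of $\xi$, and one must verify that the regime $m\lesssim q$ keeps the \cite{kohler2021rate} constants in the claimed form. The conceptual content, by contrast, is transparent and reduces to the single observation that depth simultaneously controls approximation and capacity, so that the optimal $\xi$ delivers the stated rate for both the excess risk and the weighted recovery error.
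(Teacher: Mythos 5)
Your proposal is correct and follows essentially the same route as the paper's proof: approximation via Theorem~2 of \cite{kohler2021rate} with depth $H\simeq\xi^{-q/2m}$ giving $\err_{\approxi}(\MF)\lesssim\xi$, capacity via the \cite{bartlett2019nearly} bound $d_P(\MF)\lesssim H^2r^2\log(Hr)\simeq\xi^{-q/m}$ (up to polynomial and log factors), substitution into~\eqref{main_ineq} followed by balancing the two terms to obtain the exponent $m/(m+q)$, and the same strong-convexity argument as in Corollary~\ref{cor-consistency-beta} for the second bound in~\eqref{maj_bound}. The only differences are cosmetic bookkeeping of the $(m,q)$-dependent prefactors, which you correctly flag as the remaining routine work.
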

\begin{remark}
    The rates of the bounds in~\eqref{maj_bound} are of the order $n^{-m/(m+q)}$, whereas the rate in~\cite{kohler2021rate} based on a quadratic loss is $n^{-2m/(2m+q)}$. This discrepancy is due to the fact that when analyzing general $L$-Lipschitz functions, the excess error bound depends 
    in a linear fashion on the approximation error of $\bsbeta(\bz)$. For the mean squared error loss function, it depends in a quadratic fashion and hence the same rate as in~\cite{kohler2021rate} can be obtained, as shown in Appendix~\ref{sec:subsec_extension}.   
\end{remark}

\paragraph{Extensions to edge-wise recovery.} 
Recall that in Corollary~\ref{cor-consistency-beta}, an $\ell_2$-type consistency result is established for $\widehat{\boldsymbol{\beta}}$ based on a projection norm. If we consider a setting where edges can be segmented into strong and weak ones, then the result in Corollary~\ref{cor-consistency-beta} can be translated to an ``edge-wise'' type guarantee. Formally, we define the set of strong edges as $E^*(\bz):= \{(j,k)|~|\beta^*_{jk}(\bz)|\geq \barbelow{\beta}\}$ and of weak ones as ${E^*}^c(\bz):=\{(j,k)|~|\beta^*_{jk}(\bz)|\leq \bar {\beta}\}$, where $\barbelow{\beta}$ and $\bar{\beta}$ correspond to their respective minimum/maximum magnitude. In addition, we assume that there exists a uniform lower bound $\phi>0$ on the margin between the maximum and minimum magnitudes; namely, $\barbelow{\beta}-\bar{\beta}\geq \phi>0$. The edge-wise result in Corollary~\ref{cor-edge-recovery-v1} focuses on establishing a guarantee that $\mathbb{1}\{ |\widehat \beta_{jk}(\bz)| \geq \tau \} $ matches $\mathbb{1}\{ |\widehat \beta^*_{jk}(\bz) |\geq \barbelow{\beta} \}$ in expectation, for some threshold $\tau := \eta \bar\beta + (1-\eta) \barbelow{\beta},\,\eta>0$; i.e., $\tau$ is a convex combination of $\bar \beta$ and $\barbelow{\beta}$. 

\begin{assume}\label{assume-cvx} $A_j(\bz) := \dbE\big[ \bx_{-j}\bx_{-j}^\top|\bz\big]\succ  \gamma \textbf{I}$ holds uniformly for all $\bz \in \MZ$ with $\gamma>0$. 
\end{assume}
This assumption ensures that the minimum eigenvalue of $A_j(\bz)$ is bounded away from zero, uniformly for all $\bz$. This assumption allows to us to first establish an analogous result as in~\eqref{closeness_beta}, with the distance between $\widehat  \bsbeta_j (\bz) -   \bsbeta^*_{j}(\bz)$ measured in the Euclidean norm $\|\cdot\|$ instead of the projection norm $A_j(\bz)$; subsequently, the Euclidean norm result is further translated to that on individual edges.  

\begin{cor}\label{cor-edge-recovery-v1}
Let $\tau:=\eta \bar\beta + (1-\eta) \barbelow{\beta}$ with $\eta>0$. Under the Assumptions of Corollary~\ref{cor-consistency-beta} and Assumption~\ref{assume-cvx}, the following inequality holds with probability at least $1-\delta$:
{\small
\begin{equation}\label{eqn:bound-edge-wise}
\begin{aligned}
   \dbE_{\bz}  \bigg[  \sum\limits_{j=1}^{p} \sum\limits_{k\neq j}  \mathbb{1}\big\{ |\widehat \beta_{jk}(\bz) | \geq \tau \big\} \neq \mathbb{1}\big\{ |\beta^*_{jk}(\bz) | \geq \barbelow{\beta} \big\} \bigg]
 \lesssim & \frac{L^2  d_{P}(\MF)}{\alpha^2 \min\{\eta^2,(1-\eta)^2\}\gamma(\barbelow{\beta} - \bar{\beta})^2 n} p^2\log(nLp)\log\bigg( \frac{1}{\delta}\bigg) \\
 +& \frac{L \cdot \err_{\approxi}(\MF)}{\alpha\gamma\min\{\eta^2,(1-\eta)^2\}(\barbelow{\beta} - \bar{\beta})^2}.
\end{aligned}
\end{equation}
}%
\end{cor}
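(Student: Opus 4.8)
The plan is to reduce the edge-wise recovery statement to the $\ell_2$-consistency bound of Corollary~\ref{cor-consistency-beta} via two maneuvers: converting the projection-norm bound into a plain Euclidean-norm bound, and then dominating each edge-wise indicator mismatch by a rescaled squared deviation $(\widehat\beta_{jk}(\bz)-\beta^*_{jk}(\bz))^2$. Assumption~\ref{assume-cvx} enables the first step: since $A_j(\bz)\succ\gamma\mathbf{I}$ uniformly in $\bz$, we have $\gamma\|\bv\|^2\leq\langle\bv,A_j(\bz)\bv\rangle=\|\bv\|^2_{A_j(\bz)}$ for every $\bv$, whence
\[
\dbE_{\bz}\Big[\tfrac1p\sum\nolimits_{j=1}^p\|\widehat{\bsbeta}_j(\bz)-\bsbeta^*_j(\bz)\|^2\Big]\leq\tfrac1\gamma\,\dbE_{\bz}\Big[\tfrac1p\sum\nolimits_{j=1}^p\|\widehat{\bsbeta}_j(\bz)-\bsbeta^*_j(\bz)\|^2_{A_j(\bz)}\Big],
\]
and the right-hand side is controlled by~\eqref{closeness_beta}.

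The crux is a deterministic, pointwise claim: for fixed $\bz$ and edge $(j,k)$, writing $\widehat I_{jk}:=\mathbb{1}\{|\widehat\beta_{jk}(\bz)|\geq\tau\}$ and $I^*_{jk}:=\mathbb{1}\{|\beta^*_{jk}(\bz)|\geq\barbelow{\beta}\}$, a mismatch $\widehat I_{jk}\neq I^*_{jk}$ forces $|\widehat\beta_{jk}(\bz)-\beta^*_{jk}(\bz)|\geq\min\{\eta,1-\eta\}(\barbelow{\beta}-\bar{\beta})$. This is where the strong/weak segmentation and the convex-combination form of $\tau$ enter. Using $\tau-\bar{\beta}=(1-\eta)(\barbelow{\beta}-\bar{\beta})$ and $\barbelow{\beta}-\tau=\eta(\barbelow{\beta}-\bar{\beta})$, I split into two cases. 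If $(j,k)$ is a strong edge ($|\beta^*_{jk}(\bz)|\geq\barbelow{\beta}$, so $I^*_{jk}=1$), a mismatch means $|\widehat\beta_{jk}(\bz)|<\tau$, hence $|\widehat\beta_{jk}(\bz)-\beta^*_{jk}(\bz)|\geq|\beta^*_{jk}(\bz)|-|\widehat\beta_{jk}(\bz)|>\barbelow{\beta}-\tau=\eta(\barbelow{\beta}-\bar{\beta})$. If $(j,k)$ is a weak edge ($|\beta^*_{jk}(\bz)|\leq\bar{\beta}$, so $I^*_{jk}=0$), a mismatch means $|\widehat\beta_{jk}(\bz)|\geq\tau$, hence $|\widehat\beta_{jk}(\bz)-\beta^*_{jk}(\bz)|\geq\tau-\bar{\beta}=(1-\eta)(\barbelow{\beta}-\bar{\beta})$. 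Taking the worse threshold gives the claimed bound; the margin assumption $\barbelow{\beta}-\bar{\beta}\geq\phi>0$ guarantees it is strictly positive and that every edge lies in exactly one of the two cases (no edge sits in the gap $(\bar{\beta},\barbelow{\beta})$).

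Given the pointwise claim, I bound each indicator by a Markov-type ratio,
\[
\mathbb{1}\{\widehat I_{jk}\neq I^*_{jk}\}\leq\frac{(\widehat\beta_{jk}(\bz)-\beta^*_{jk}(\bz))^2}{\min\{\eta^2,(1-\eta)^2\}(\barbelow{\beta}-\bar{\beta})^2},
\]
sum over all $j$ and $k\neq j$ using $\sum_{k\neq j}(\widehat\beta_{jk}(\bz)-\beta^*_{jk}(\bz))^2=\|\widehat{\bsbeta}_j(\bz)-\bsbeta^*_j(\bz)\|^2$, and take $\dbE_{\bz}$. Substituting the Euclidean-norm consistency of the first paragraph—which carries the $1/\gamma$ factor and inherits the generalization and approximation terms of Corollary~\ref{cor-consistency-beta}—yields the stated two-term bound, with the prefactor $1/(\gamma\min\{\eta^2,(1-\eta)^2\}(\barbelow{\beta}-\bar{\beta})^2)$ multiplying both pieces; the polynomial-in-$p$ factor comes from summing over the $p(p-1)$ edges and converting the $\tfrac1p$-averaged bound. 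The "with probability at least $1-\delta$" qualifier transfers verbatim from Corollary~\ref{cor-consistency-beta}, since no new randomness is introduced. I expect the only genuinely delicate step to be the pointwise case analysis, so the main care is in correctly exploiting the gap between $\bar{\beta}$ and $\barbelow{\beta}$ together with the placement of $\tau$ strictly inside that gap; everything else is the $1/\gamma$ conversion and a Markov inequality.
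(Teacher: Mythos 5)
Your proof is correct and essentially identical to the paper's: the same eigenvalue conversion from the projection norm of Corollary~\ref{cor-consistency-beta} to the Euclidean norm via Assumption~\ref{assume-cvx}, followed by the same pointwise margin inequality (the paper's~\eqref{beta_ineq}) obtained from the placement of $\tau$ inside the gap, and a sum-plus-expectation step; if anything, your case analysis is stated more carefully than the paper's (you correctly use $|\beta^*_{jk}(\bz)|\le\bar\beta$ to characterize weak edges and the correct direction $\gamma\|\bv\|^2\le\|\bv\|^2_{A_j(\bz)}$, both of which appear with typos in the paper's write-up). One caveat you share with the paper: converting the $\tfrac{1}{p}$-averaged bound of~\eqref{closeness_beta} into the unnormalized sum over all $p(p-1)$ edges strictly introduces an extra factor of $p$, which neither your bookkeeping nor the paper's proof reconciles explicitly with the stated $p^2$ rate (the paper's later refinement remark, with its $pL\xi$ bound, implicitly acknowledges this factor).
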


\begin{remark} Some comments on the result in Corollary~\ref{cor-edge-recovery-v1} are provided next.
\begin{itemize}[itemsep=0pt]
\item Similarly to the result in Corollary~\ref{cor-consistency-beta}, the bound in~\eqref{eqn:bound-edge-wise} consists of two terms that correspond, respectively, to the generalization error and the approximation error bounds. With a constant margin $\barbelow{\beta}-\bar{\beta}$, the generalization error bound still vanishes, provided that the network size $p$ and the sample size $n$ grow at certain rates. However, the approximation error bound does not vanish; in particular, when the margin is small, the corresponding term can be large, resulting in a fairly loose bound. This aligns with intuition that when the strong and weak edges are hard to ``distinguish'', the edge-wise recovery via thresholding becomes difficult. 
\item The established bound is in expectation and can be interpreted as follows: on average, the strong edges can be identified via thresholding, provided that the threshold is selected within a certain range. This differs from the graph recovery results established for \textit{sparse} high-dimensional Gaussian graphical models based on an $\ell_1$ penalty \citep[e.g.,][]{ravikumar2009sparse}, largely due to the fact that we are under a PAC learning framework and hence there is some discrepancy embedded in the setting in question---that we consider a mis-specified setting without assuming that we know the functional class of the underlying true data generating process, and thereby the tools adopted. 
\item An important implication of this result is that, in practice, when there is clear separation in magnitude between strong and weak edges, one can effectively obtain a sparse graph by thresholding. This approach practically treats the weak edges as zero, thereby enhancing interpretability. 
\end{itemize}
\end{remark}

\begin{remark}[Refinement of Corollary~\ref{cor-edge-recovery-v1}] By incorporating the analysis on the functional approximation error, the above corollary can be refined as follows, under the same setting as considered in Theorem~\ref{main-thm-2}:
{\small
\begin{equation*}% \label{maj_bound2}
\dbE_{\bz}  \bigg[  \sum\limits_{j=1}^{p} \sum\limits_{k\neq j}  \mathbb{1}\Big\{ |\widehat \beta_{jk}(\bz) | \geq \tau\Big\} \neq \mathbb{1}\Big\{ |\beta^*_{jk}(\bz) | \geq \barbelow{\beta} \Big\} \bigg]
 \lesssim  \frac{pL\xi}{\alpha \gamma \min\{\eta^2,(1-\eta)^2\}( \barbelow{\beta} - \bar{\beta})^2}.
\end{equation*}
}%
\end{remark}

\section{Synthetic Data Experiments}

The performance of the proposed neural network-based method is assessed through a series of experiments on synthetic data sets. Both Gaussian and non-Gaussian settings are considered, with samples (indexed by $i$) generated according to one of the following three mechanisms:
\begin{itemize}[topsep=0pt,itemsep=0pt]
\item Gaussian: $\boldsymbol{x}^i\sim \mathcal{N}(0,(\Theta^i)^{-1})$. 
\item Non-paranormal (NPN, \citet{liu2009nonparanormal}): $\boldsymbol{x}^i = f^{-1}(\check{\bx}^i)$; $\check{\bx}^i\sim \mathcal{N}(0,(\Theta^i)^{-1})$ with $f$ being a monotone and differentiable function, and its inverse $f^{-1}$ is applied to $\check{\bx}^i$ in a coordinate-wise fashion.
\item Directed Acyclic Graph (DAG): for each coordinate $j$, $x^i_j=\sum_{k\in\text{pa}(j)}f^i_{jk}(x^i_k) + \epsilon_j$, where $\text{pa}(j)$ is the {\em parent} set of node $j$. In other words, $\boldsymbol{x}^i$ is generated according to a structural equation model (SEM). Note that the SEM serves solely as a data generation mechanism to introduce potentially nonlinear dependencies (e.g., by parameterizing $f_{jk}$'s through basis functions) amongst the nodes in a flexible way. However,  the primary quantity of interest is still the conditional independence structure captured in the corresponding undirected moralized graph, as described next. 
\end{itemize}
For the Gaussian and the non-paranormal cases, $\Theta^i$ is determined by $\boldsymbol{z}^i$. For the DAG case, we start with a binary $A^i$ determined by $\boldsymbol{z}^i$ that corresponds to the skeleton of the DAG (denoted by $G^i_{\text{DAG}}$); the parent set of node $j$ satisfies $\text{pa}(j)\equiv\{k:A^i_{jk}\neq 0\}$. The undirected conditional independence graph of interest that is compatible with the data can then be obtained by {\em moralizing} the DAG \citep[Chapter 3.2.1]{cowell1999building}, namely, $G^i=\mathcal{M}(G^i_{\text{DAG}})$, whose graph structure is encoded in $\Theta^i$. It can be viewed as the counterpart to those in the Gaussian/non-paranormal case in terms of capturing conditional independence relationships\footnote{In the special case where $\boldsymbol{x}$ is generated according to a {\em} linear SEM, namely $\bx= A\bx + \boldsymbol{\epsilon}$, $\Theta:=\text{cov}(\bx)$ satisfies $\Theta^{-1}\equiv(I-A^\top)^{\top}\Omega (I-A^\top)^{-1}$ where $\Omega=\text{cov}(\boldsymbol{\epsilon})$; see, e.g., \citet{loh2014high}. In the absence of linearity, the exact magnitude of the entries in $\Theta$ becomes difficult to infer; however, one can still infer its skeleton from that of the DAG by the definition of moralization.}. For all three cases, $\Theta^i$ is the parameter of interest, and we are interested in how well its \textit{skeleton} can be recovered by the proposed method. The results are benchmarked against various competitors, including RegGMM \citep{zhang2023high}, glasso \citep{friedman2008sparse} and nodewise Lasso \citep{meinshausen2006high}. RegGMM accounts for covariate dependence while assuming linearity, whereas the other two are graphical model estimation methods that solely consume $\boldsymbol{x}$ as the input. 

\paragraph{Settings.} The description of the various simulation settings is outlined in Table~\ref{tab:sim-setting}. For settings where data are generated according to the Gaussian or NPN models, we first generate ``candidate" precision matrices denoted by $\Psi_l$'s, where $\Psi_{l}$ either has a single band $l$ steps from the main diagonal (for settings G1/N1) or is block diagonal with nonzero entries on the $l$-th block (for settings G2/N2). Next, each $\Theta^i$ is a {\em convex} combination --- which automatically ensures the positive definiteness of $\Theta^i$ --- of the candidate precision matrices, and the mixing depends on the value of the covariate $\boldsymbol{z}^i$. For settings D1 and D2 where data are generated according to a DAG, we first generate binary matrices $B_1$ and $B_2$; both $B_1$ and $B_2$ correspond to the skeleton of some trees, with nodes having 1 to 3 children (randomly determined as the tree grows). The covariate $\bz^i$ dictates the skeleton of DAG, i.e., $A^i$, and also governs $f^i_{jk}$ either directly through the coefficients on the $x_k$'s (linear case), or through a multiplier that impacts the coefficients of the basis functions (non-linear case). For all settings, the samples effectively fall under different ``clusters" according to their $\bz^i$'s. Samples within the same cluster have identical skeletons, albeit the magnitude of the entries may differ depending on the exact value of the $\bz^i$'s.  

The degree of linearity in $\bz$ varies by setting. For settings G1 and N1, the dependency on $\bz\in\mathbb{R}^2$ is linear in its 1st coordinate which governs the mixing percentage, and the 2nd coordinate effectively dictates the cluster membership. This is similar for settings D1 and D2, which however differ in that the dependency on the 1st coordinate being quadratic. For settings G2 and N2, the dimension of $\boldsymbol{z}^i$ is set at 10; to induce non-linearity, we use a radial basis function (RBF) network $\varphi$ that first transforms $\bz$ into a scalar, followed by a sigmoid transformation, namely $\tilde{z}^i:=\text{sigmoid}(\varphi(\bz^i))$ where $\varphi(\bz):=\sum_{\ell=1}^L \alpha_{\ell} \exp(-\beta_{\ell}\|\bz-\boldsymbol{c}_\ell\|^2)$; the sigmoid transformation ensures that $\tilde{z}^i\in(0,1)$, which then dictates the mixing percentage. Finally, note that amongst these settings, the dependency on the $x_k$'s are linear in settings G1, G2 and D1. Additional details and pictorial illustrations are deferred to Appendix~\ref{appendix:extra-sim}.
\begin{table*}[!hbt]
\centering
\centering
%\fontsize{7}{7}\selectfont
\footnotesize
\caption{Overview of simulation settings.}\label{tab:sim-setting}\vspace*{-3mm}
\begin{tabular}{lr|l}
\toprule
 \multicolumn{2}{r|}{\textsc{mechanism}} & \textsc{description} \\ \hline
G1 & Gaussian & $p=50, q=2$; $\left[ \begin{matrix}z^i_1\\z^i_2\end{matrix}\right]\sim\mathsf{Unif}(0,1)$; $\Theta^i=\begin{cases} z^i_1\Psi_1 + (1-z^i_1)\Psi_2~~&\text{if}~z^i_2\in(0,1/3) \\ z^i_1\Psi_2 + (1-z^i_1)\Psi_3~~&\text{if}~z^i_2\in(1/3,2/3) \\ z^i_1\Psi_1 + (1-z^i_1)\Psi_3~~&\text{if}~z^i_2\in(2/3,1) \end{cases}$.\\
\arrayrulecolor{lightgray}%
\cmidrule(lr){1-3}%
\arrayrulecolor{black}%
G2 & Gaussian & 
\begin{tabular}{@{}l@{}} 
$p=90, q=10$; ${\bz}^i\sim$ standard multivariate Gaussian; $\tilde{z}^i:=\text{sigmoid}(\varphi(\bz^i))$; $\varphi(\cdot)$ is an RBF network;\\
$\varphi(\bz):=\sum_{\ell=1}^L \alpha_{\ell} \exp(-\beta_{\ell}\|\bz-\boldsymbol{c}_\ell\|^2)$, $L=10$, $\alpha_\ell\sim\mathsf{Unif}(-10,10)$, $\beta_\ell\sim\mathsf{Unif}(0.1,0.5)$, $\boldsymbol{c}_\ell\sim [\mathsf{Unif}(-1,1)]^L$;\\
$\Theta^i=\begin{cases} \tilde{z}^i\Psi_1 + (1-\tilde{z}^i)\Psi_3 &\text{if}~\tilde{z}^i > 0.9~\text{or}~\tilde{z}^i < 0.1\\  \tilde{z}^i\Psi_1 + 0.5\Psi_2 + (0.5-\tilde{z}^i)\Psi_3 &\text{otherwise}\end{cases}$. 
\end{tabular}
\\
\arrayrulecolor{lightgray}%
\cmidrule(lr){1-3}%
\arrayrulecolor{black}%
N1 & NPN & $\check{\bx}^i$'s are generated identically to those in G1; $\bx^i = g(\check{\bx}^i)$ where $g(x)=x+\sin(x)$.\\
\arrayrulecolor{lightgray}%
\cmidrule(lr){1-3}%
\arrayrulecolor{black}%
N2 & NPN & $\check{\bx}^i$'s are generated identically to those G2; $\bx^i = g(\check{\bx}^i)$ where $g(x)=x^2\text{sign}(x)$.\\
\arrayrulecolor{lightgray}%
\cmidrule(lr){1-3}%
\arrayrulecolor{black}%
D1 & DAG & \begin{tabular}{@{}l@{}} $p=50,q=2$; $\left[ \begin{matrix}z_1^i\\z_2^i\end{matrix}\right]\sim \mathsf{Unif}(-1,1)$; $\tilde{A}^i=\begin{cases} B_1~&\text{if}~z^i_1\in(0,\tfrac{1}{2}) \\ B_2  &\text{if}~z^i_1\in(-\tfrac{1}{2},0) \\ (z^i_2)^2B_1 + (1-(z^i_2)^2)B_2 &\text{otherwise} \end{cases}$;\\
$f^i_{jk}(x) := \tilde{A}^i_{jk}x_k$; $A^i:=\mathbf{1}(\tilde{A}^i\neq 0)$.
\end{tabular}\\
\arrayrulecolor{lightgray}%
\cmidrule(lr){1-3}%
\arrayrulecolor{black}%
D2 & DAG & \begin{tabular}{@{}l@{}} $p,q$, $\boldsymbol{z}^i$'s and $\tilde{A}^i$'s are set identically to those in D1; $f^i_{jk}(x):= \alpha^i_{jk,1}\psi_1(x) + \alpha^i_{jk,2}\psi_2(x) + \alpha^i_{jk,3} \psi_3(x)$ \\
$\psi_m(\cdot)$'s are Gauss-Hermite functions \citep{olver2010nist}; $\alpha^i_{jk,m}= (\tilde{A}^i_{k,j}\cdot c_{jk,m}), c_{jk,m}\sim \mathsf{Unif}(0.1, 0.5)$.\end{tabular} \\
\bottomrule
\end{tabular}
\end{table*}

For all settings, we train the model with $\boldsymbol{\beta}(\cdot)$ parameterized with an MLP\footnote{We use stacked linear layers with ReLU activations, and do not leverage any RBF-style layers or sigmoid ones which are used in the data generating process.} on 10,000 samples, and evaluate it on a test set of size 1000; the estimated graphs (off-diagonals only) for these test samples (indexed by $i$) are extracted as $\{-\widehat{\beta}_{jk}(\boldsymbol{z}^i)\}_{j,k\in[p];j\neq k}$. For benchmarking methods that are not covariate-dependent, i.e., glasso and nodewise Lasso, their sample-level estimates are identical, namely $\widehat{\Theta}^i\equiv \widehat{\Theta}$ for all $i$. Instead of directly running the method on the full set of training samples, we further partition them based on their cluster membership, and conduct separate estimations using only within-cluster samples.\footnote{Recall that that the true graphs corresponding to samples within the same cluster have identical skeletons, although the magnitude may still vary. glasso/nodewise Lasso are still ``mis-specified", however to a lesser extent when compared against the case where this ``cluster membership" information is ignored.} Note that such partition would not be feasible in real world settings as the cluster membership would not be known apriori. All experiments are repeated over 5 data replicates. 

\paragraph{Performance evaluation.} We use AUROC and AUPRC as metrics, which adequately capture how well the methods estimate entries with strong signals versus weak ones\footnote{The estimated $\widehat{\bsbeta}$ does not correspond to edge probabilities, but rather, it corresponds to edge weights. AUROC/AUPRC can effectively summarize how well the graph skeleton is being captured after the entries are thresholded over a range of thresholding levels.}, and there is no need to apply hard-thresholding to estimates to obtain these two metrics. For a single experiment based on (any) one data replicate, the metrics are initially obtained at the {\em sample}-level by comparing $\widehat{\Theta}^i$ against $\Theta^i$, for all samples under consideration\footnote{For methods that can produce sample-specific estimates, the evaluation is done on the test set; for those that can only produce a single estimate based on all samples, the evaluation is done on the graph estimated based on the training samples directly.}; to obtain the metrics of interest corresponding to a single experiment, we average those over the samples. For glasso/nodewise Lasso, experiments are conducted over a sequence of penalty parameters, and the highest metric values are reported. Table~\ref{tab:simresults} reports these metrics, after averaging across the data replicates (experiments), with the corresponding standard deviation reported in parentheses. 
\begin{table*}[!ht]
\small\centering
\fontsize{7}{7}\selectfont
\caption{Evaluation for the proposed and benchmarking methods, averaged over 5 data replicates}\label{tab:simresults}\vspace*{-3mm}
\begin{tabular}{l|cc|cc|cc|cc}
\toprule
 & \multicolumn{2}{c|}{\textbf{DNN-based CGM}} & \multicolumn{2}{c|}{RegGMM} & \multicolumn{2}{c|}{glasso - est. by cluster} & \multicolumn{2}{c}{nodewise Lasso - est. by cluster} \\
\arrayrulecolor{gray}%
\cmidrule(lr){2-3}\cmidrule(lr){4-5}\cmidrule(lr){6-7}\cmidrule(lr){8-9}
\arrayrulecolor{black}
 & AUROC & AUPRC &  AUROC & AUPRC &  AUROC & AUPRC & AUROC & AUPRC  \\ \hline
G1 & 0.99 (0.000) & 0.98 (0.001) & 0.96 (0.002) & 0.79 (0.004) & 0.97 (0.000) &	0.59 (0.000) & 0.99 (0.000) & 0.98 (0.002)\\
G2 & 0.99 (0.003) &	0.92 (0.024) & 0.86 (0.017)	& 0.47 (0.020) & 0.97 (0.000) &	0.45 (0.031) & 0.96 (0.007) & 0.55 (0.141)\\
N1 & 0.99 (0.000) &	0.98 (0.001) & 0.99 (0.001)	& 0.89 (0.001) & 0.98 (0.000) &	0.59 (0.001) & 0.99 (0.001) & 0.98 (0.003) \\
N2 & 0.99 (0.004) &	0.90 (0.028) & 0.82 (0.015)	& 0.37 (0.023) & 0.96 (0.000) &	0.42 (0.034) & 0.93 (0.016) & 0.37	(0.164) \\
D1 & 0.95 (0.004) &	0.76 (0.026) & 0.94	(0.005) & 0.74 (0.026) & 0.92 (0.005) &	0.46 (0.008) & 0.94 (0.005) & 0.85	(0.032) \\
D2 & 0.92 (0.008) &	0.66 (0.022) & 0.87	(0.013)	& 0.60 (0.017) & 0.89 (0.009) &	0.40 (0.017) & 0.87 (0.015) & 0.50	(0.113)\\
\bottomrule
\end{tabular}
\end{table*}

The main observations are: (1) the proposed method exhibits superior performance than existing benchmarks, and particularly so in AUPRC. As a matter of fact, all methods exhibit reasonable performance in AUROC, even for glasso/nodewise Lasso that cannot perform sample-specific estimation. Notably, when the dependency on $\bz$ is linear within each cluster (settings G1 and N1), nodewise Lasso performs almost perfectly when separate estimation is conducted within each cluster and thus on samples whose underlying true graphs have the same skeleton yet different magnitudes, despite that the model assumes that all samples have identical graphs. (2) In the case where the true data generating process is mildly non-linear in $x_k$'s---in particular, the non-linearity is induced via monotonic transformations (i.e., N1 and N2), the proposed formulation in~\eqref{eqn:formulation} as a working model---by only considering linear dependency on the $x_k$'s---is still able to recover well the skeleton. (3) In the case where the true DGP becomes highly non-linear (e.g., D2), the performance of the method deteriorates. This is expected in that the moral graph can be inherently difficult to recover, partly due to the existence of edges that are induced by ``married'' nodes and have small magnitude; the non-linearity in $x_k$'s can further add to the complexity of edge recovery. The former is corroborated by D1, where although the dependency on $x_k$'s is linear, the skeleton recovery is still inferior. Additional tables and remarks are deferred to Appendix~\ref{appendix:extra-sim} for further benchmarking and illustration, including the skeleton recovery (0/1) performance of DNN-CGM at different thresholding levels (Table~\ref{tab:metrics-extra}), the performance for the DAG setting when estimates are evaluated against the ``pseudo'' moralized graph without the edges from the married nodes (Table~\ref{tab:dag-extra}), and the performance of glasso/nodewise Lasso without sample partition (Table~\ref{tab:metrics-glasso-plain}).

\section{Real Data Experiments}

To demonstrate how the proposed method performs in real world settings, we consider two applications from the domains of neuroscience and finance (deferred to Appendix~\ref{appendix:extra-real}), and assess the network structures recovered.

We consider a dataset from the Human Connectome Project analyzed in \citet{lee2023conditional} comprising of resting-state fMRI scans for 549 subjects. The scan of any subject is in the form of a spatial-temporal matrix, with the rows corresponding to the ``snapshot'' value of 268 brain regions and the columns being the temporal observations of the region, totaling 1200 time points. In addition, each subject is associated with a Penn Progressive Matrices score, a surrogate of fluid intelligence. The quantity of interest is the network of brain regions ($p=268$) as a function of the intelligence score ($q=1$). The dataset has been pre-processed with global signal regression, where shared variance between the global signal and the time course of each individual voxel is removed through linear regression \citep{murphy2017towards, greene2018task}. In our experiment, we flatten the time dimension of the fMRI scans and ignore the temporal nature of these scans; therefore, the dataset contains $1200\times 549\approx 660\text{k}$ effective samples. However, note that since the score is at the subject level, the estimation procedure yields a single unique network for each subject\footnote{This is due to the fact that the model is specified as a function of the score and thus effective samples of the same subject will always have their sample-specific estimate graphs being identical.}. 
% \begin{figure*}[!hb]
% \centering
% \includegraphics[scale=0.30]{figs/p1_medial frontal_q1_vs_q4.png}
% \caption{Heatmap for the estimated \texttt{medial frontal} networks thresholded at 0.05. \textbf{Left}: network for subjects whose scores are in the 1st quantile. \textbf{Right}: network for subjects whose scores are in the 4th quantile.}\label{fig:p1-medial frontal}
% \end{figure*}
\begin{figure}[!h]
\centering
\begin{subfigure}[b]{0.48\textwidth}
\centering
\includegraphics[scale=0.39]{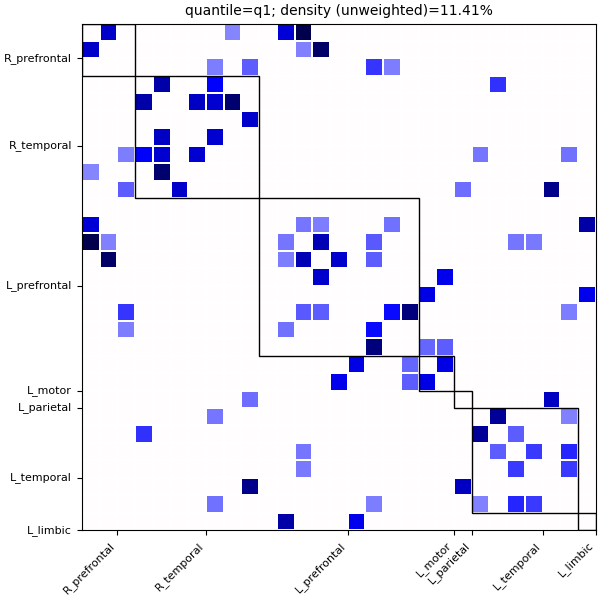}
\caption{Network for subjects whose scores are in the 1st quantile}
\end{subfigure}
\begin{subfigure}[b]{0.48\textwidth}
\centering
\includegraphics[scale=0.39]{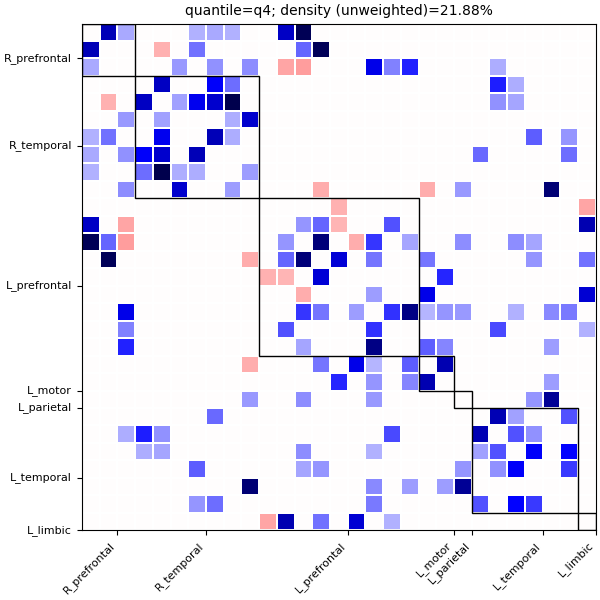}
\caption{Network for subjects whose scores are in the 4th quantile}
\end{subfigure}
\caption{Estimated networks as represented by $-\widehat{\beta}(\bz)$ for subjects having low (left) and high scores (right), after thresholding at 0.05. Red cells indicate positive partial correlations and blue cells indicate negative ones.}\label{fig:p1-medial frontal}
\end{figure}
\paragraph{Results.} We group the subjects based on the quantiles  that their respective scores fall into, and obtain the averaged graph over subjects that are within the same quantile band (labeled as q1, q2, q3 and q4). Further, for visualization purposes, we partition the nodes into 6 sub-networks, corresponding to \texttt{default mode}, \texttt{frontoparietal}, \texttt{medial frontal}, \texttt{motor}, \texttt{subcortical cerebellar} and \texttt{visual}, as in \citet{lee2023conditional}. Figure~\ref{fig:p1-medial frontal} shows the comparison between the medial frontal sub-networks of subjects in q1 and q4, respectively, after the estimated graphs are sparsified (see Remark~\ref{remark:threshold}) and thresholded at 0.05 (on a normalized scale). Note that each node is associated with a label\footnote{Amongst all 268 nodes, there are 20 distinct labels}; in the plot, nodes with the same label are grouped together. 

The difference in the network density is rather pronounced; in particular, those having a higher score exhibit a significantly more connected network. Similar patterns are also observed in other sub-networks; in particular, the differential between q1 and q4 networks of \texttt{frontoparietal}, \texttt{motor} and \texttt{visual} is of comparable scale to that in \texttt{medial frontal}, while that of \texttt{default mode} and \texttt{subcortical-cerebellar} is less pronounced. This is largely concordant with observations from existing literature \citep{song2008brain,ohtani2014medial} and is also corroborated in a validation dataset encompassing 828 subjects; the estimated graphs therein exhibit the same pattern.

\section{Discussion}\label{sec:discussion}

A nonlinear covariate-dependent graphical model based on a node-conditional formulation is investigated. The functional dependency on the covariate $\boldsymbol{z}$ is parameterized by neural networks, and hence the model can capture this dependency in a flexible way. Theoretical guarantees are provided under the PAC learning framework, wherein both the generalization error and the approximation error are taken into account. 

At the methodological level, alternative approaches similar to the formulation in \citet{baptista2024learning}, can be potentially adopted, by considering a modified score matrix $\Omega$ that takes into account the external covariate, namely $\Omega_{jk}:=\mathbb{E}_{p(\boldsymbol{x}|\boldsymbol{z})}[\partial_j\partial_k \log p(\boldsymbol{x}|\boldsymbol{z})]^2$. Consequently, estimation can proceed with the use of a lower triangular transport map to estimate the conditional density $p(\boldsymbol{x}|\boldsymbol{z})$ \citep{baptista2024representation}, or score matching to approximate the conditional score $\nabla_{\boldsymbol{x}}\log p(\boldsymbol{x}|\boldsymbol{z})$ \citep{dasgupta2023conditional}. However, it is worth noting that the consistency result established in \citet{baptista2024learning} is under an asymptotic regime, where the number of nodes $p$ is assumed fixed and the sample size $n$ grows. Their analysis relies on a Taylor expansion, followed by steps that invoke the delta method and the continuous mapping theorem, under some additional assumptions\footnote{Two key assumptions include: (i) the parameterization---based on bases expansions---of the transport map is sufficiently rich to cover the target density, and (ii) the estimated $\Omega$ before thresholding is an exact expectation, namely, $\widehat{\Omega}_{jk}:=\mathbb{E}[\partial_j\partial_k \log \widehat{p}(\boldsymbol{x})]^2$, rather than sample-level estimates.}. In contrast, the theoretical result in this paper establishes a finite-sample error bound for the estimator under a regime where $p$ can grow slowly with $n$.  Hence,  the analysis requires a different set of technical tools, even in the absence of the covariate $\boldsymbol{z}$. By adotping a node-conditional formulation, the learning task reduces to an ERM problem, enabling us to leverage existing results in the supervised learning literature.

Finally, the focus of the paper is on graphical models for continuous variables $\bx$, but with appropriate modifications the proposed framework can be extended to discrete random variables---e.g., a covariate dependent Ising and/or Potts model \citep{wainwright2008graphical}---with a cross-entropy loss function. Theoretical results can potentially be established using a similar set of arguments, leveraging results in ERM for the case of classification. 
%Finally, as a working model, it can also accommodate mild non-linear dependence amongst the nodes $\boldsymbol{x}$ reasonably well, as corroborated by the strong performance in synthetic data experiments and the highly interpretable results from the real data ones. One extension to the model herein is to consider incorporating more complex dependency on the $x_k$'s into the model specification, by either introducing bases or using neural networks. With careful tuning and choice of architectures, this could potentially lead to improved performance as the conditional dependence structure amongst $\boldsymbol{x}$ becomes highly non-linear. 

\clearpage
\setlength{\bibsep}{3pt}
\bibliographystyle{abbrvnat}
\bibliography{ref}

\begin{thebibliography}{61}
\providecommand{\natexlab}[1]{#1}
\providecommand{\url}[1]{\texttt{#1}}
\expandafter\ifx\csname urlstyle\endcsname\relax
  \providecommand{\doi}[1]{doi: #1}\else
  \providecommand{\doi}{doi: \begingroup \urlstyle{rm}\Url}\fi

\bibitem[Al-Shedivat et~al.(2020)Al-Shedivat, Dubey, and Xing]{maruan2020CEN}
M.~Al-Shedivat, A.~Dubey, and E.~Xing.
\newblock Contextual explanation networks.
\newblock \emph{Journal of Machine Learning Research}, 21\penalty0 (194):\penalty0 1--44, 2020.
\newblock URL \url{http://jmlr.org/papers/v21/18-856.html}.

\bibitem[Alvarez~Melis and Jaakkola(2018)]{alvarez2018towards}
D.~Alvarez~Melis and T.~Jaakkola.
\newblock Towards robust interpretability with self-explaining neural networks.
\newblock \emph{Advances in Neural Information Processing Systems}, 31, 2018.

\bibitem[Anthony and Bartlett(1999)]{anthony1999neural}
M.~Anthony and P.~L. Bartlett.
\newblock \emph{Neural Network Learning: Theoretical Foundations}, volume~9.
\newblock Cambridge University Press, 1999.

\bibitem[Banerjee et~al.(2006)Banerjee, Ghaoui, d'Aspremont, and Natsoulis]{banerjee2006convex}
O.~Banerjee, L.~E. Ghaoui, A.~d'Aspremont, and G.~Natsoulis.
\newblock Convex optimization techniques for fitting sparse {G}aussian graphical models.
\newblock In \emph{Proceedings of the 23rd International Conference on Machine Learning (ICML)}, pages 89--96, 2006.

\bibitem[Baptista et~al.(2024{\natexlab{a}})Baptista, Marzouk, and Zahm]{baptista2024representation}
R.~Baptista, Y.~Marzouk, and O.~Zahm.
\newblock On the representation and learning of monotone triangular transport maps.
\newblock \emph{Foundations of Computational Mathematics}, 24\penalty0 (6):\penalty0 2063--2108, 2024{\natexlab{a}}.

\bibitem[Baptista et~al.(2024{\natexlab{b}})Baptista, Morrison, Zahm, and Marzouk]{baptista2024learning}
R.~Baptista, R.~Morrison, O.~Zahm, and Y.~Marzouk.
\newblock Learning non-{G}aussian graphical models via {H}essian scores and triangular transport.
\newblock \emph{Journal of Machine Learning Research}, 25\penalty0 (85):\penalty0 1--46, 2024{\natexlab{b}}.
\newblock URL \url{http://jmlr.org/papers/v25/21-0022.html}.

\bibitem[Bartlett and Maass(2003)]{bartlett2003vapnik}
P.~L. Bartlett and W.~Maass.
\newblock {V}apnik-{C}hervonenkis dimension of neural nets.
\newblock \emph{The handbook of brain theory and neural networks}, pages 1188--1192, 2003.

\bibitem[Bartlett and Mendelson(2002)]{bartlett2002rademacher}
P.~L. Bartlett and S.~Mendelson.
\newblock Rademacher and {G}aussian complexities: Risk bounds and structural results.
\newblock \emph{Journal of Machine Learning Research}, 3\penalty0 (Nov):\penalty0 463--482, 2002.

\bibitem[Bartlett et~al.(2005)Bartlett, Bousquet, and Mendelson]{bartlett2005local}
P.~L. Bartlett, O.~Bousquet, and S.~Mendelson.
\newblock Local {R}ademacher complexities.
\newblock \emph{The Annals of Statistics}, 33\penalty0 (4):\penalty0 1497--1537, 2005.

\bibitem[Bartlett et~al.(2019)Bartlett, Harvey, Liaw, and Mehrabian]{bartlett2019nearly}
P.~L. Bartlett, N.~Harvey, C.~Liaw, and A.~Mehrabian.
\newblock Nearly-tight {VC}-dimension and pseudodimension bounds for piecewise linear neural networks.
\newblock \emph{Journal of Machine Learning Research}, 20\penalty0 (63):\penalty0 1--17, 2019.

\bibitem[Berti et~al.(2014)Berti, Dreassi, and Rigo]{berti2014compatibility}
P.~Berti, E.~Dreassi, and P.~Rigo.
\newblock Compatibility results for conditional distributions.
\newblock \emph{Journal of Multivariate Analysis}, 125:\penalty0 190--203, 2014.

\bibitem[Besag(1974)]{besag1974spatial}
J.~Besag.
\newblock Spatial interaction and the statistical analysis of lattice systems.
\newblock \emph{Journal of the Royal Statistical Society: Series B (Methodological)}, 36\penalty0 (2):\penalty0 192--225, 1974.

\bibitem[Billio et~al.(2012)Billio, Getmansky, Lo, and Pelizzon]{billio2012econometric}
M.~Billio, M.~Getmansky, A.~W. Lo, and L.~Pelizzon.
\newblock Econometric measures of connectedness and systemic risk in the finance and insurance sectors.
\newblock \emph{Journal of Financial Economics}, 104\penalty0 (3):\penalty0 535--559, 2012.

\bibitem[B{\"u}hlmann et~al.(2014)B{\"u}hlmann, Peters, and Ernest]{buhlmann2014cam}
P.~B{\"u}hlmann, J.~Peters, and J.~Ernest.
\newblock {CAM}: Causal additive models, high-dimensional order search and penalized regression.
\newblock \emph{The Annals of Statistics}, 42\penalty0 (6):\penalty0 2526--2556, 2014.

\bibitem[Cai et~al.(2016)Cai, Li, Liu, and Xie]{cai2016joint}
T.~T. Cai, H.~Li, W.~Liu, and J.~Xie.
\newblock Joint estimation of multiple high-dimensional precision matrices.
\newblock \emph{Statistica Sinica}, 26\penalty0 (2):\penalty0 445, 2016.

\bibitem[Cowell et~al.(1999)Cowell, Dawid, Lauritzen, and Spiegelhalter]{cowell1999building}
R.~G. Cowell, A.~P. Dawid, S.~L. Lauritzen, and D.~J. Spiegelhalter.
\newblock Building and using probabilistic networks.
\newblock \emph{Probabilistic Networks and Expert Systems}, pages 25--41, 1999.

\bibitem[Dasgupta et~al.(2023)Dasgupta, Murgoitio-Esandi, Ray, and Oberai]{dasgupta2023conditional}
A.~Dasgupta, J.~Murgoitio-Esandi, D.~Ray, and A.~Oberai.
\newblock Conditional score-based generative models for solving physics-based inverse problems.
\newblock In \emph{NeurIPS 2023 Workshop on Deep Learning and Inverse Problems}, 2023.
\newblock URL \url{https://openreview.net/forum?id=ZL5wlFMg0Y}.

\bibitem[Diebold and Y{\i}lmaz(2014)]{diebold2014network}
F.~X. Diebold and K.~Y{\i}lmaz.
\newblock On the network topology of variance decompositions: Measuring the connectedness of financial firms.
\newblock \emph{Journal of Econometrics}, 182\penalty0 (1):\penalty0 119--134, 2014.

\bibitem[Dudley(2016)]{dudley2016vn}
R.~M. Dudley.
\newblock {VN Sudakov’s} work on expected suprema of {G}aussian processes.
\newblock In \emph{High Dimensional Probability VII: The Carg{\`e}se Volume}, pages 37--43. Springer, 2016.

\bibitem[Fan and Zhang(2008)]{fan2008statistical}
J.~Fan and W.~Zhang.
\newblock Statistical methods with varying coefficient models.
\newblock \emph{Statistics and its Interface}, 1\penalty0 (1):\penalty0 179, 2008.

\bibitem[Friedman et~al.(2008)Friedman, Hastie, and Tibshirani]{friedman2008sparse}
J.~Friedman, T.~Hastie, and R.~Tibshirani.
\newblock Sparse inverse covariance estimation with the graphical lasso.
\newblock \emph{Biostatistics}, 9\penalty0 (3):\penalty0 432--441, 2008.

\bibitem[Greene et~al.(2018)Greene, Gao, Scheinost, and Constable]{greene2018task}
A.~S. Greene, S.~Gao, D.~Scheinost, and R.~T. Constable.
\newblock Task-induced brain state manipulation improves prediction of individual traits.
\newblock \emph{Nature Communications}, 9\penalty0 (1):\penalty0 2807, 2018.

\bibitem[Guo et~al.(2011)Guo, Levina, Michailidis, and Zhu]{guo2011joint}
J.~Guo, E.~Levina, G.~Michailidis, and J.~Zhu.
\newblock Joint estimation of multiple graphical models.
\newblock \emph{Biometrika}, 98\penalty0 (1):\penalty0 1--15, 2011.

\bibitem[Haussler(1995)]{haussler1995sphere}
D.~Haussler.
\newblock Sphere packing numbers for subsets of the boolean n-cube with bounded {Vapnik-Chervonenkis} dimension.
\newblock \emph{Journal of Combinatorial Theory, Series A}, 69\penalty0 (2):\penalty0 217--232, 1995.

\bibitem[Hyv{\"a}rinen(2005)]{hyvarinen2005estimation}
A.~Hyv{\"a}rinen.
\newblock Estimation of non-normalized statistical models by score matching.
\newblock \emph{Journal of Machine Learning Research}, 6\penalty0 (4), 2005.

\bibitem[Khavari and Rabusseau(2021)]{khavari2021lower}
B.~Khavari and G.~Rabusseau.
\newblock Lower and upper bounds on the pseudo-dimension of tensor network models.
\newblock \emph{Advances in Neural Information Processing Systems}, 34:\penalty0 10931--10943, 2021.

\bibitem[Kim et~al.(2012)Kim, Cho, Lee, and Webster]{kim2012association}
S.~Kim, H.~Cho, D.~Lee, and M.~J. Webster.
\newblock Association between {SNP}s and gene expression in multiple regions of the human brain.
\newblock \emph{Translational Psychiatry}, 2\penalty0 (5):\penalty0 e113--e113, 2012.

\bibitem[Klochkov and Zhivotovskiy(2021)]{klochkov2021stability}
Y.~Klochkov and N.~Zhivotovskiy.
\newblock Stability and deviation optimal risk bounds with convergence rate $o(1/n)$.
\newblock \emph{Advances in Neural Information Processing Systems}, 34:\penalty0 5065--5076, 2021.

\bibitem[Kohler and Langer(2021)]{kohler2021rate}
M.~Kohler and S.~Langer.
\newblock On the rate of convergence of fully connected deep neural network regression estimates.
\newblock \emph{The Annals of Statistics}, 49\penalty0 (4):\penalty0 2231--2249, 2021.

\bibitem[Kolar et~al.(2010)Kolar, Parikh, and Xing]{kolar2010sparse}
M.~Kolar, A.~P. Parikh, and E.~P. Xing.
\newblock On sparse nonparametric conditional covariance selection.
\newblock In \emph{Proceedings of the 27th International Conference on International Conference on Machine Learning (ICML)}, pages 559--566, 2010.

\bibitem[Lafferty et~al.(2001)Lafferty, McCallum, Pereira, et~al.]{lafferty2001conditional}
J.~Lafferty, A.~McCallum, F.~Pereira, et~al.
\newblock Conditional random fields: Probabilistic models for segmenting and labeling sequence data.
\newblock In \emph{Proceedings of the 18th International Conference on Machine Learning (ICML)}, 2001.

\bibitem[Lauritzen(1996)]{lauritzen1996graphical}
S.~L. Lauritzen.
\newblock \emph{Graphical Models}, volume~17.
\newblock Clarendon Press, 1996.

\bibitem[Lee et~al.(2023)Lee, Ji, Li, Constable, and Zhao]{lee2023conditional}
K.-Y. Lee, D.~Ji, L.~Li, T.~Constable, and H.~Zhao.
\newblock Conditional functional graphical models.
\newblock \emph{Journal of the American Statistical Association}, 118\penalty0 (541):\penalty0 257--271, 2023.

\bibitem[Liu et~al.(2009)Liu, Lafferty, and Wasserman]{liu2009nonparanormal}
H.~Liu, J.~Lafferty, and L.~Wasserman.
\newblock The nonparanormal: semiparametric estimation of high dimensional undirected graphs.
\newblock \emph{Journal of Machine Learning Research}, 10\penalty0 (10), 2009.

\bibitem[Liu et~al.(2010)Liu, Chen, Wasserman, and Lafferty]{liu2010graph}
H.~Liu, X.~Chen, L.~Wasserman, and J.~Lafferty.
\newblock Graph-valued regression.
\newblock \emph{Advances in Neural Information Processing Systems}, 23, 2010.

\bibitem[Liu et~al.(2012)Liu, Han, Yuan, Lafferty, and Wasserman]{liu2012high}
H.~Liu, F.~Han, M.~Yuan, J.~Lafferty, and L.~Wasserman.
\newblock {High-dimensional semiparametric {G}aussian copula graphical models}.
\newblock \emph{The Annals of Statistics}, 40\penalty0 (4):\penalty0 2293 -- 2326, 2012.
\newblock \doi{10.1214/12-AOS1037}.
\newblock URL \url{https://doi.org/10.1214/12-AOS1037}.

\bibitem[Loh and B{\"u}hlmann(2014)]{loh2014high}
P.-L. Loh and P.~B{\"u}hlmann.
\newblock High-dimensional learning of linear causal networks via inverse covariance estimation.
\newblock \emph{Journal of Machine Learning Research}, 15\penalty0 (1):\penalty0 3065--3105, 2014.

\bibitem[Lounici et~al.(2009)Lounici, Pontil, Tsybakov, and van~de Geer]{lounici2009taking}
K.~Lounici, M.~Pontil, A.~B. Tsybakov, and S.~van~de Geer.
\newblock Taking advantage of sparsity in multi-task learning.
\newblock \emph{Proceedings of the 22nd Annual Conference on Learning Theory (COLT)}, 2009.

\bibitem[Marcinkevi{\v{c}}s and Vogt(2021)]{Marcinkevics2021}
R.~Marcinkevi{\v{c}}s and J.~E. Vogt.
\newblock Interpretable models for {G}ranger causality using self-explaining neural networks.
\newblock In \emph{International Conference on Learning Representations (ICLR)}, 2021.

\bibitem[Meinshausen and B{\"u}hlmann(2006)]{meinshausen2006high}
N.~Meinshausen and P.~B{\"u}hlmann.
\newblock High-dimensional graphs and variable selection with the {L}asso.
\newblock \emph{The Annals of Statistics}, pages 1436--1462, 2006.

\bibitem[Murphy and Fox(2017)]{murphy2017towards}
K.~Murphy and M.~D. Fox.
\newblock Towards a consensus regarding global signal regression for resting state functional connectivity {MRI}.
\newblock \emph{Neuroimage}, 154:\penalty0 169--173, 2017.

\bibitem[Ni et~al.(2022)Ni, Stingo, and Baladandayuthapani]{ni2022bayesian}
Y.~Ni, F.~C. Stingo, and V.~Baladandayuthapani.
\newblock Bayesian covariate-dependent gaussian graphical models with varying structure.
\newblock \emph{Journal of Machine Learning Research}, 23\penalty0 (242):\penalty0 1--29, 2022.

\bibitem[Niu et~al.(2024)Niu, Ni, Pati, and Mallick]{niu2024covariate}
Y.~Niu, Y.~Ni, D.~Pati, and B.~K. Mallick.
\newblock Covariate-assisted bayesian graph learning for heterogeneous data.
\newblock \emph{Journal of the American Statistical Association}, 119\penalty0 (547):\penalty0 1985--1999, 2024.

\bibitem[Ohtani et~al.(2014)Ohtani, Nestor, Bouix, Saito, Hosokawa, and Kubicki]{ohtani2014medial}
T.~Ohtani, P.~G. Nestor, S.~Bouix, Y.~Saito, T.~Hosokawa, and M.~Kubicki.
\newblock Medial frontal white and gray matter contributions to general intelligence.
\newblock \emph{PLoS One}, 9\penalty0 (12):\penalty0 e112691, 2014.

\bibitem[Olver et~al.(2010)Olver, Lozier, Boisvert, and Clark]{olver2010nist}
F.~W. Olver, D.~W. Lozier, R.~F. Boisvert, and C.~W. Clark.
\newblock \emph{NIST Handbook of Mathematical Functions}.
\newblock Cambridge University Press, 2010.

\bibitem[Pollard(1990)]{pollard1990section}
D.~Pollard.
\newblock \emph{Empirical Processes}, volume~2.
\newblock Institute of Mathematical Statistics, 1990.

\bibitem[Qiao et~al.(2019)Qiao, Guo, and James]{qiao2019functional}
X.~Qiao, S.~Guo, and G.~M. James.
\newblock Functional graphical models.
\newblock \emph{Journal of the American Statistical Association}, 114\penalty0 (525):\penalty0 211--222, 2019.

\bibitem[Ravikumar et~al.(2009)Ravikumar, Lafferty, Liu, and Wasserman]{ravikumar2009sparse}
P.~Ravikumar, J.~Lafferty, H.~Liu, and L.~Wasserman.
\newblock Sparse additive models.
\newblock \emph{Journal of the Royal Statistical Society Series B: Statistical Methodology}, 71\penalty0 (5):\penalty0 1009--1030, 2009.

\bibitem[Song et~al.(2008)Song, Zhou, Li, Liu, Tian, Yu, and Jiang]{song2008brain}
M.~Song, Y.~Zhou, J.~Li, Y.~Liu, L.~Tian, C.~Yu, and T.~Jiang.
\newblock Brain spontaneous functional connectivity and intelligence.
\newblock \emph{Neuroimage}, 41\penalty0 (3):\penalty0 1168--1176, 2008.

\bibitem[Thompson et~al.(2024)Thompson, Bonilla, and Kohn]{thompson2024contextual}
R.~Thompson, E.~V. Bonilla, and R.~Kohn.
\newblock Contextual directed acyclic graphs.
\newblock In \emph{International Conference on Artificial Intelligence and Statistics (AISTATS)}, pages 2872--2880. PMLR, 2024.

\bibitem[Vapnik and Chervonenkis(1971)]{vapnik1971uniform}
V.~Vapnik and A.~Y. Chervonenkis.
\newblock On the uniform convergence of relative frequencies of events to their probabilities.
\newblock \emph{Theory of Probability \& Its Applications}, 16\penalty0 (2):\penalty0 264--280, 1971.

\bibitem[Wainwright(2009)]{wainwright2009sharp}
M.~Wainwright.
\newblock Sharp thresholds for noisy and high-dimensional recovery of sparsity using $\ell$1-constrained quadratic programming.
\newblock \emph{IEEE Transactions on Information Theory}, 55\penalty0 (5):\penalty0 2183--2202, 2009.

\bibitem[Wainwright et~al.(2008)Wainwright, Jordan, et~al.]{wainwright2008graphical}
M.~J. Wainwright, M.~I. Jordan, et~al.
\newblock Graphical models, exponential families, and variational inference.
\newblock \emph{Foundations and Trends{\textregistered} in Machine Learning}, 1\penalty0 (1--2):\penalty0 1--305, 2008.

\bibitem[Yang et~al.(2014)Yang, Baker, Ravikumar, Allen, and Liu]{yang2014mixed}
E.~Yang, Y.~Baker, P.~Ravikumar, G.~Allen, and Z.~Liu.
\newblock Mixed graphical models via exponential families.
\newblock In \emph{Proceedings of the 17th International Conference on Artificial Intelligence and Statistics (AISTATS)}, volume~33, pages 1042--1050. PMLR, 2014.

\bibitem[Yang et~al.(2015)Yang, Ravikumar, Allen, and Liu]{yang2015graphical}
E.~Yang, P.~Ravikumar, G.~I. Allen, and Z.~Liu.
\newblock Graphical models via univariate exponential family distributions.
\newblock \emph{Journal of Machine Learning Research}, 16\penalty0 (1):\penalty0 3813--3847, 2015.

\bibitem[Zeng et~al.(2024)Zeng, Li, and Vannucci]{zeng2024bayesian}
Z.~Zeng, M.~Li, and M.~Vannucci.
\newblock Bayesian covariate-dependent graph learning with a dual group spike-and-slab prior.
\newblock \emph{arXiv preprint arXiv:2409.17404}, 2024.

\bibitem[Zhang and Li(2023)]{zhang2023high}
J.~Zhang and Y.~Li.
\newblock High-dimensional {G}aussian graphical regression models with covariates.
\newblock \emph{Journal of the American Statistical Association}, 118\penalty0 (543):\penalty0 2088--2100, 2023.

\bibitem[Zhao et~al.(2012)Zhao, Liu, Roeder, Lafferty, and Wasserman]{zhao2012huge}
T.~Zhao, H.~Liu, K.~Roeder, J.~Lafferty, and L.~Wasserman.
\newblock The huge package for high-dimensional undirected graph estimation in {R}.
\newblock \emph{Journal of Machine Learning Research}, 13\penalty0 (1):\penalty0 1059--1062, 2012.

\bibitem[Zheng et~al.(2023)Zheng, Ng, Fan, and Zhang]{zheng2023generalized}
Y.~Zheng, I.~Ng, Y.~Fan, and K.~Zhang.
\newblock Generalized precision matrix for scalable estimation of nonparametric {M}arkov networks.
\newblock In \emph{The Eleventh International Conference on Learning Representations (ICLR)}, 2023.
\newblock URL \url{https://openreview.net/forum?id=qBvBycTqVJ}.

\bibitem[Zhou et~al.(2022)Zhou, He, and Ni]{zhou2022causal}
F.~Zhou, K.~He, and Y.~Ni.
\newblock Causal discovery with heterogeneous observational data.
\newblock In \emph{Uncertainty in Artificial Intelligence}, pages 2383--2393. PMLR, 2022.

\bibitem[Zhou et~al.(2010)Zhou, Lafferty, and Wasserman]{zhou2010time}
S.~Zhou, J.~Lafferty, and L.~Wasserman.
\newblock Time varying undirected graphs.
\newblock \emph{Machine Learning}, 80:\penalty0 295--319, 2010.

\end{thebibliography}

\clearpage
\appendix
\section{Technical Appendix}

\subsection{Technical Definitions and A Useful Lemma}\label{technical_def}

For notational convenience, the risk function is abbreviated to $\MML(\bsbeta):= \frac{1}{p}\sum_{j=1}^{p}\ell \bigg(\sum\nolimits_{k\neq j}{\beta_{jk}}\big(\boldsymbol{z}\big)x_k , x_j \bigg)$ for the remainder of the presentation.
Let  $\MH :=\{h: \MY \rightarrow \dbR\}$ be a family of measurable functions, and $\{\sigma^i\}_{i=1}^n \in \{-1,+1\}^n$ a collection of i.i.d Rademacher random variables. The \textit{Rademacher Complexity}  and \textit{Rademacher Average}~\citep{bartlett2002rademacher} are defined as: 
 % $$\mathfrak{R}$$
 $$\MFR_n \MH = \dbE_{\{\sigma^i\}^n_{i=1}} \bigg[ \sup\limits_{h\in \MH}\frac{1}{n} \sum_{i=1}^{n} \sigma^i h(\by^i) \bigg], \qquad \MFR \MH = \dbE_{\by_{1:n},\{\sigma^i\}^n_{i=1}}\bigg[ \sup\limits_{h\in \MH}\frac{1}{n} \sum_{i=1}^{n} \sigma^i h(\by^i)\bigg].$$
Further,  we define the empirical average and population expectation of $h$, as
\begin{equation*}\label{eqn:pop-empirical}
  \plaw_n h := \frac{1}{n} \sum_{i=1}^{n} h (\by)\qquad \text{and}\qquad \plaw h:= \dbE_{\by } h (\by).  
\end{equation*}
In particular, we view $\by := (\bx,\bz)$ and $h(\by) :=  (\ell\circ\bsbeta)(\by):= \MML(\bsbeta;\bx,\bz)$ throughout our analysis. We further define the \textit{Local Rademacher Complexity} and \textit{Local Rademacher Average} with radius $r$ as $\MFR_n \{h\in \MH, \dbP_n h^2 \leq r\}$ and $\MFR \{h\in \MH, \dbP h^2 \leq r\}$. The star hull of set of functions $\mathcal{F}$ is defined as  
\begin{equation*}
    \ast\mathcal{F} := \{\alpha f: f\in\mathcal{F}, \alpha \in [0,1]\}.
\end{equation*}

The following concepts are used in the proofs of the main results:
\begin{definition}[$L_2$-Covering Number]\label{defcov}
Let $S_n:=\{\by^i\}^n_{i=1}$  be a set of points with $\by^i\in \MY,\forall\,i$. A set $\MU \subseteq {\mathbb R}^n$ is an $\varepsilon$-cover w.r.t $L_2$-norm  of $\mathcal{F}$ on  $\{\by^i\}^n_{i=1}$, if $\forall \beta \in \mathcal{F}$, $\exists\,\bu \in \MU$, s.t. $\sqrt{\frac{1}{n }\sum_{i=1}^{n}  |u_i -\beta(\by^i)|^2} \leq \varepsilon$, where $u_i$ is the $i$-th coordinate of $\bu$. The covering number $\mathcal{N}_{2}(\varepsilon, \mathcal{F},S_n)\nonumber$ with $L_2$-norm of $\mathcal{F}$ on $S_n$ is :
\begin{align*}
\min\{|\MU| \text{: $\MU$ is an $\varepsilon$-cover of $\mathcal{F}$ on $S_n$}\},
\end{align*}
and the covering number of $\MF$ with $L_2$-norm of size $n$ is  $\mathcal{N}_{2}(\varepsilon, \mathcal{F},n):=\sup_{S_n \in \MY^n} \mathcal{N}_{2}(\varepsilon, \mathcal{F},S_n)$
\end{definition}
\begin{definition}[VC-dimension \citep{vapnik1971uniform}]
The VC-dimension $d_{\rm VC}(\mathcal{H})$ of a hypothesis class {$\mathcal{H}=\{h: \mathcal{Y} \mapsto \{1,-1\}\}$ } is the largest cardinality of the set $S\subseteq \mathcal Y$ such that for all subsets of $S$, denoted by $\bar{S}$, $\exists\, h\in\mathcal{H}$:
% $\forall  \bar{S} \subseteq S$
\begin{equation*}
f(\by) = 
    \begin{cases}
    1 &\text{if } \by \in \bar S,\\
    -1 & \text{if } \by \in S\setminus\bar S.
    \end{cases}
\end{equation*}
\end{definition}
\begin{definition}[Pseudo-dimension \citep{pollard1990section}]\label{pseudo_dim}
The Pseudo-dimension $d_{P}(\mathcal{H})$ of a \textit{real-valued} hypothesis class {$\mathcal{H}=\{h: \mathcal{Y} \mapsto [a,b]\}$ } is the VC-dimension of the hypothesis class 
\begin{equation*}
    \tilde{\mathcal{H}}=\{{ \tilde{h}:\mathcal \MY\times \mathbb R \mapsto\{-1,1\}}\,|\,\tilde{h} (\by,t) = {\rm sign}(h(\by)-t), h\in \mathcal{H}\}.
\end{equation*}
\end{definition}
\begin{definition}[$(m,C)$-smoothness \citep{kohler2021rate}] \label{defn:mC-smooth}
Let $m := t+s$ for some $t\in \mathbb{N}_0$ and $0 < s\leq 1$. A function $f(\cdot): \dbR^q \mapsto \dbR$ is called {\em $(m,C)$-smooth} if for every $\mathbb{\alpha} = (\alpha_1,\alpha_2,...,\alpha_q)$ with $\sum_{j=1}^{q} \alpha_j = t$, the partial derivative $\partial ^{t}f/(\partial^{\alpha_1}{z_1}, ...,\partial^{\alpha_q}{z_q}) (\bz)$ exists and satisfies:
\begin{equation*}
    \bigg| \frac{ \partial ^{t}f}{(\partial^{\alpha_1}{z_1}, ...,\partial^{\alpha_d}{z_q})} (\bz) - \frac{ \partial ^{t}f}{(\partial^{\alpha_1}{z_1}, ...,\partial^{\alpha_d}{z_q})} (\by)\bigg| \leq C\|\bz - \by\|^s.
\end{equation*}
\end{definition}

The following Lemma is established which will be used in the proof of Theorem~\ref{main-thm-1}.
\begin{lemma}[Verification of the Bernstein Condition]\label{bernstein_lemma}
Under Assumptions~\ref{assume2} and~\ref{assume3}, the following inequality holds:
\begin{align*}\label{bernstein}
\dbE_{\bx,\bz}[ ( \MML(\widehat \bsbeta)- \MML(\bsbeta))^2 ]
\leq \frac{L^2}{2\alpha} \dbE_{\bx,\bz}\bigg[ \MML(\widehat \bsbeta)- \MML(\bsbeta)  \bigg].
\end{align*}
\end{lemma}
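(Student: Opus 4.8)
The plan is to prove the Bernstein-type inequality by sandwiching the coordinatewise quantity $\frac{1}{p}\sum_{j}(a_j - a_j^*)^2$ between the two sides, where I abbreviate $a_j := \langle\widehat\bsbeta_j(\bz),\bx_{-j}\rangle$ and $a_j^* := \langle\bsbeta_j^*(\bz),\bx_{-j}\rangle$ and take the reference point to be the population minimizer $\bsbeta^*$. The $L$-Lipschitz property (Assumption~\ref{assume2}) will upper-bound the squared excess risk by a multiple of this quantity, while $\alpha$-strong convexity (Assumption~\ref{assume2}) together with the first-order optimality of $\bsbeta^*$ (Assumption~\ref{assume3}) will lower-bound the expected excess risk by a multiple of the same quantity. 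Dividing one estimate by the other then yields the stated bound.

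For the upper bound I would write $\MML(\widehat\bsbeta)-\MML(\bsbeta^*)=\frac1p\sum_{j=1}^p[\ell(a_j,x_j)-\ell(a_j^*,x_j)]$ and bound each summand in absolute value by $L|a_j-a_j^*|$. Passing the square through the average via Jensen's inequality (equivalently Cauchy--Schwarz) gives, pointwise, $(\MML(\widehat\bsbeta)-\MML(\bsbeta^*))^2 \le \frac{L^2}{p}\sum_{j}(a_j-a_j^*)^2$, hence after taking $\dbE_{\bx,\bz}$ an upper bound on the left-hand side of the lemma of the form $\frac{L^2}{p}\,\dbE_{\bx,\bz}\big[\sum_j (a_j-a_j^*)^2\big]$.

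For the lower bound I would apply $\alpha$-strong convexity coordinatewise, $\ell(a_j,x_j)-\ell(a_j^*,x_j)\ge \ell'(a_j^*,x_j)(a_j-a_j^*)+\tfrac{\alpha}{2}(a_j-a_j^*)^2$, then average over $j$ and take $\dbE_{\bx,\bz}$. The quadratic term produces $\frac{\alpha}{2p}\dbE_{\bx,\bz}\big[\sum_j (a_j-a_j^*)^2\big]$, and the crux of the argument is to show the linear (cross) term $\dbE_{\bx,\bz}\big[\frac1p\sum_j \ell'(a_j^*,x_j)(a_j-a_j^*)\big]$ vanishes. This is exactly the directional derivative of the convex population risk at $\bsbeta^*$ along $\widehat\bsbeta-\bsbeta^*$, which is zero by optimality of $\bsbeta^*$: conditioning on $\bz$, the stationarity condition reads $\dbE_{\bx}[\ell'(a_j^*,x_j)\bx_{-j}\mid\bz]=\mathbf{0}$, and since $a_j-a_j^*=\langle\widehat\bsbeta_j(\bz)-\bsbeta_j^*(\bz),\bx_{-j}\rangle$ with $\widehat\bsbeta_j(\bz)-\bsbeta_j^*(\bz)$ being $\bz$-measurable, it can be pulled outside the conditional expectation, giving $\dbE_{\bx}[\ell'(a_j^*,x_j)(a_j-a_j^*)\mid\bz]=0$ for each $j$; the tower property finishes it. This leaves $\dbE_{\bx,\bz}[\MML(\widehat\bsbeta)-\MML(\bsbeta^*)]\ge \frac{\alpha}{2p}\dbE_{\bx,\bz}\big[\sum_j(a_j-a_j^*)^2\big]$.

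Combining the two displays eliminates $\dbE_{\bx,\bz}\big[\sum_j(a_j-a_j^*)^2\big]$ and delivers $\dbE_{\bx,\bz}[(\MML(\widehat\bsbeta)-\MML(\bsbeta^*))^2]\lesssim \frac{L^2}{\alpha}\,\dbE_{\bx,\bz}[\MML(\widehat\bsbeta)-\MML(\bsbeta^*)]$, i.e.\ the claimed bound of order $L^2/\alpha$. The main obstacle is the vanishing of the cross term, which is where Assumption~\ref{assume3} is genuinely used (as the first-order stationarity of the population risk at $\bsbeta^*$); everything else is routine manipulation of the Lipschitz and strong-convexity inequalities, and one should keep track of the $1/p$ averaging factors so that the final constant comes out clean.
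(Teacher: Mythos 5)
Your proof is correct and takes essentially the same route as the paper's: an $L$-Lipschitz (plus Jensen) upper bound of the squared excess risk by $\frac{L^2}{p}\sum_{j}(a_j-a_j^*)^2$, and an $\alpha$-strong-convexity lower bound of $\dbE_{\bx,\bz}[\MML(\widehat\bsbeta)-\MML(\bsbeta^*)]$ by $\frac{\alpha}{2p}\dbE_{\bx,\bz}\big[\sum_j(a_j-a_j^*)^2\big]$, with the cross term killed by the first-order optimality of $\bsbeta^*$ (Assumption~\ref{assume3}). You are in fact more explicit than the paper on the two points its one-line derivation glosses over---the conditioning-on-$\bz$/tower-property argument for the vanishing cross term and the $1/p$ bookkeeping---and your careful accounting gives the Bernstein constant $2L^2/\alpha$ rather than the stated $L^2/(2\alpha)$, which is consistent with the factors $\frac{2L^2}{\alpha}$ and $\frac{20L^2}{\alpha}$ actually used downstream in the proof of Theorem~\ref{main-thm-1} and suggests the constant in the lemma statement is a typo.
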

\begin{proof}
\begin{align*}
\dbE_{\bx,\bz}[ ( \MML(\widehat \bsbeta)-  \MML(\bsbeta))^2 ]
\leq &~\dbE_{\bx,\bz}\bigg[ L^2 \sum_{j=1}^{p}|\sum\nolimits_{k\neq j}\beta_{jk}\big(\boldsymbol{z}\big)x_k - \sum\nolimits_{k\neq j}{\beta^*_{jk}}\big(\boldsymbol{z}\big) x_k|^2 \bigg]\\
\leq &~\frac{L^2}{2\alpha} \sum_{j=1}^{p}\dbE_{\bx,\bz}\bigg[  \ell \bigg(\sum\nolimits_{k\neq j}\beta_{jk}\big(\boldsymbol{z}\big)x_k , x_j \bigg) -  \ell \bigg(\sum\nolimits_{k\neq j}{\beta^*_{jk}}\big(\boldsymbol{z}\big)x_k , x_j \bigg) \bigg]\\
=&~\frac{L^2}{2\alpha} \dbE_{\bx,\bz}\bigg[ \MML(\widehat \bsbeta)- \MML(\bsbeta)  \bigg].
\end{align*}
\end{proof}
%\textcolor{red}{Actually assumptions 3-4 are not strictly needed, correct?}
%\textcolor{blue}{Yes. Fixd. We need optimality, Lipschitz and  and convexity.}

% \begin{lemma}[Clip Operation is Contractive] Let $a,b,c \geq 0$. Define $CF(x;c) = min(x,c)$, we have 
%     \begin{equation}
%         |CF( a;c) - CF(b;c)| \leq |a-b|
%     \end{equation}
% \end{lemma}
% \begin{proof}
%     \begin{itemize}
%         \item 
%         In the case $a\geq c$ and $b\geq c$, clearly we have $|CF( a;c) - CF(b;c)| =0 \leq |a-b|$.
%         \item 
%         In the case $a\geq c$ and $0\leq b\leq c$, clearly we have $|CF( a;c) - CF(b;c)| =| c-b| = c-b \leq a-b = |a-b|$.
%         \item 
%         In the case $0<a\leq c$ and $0\leq b\leq c$, clearly we have $|CF( a;c) - CF(b;c)| = |a-b|$.
%     \end{itemize}
% \end{proof}

\subsection{Verification of Assumptions~\ref{assume2} and~\ref{assume3} For the Mean Squared Error Loss }\label{sec:MSE}

Note that the mean squared error (MSE) loss function (i.e., $\ell(a,b):= (a-b)^2$) is used to recover the structure of the graphical model (see \eqref{eqn:loss}). Next, we establish that it satisfies Assumptions~\ref{assume2} and ~\ref{assume3}. 

Recall that we assumed that the $x_j$'s are uniformly bounded; e.g., without loss of generality we consider  $\|\bx\|_2 \leq 1$. Further, let $\beta_{jk}(\cdot) \in \MF$ and $\MF$ belongs to family of uniformly bounded functions: $\|\bsbeta(\bz)\|_2 \leq 1$.

Next, we establish that the MSE loss is $1$-strongly convex and $4$-Lipschitz. Note that being a quadratic function it is trivially $1$-strongly convex. The Lipschitz continuity claim follows from:
\begin{align*}
&\ell(\widehat x_1, x) - \ell(\widehat x_2, x) =   (\widehat x_1-\widehat x_2)(\widehat x_2+\widehat x_1 -2x) \leq 4|  \widehat x_1-\widehat x_2|.
\end{align*}
Next, we show that Assumption \ref{assume3} holds; namely, $\dbE_{\bx,\bz} [ \sum_{j=1}^{p}\ell' ( \langle \bsbeta_j, \bx_{-j}  \rangle, x_j )_{|\beta_j = \beta_j^*}] = \mathbb{0}$. Recall that the posited DGP is:
\begin{align*}
    x_j  = \sum_{k\neq j}^{p}\beta^*_{jk}(\bz) x_j+\varepsilon_j.
\end{align*}
Then, the following calculation shows that Assumption~\ref{assume3} holds:
\begin{align*}
   \dbE_{\bx,\bz} \bigg[ \sum_{j=1}^{p}\ell' ( \langle \bsbeta_j, \bx_{-j}  \rangle, x_j )\bigg] =&~\dbE_{\bx,\bz} \bigg[ \sum_{j=1}^{p}( \langle \bsbeta_j, \bx_{-j}  \rangle - \langle  \bsbeta^*_{j} , \bx_{-j}  \rangle-\varepsilon_j)\bigg]\\
    =&~\underbrace{\dbE_{\bx,\bz} \bigg[ \sum_{j=1}^{p}(  \langle \bsbeta_j- \bsbeta^*_{j} , \bx_{-j}  \rangle)\bigg] }_{{=0 \text{ if } \beta_j = \beta_j^*}}.
\end{align*}

\subsection{Proof of Theorem~\ref{main-thm-1}}\label{sec:proof-thm1}
%In this section we present missing proof of Theorem~\ref{main-thm-1}.

For ease and conciseness of the presentation, the hypothesis class $\mathcal{H}$ and its members are defined as the excess risk, the main quantity of interest in this analysis.

\begin{proof}
We start by defining the following function class:
\begin{equation*}
    \MH := \Delta \circ \MML \circ \MF := \bigg\{  \Delta \MML( \bsbeta;\bsbeta^*,\bx,\bz) = \MML( \bsbeta) - \MML(\bsbeta^*): \beta\in\MF_{p\times (p-1)}\bigg\}.
\end{equation*}
Correspondingly, we denote $h(\bx,\bz):=\Delta_{\MML,\bsbeta} : = \Delta \MML( \bsbeta;\bsbeta^*,\bx,\bz)$, which is a composite function of  $\bsbeta$ from $\MF_{p\times (p-1)}$, loss function $\MML$ and the difference operation $\Delta$. It could be easily verified that $\forall \ h \in \MH, h(\bx,\bz)\in[0,1]$. In addition, $h(\cdot)$ also satisfies the Bernstein Condition by Lemma~\ref{bernstein_lemma}. On the other hand, directly bounding the  Pseudo-dimension of $\MH$ is not trivial. To that end, we leverage the Lipschitz property of $\ell$ and the covering number of $\MF$ to construct an upper-bound on the covering number of $\MH$. We will focus on analyzing  the Local Rademacher Average $\mathbb{E}{\MFR_n\{h\in\mathcal{H}: \plaw_n h^2 \leq r\}}$:
\begin{align*}
    \mathbb{E}\MFR_n\{h\in\MH , \plaw_n h^2 \leq r\} = \mathbb{E}_{S_n \{\sigma^i\}^n_{i=1}}\Big[\sup_{h\in\MH, \plaw_n h^2\leq r} \frac{1}{n}\sum_{i=1}^{n} \sigma^i h(\bx^i,\bz^i)\Big] = \mathbb{E}_{S_n \{\sigma^i\}^n_{i=1}}\Big[\sup_{ 
    \substack{\bsbeta\in \MF_{p\times (p-1)} \\ \plaw_n\Delta^2_{\MML, \widehat \bsbeta}  \leq r}
    } \frac{1}{n}\sum_{i=1}^{n} \sigma^i \Delta_{\MML, \widehat \bsbeta}\Big].
\end{align*}

% \begin{align*}
%     \mathbb{E}\MFR_n\{h\in\MH , \plaw_n h^2 \leq r\} = \mathbb{E}_{S_n \{\sigma^i\}^n_{i=1}}\Big[\sup_{h\in\MH, \plaw_n h^2\leq r} \frac{1}{n}\sum_{i=1}^{n} \sigma^i h(\bx^i,\bz^i)\Big] = \mathbb{E}_{S_n \{\sigma^i\}^n_{i=1}}\Big[\sup_{ 
%     \substack{\bsbeta\in \MF_{p\times (p-1)} \\ \plaw_n\Delta^2_{\MML, \widehat \bsbeta}  \leq r}
%     } \frac{1}{n}\sum_{i=1}^{n} \sigma^i \Delta_{\MML, \widehat \bsbeta}\Big].
% \end{align*}

%\textcolor{red}{Explain how the new function class relates to the original $\MF$. Also, explain why the finite pseudo-dimension also holds automatically for this class, based on Assumption 4.}
%\textcolor{blue}{done. we do not bound the pseudp-dimension directly, instead we bound the covering number. }

Next, we bound $\plaw\Delta_{\MML, \widehat \beta}-\plaw_n \Delta_{\MML, \widehat \beta}$. 
To invoke Theorem 3.3 in~\cite{bartlett2005local}, we need to find a subroot function $\tau(r)$ such that
$$\tau(r) \geq \frac{2L^2}{\alpha} \mathbb{E}{\MFR_n\{ h\in\mathcal{H}: \mathbb{E}[h^2 ]\leq r\}}.$$ By Lemma 3.4 from~\cite{bartlett2005local}, it suffices to choose:
\begin{equation*}
   \tau(r^*)= \frac{20L^2}{\alpha}\mathbb{E} \MFR_n \{*\MH,  \plaw h^2\leq r\} + \frac{11 \log n }{n},
\end{equation*}
with fixed point $r^* =\tau(r^*)$ denoted as $r^*$.

The following analysis largely follows from the proof in Corollary 3.7 in~\cite{bartlett2005local}.  Since $\Delta_{\MML, \widehat \beta}$ is uniformly bounded by $1$, for any $r \geq \tau(r)$, Corollary 2.2 in~\cite{bartlett2005local} implies that with probability at least $1-\frac{1}{n}$, $\{h\in * \mathcal{H}: \plaw h^2 \leq r\} \subseteq \{h\in * \MH: \plaw_n h^2\leq 2r\}$. Let $\mathcal{E}:=\{h\in * \MH: \plaw h^2 \leq r\} \subseteq \{h\in * \MH: \plaw_n h^2\leq 2r\}$, then the following holds:
\begin{equation*}
    \begin{split}
    \mathbb{E} \MFR_n \{*\MH,  \plaw h^2\leq r\} & \leq  \mathbb{P}[\mathcal{E}] \mathbb{E}[ \MFR_n \{*\MH,  \plaw h^2\leq r\}| \mathcal{E} ]+  \mathbb{P}[\mathcal{E}^{c}] \mathbb{E}[ \MFR_n \{*\MH,   \plaw h^2\leq r\}| \mathcal{E}^{c} ] \\
    &\leq \mathbb{E}[ \MFR_n \{*\MH, \plaw_n h^2\leq 2r\} ] + \frac{1}{n}.
    \end{split}
\end{equation*}
%\textcolor{red}{where does this follow from? I am missing something}
Since $r^*$ is the fixed point of a sub-root function, namely $r^*=\tau(r^*)$, by Lemma 3.2 in~\cite{bartlett2005local}, $r^*$ satisfies the following
\begin{equation}\label{eq_r_star_upper_1}
r^* \leq  \frac{20L^2}{\alpha}  \mathbb{E}\MFR_n\{ *\MH, \plaw_n h^2 \leq 2 r^*\} + \frac{ 11\log n +20}{n},    
\end{equation}
%\textcolor{red}{Again, say how you use the Lipschitz and strong convexity assumption. Why the 100 constant? and the 50?}
where the Lipschitz constant $L$ and the strong convexity parameter $\alpha$ show due to the Bernstein condition.
%\textcolor{blue}{fixed the constant. the Lipsthiz constant and $\alpha$ shows up here due to Bernstein condtion. Added at the beginning of the proof. Marked blue.}

Next, we leverage Dudley's chaining bound~\citep{dudley2016vn} to upper bound $ \mathbb{E}\MFR_n\{ *\MH, \plaw_n h^2 \leq 2 r^*\}$, using the integral of covering number. Specifically, by applying the chaining bound, it follows from Theorem B.7~\citep{bartlett2005local} that
%\textcolor{red}{Before the Thm define covering numbers of the class. Also, use the $S_n$ notation in (36) } \textcolor{blue}{switched to $S_n$ for all covering numbers.}
\begin{equation} \label{entropy_int_1}
\mathbb{E}_{S_n}[\MFR_n (*\MH, \plaw_n h^2\leq 2 r^*)]
     \leq  \frac{\text{const}}{\sqrt{n}} \mathbb{E}_{S_n} \int_{0}^{\sqrt{2 r^*}} \sqrt{ \log \mathcal{N}_2 (\varepsilon, *\MH, S_n)} d\varepsilon, 
     % \leq & \frac{const}{\sqrt{n}} \mathbb{E} \int_{0}^{\sqrt{2 r^*}} \sqrt{ \log \mathcal{N}_{2} \bigg(\frac{\varepsilon}{2}, \MH, \boldsymbol x ^{1:n} \bigg) \bigg( \ceil{\frac{2}{\varepsilon}} + 1\bigg)} d\varepsilon\\
     % \leq& const \sqrt{\frac{d_{VC}(\MH) r^* \log (1/r^*)}{n}}\\
     % \leq & const  \sqrt{ \frac{d^2_{VC}(\MH)}{n^2} + \frac{d_{VC}(\MH) r^* \log (n/ed_{VC}(\MH))}{n}}
\end{equation}
where $\text{const}$ represents some universal constant.  
Next, we bound the covering number  $\mathcal{N}_2 (\varepsilon, *\MH, S_n)$ by $ \mathcal{N}_2 (\varepsilon/Lp, \MF_{p\times(p-1)}, S_n)$. We show that for all $S_n$, any $\frac{\varepsilon}{ Lp}$-cover of $\MF_{p\times(p-1)}$ is a $\varepsilon$-cover of $\MH$, which implies that $\mathcal{N}_2 (\varepsilon, *\MH, S_n) \leq  \mathcal{N}_2 (\varepsilon/Lp, \MF_{p\times(p-1)}, S_n)$. Specifically, let $\MU_{jk} \subset [0,1] ^n$ be an $\varepsilon$-cover of $\MF$ on $S_n$  so that for all $\beta_{jk} \in \MF$, $\exists \{u_{jk}^i\}^n_{i=1} \in \MU_{jk}$ so that $$\sqrt{\frac{1}{n} \sum_{i\in[n]} (\beta_{jk}(\bz^i) - \bu^i_{jk})^2} \leq \varepsilon.$$

Further, let $\MU \subseteq \dbR ^{n\times p\times (p-1)}$, $\bsbeta(\bz): \MZ \rightarrow \dbR^{p\times(p-1)}$ where $\bsbeta(\bz) \in \MF_{p\times (p-1)}$ is a family of  $p\times(p-1)$ joint functions,  with each element $\beta_{jk} \in \MF, j,k\in [p], j\neq k$. We say $\MU$ is an  $\varepsilon$-cover of $\MF_{p\times(p-1)}$ on $\{\bz^i\}^n_{i=1}$ 
if $\forall \bsbeta \in \mathcal{F}_{p\times(p-1)}$, $\exists \bu \in \MU$, s.t.
 $$\sqrt{\frac{1}{np(p-1)} \sum_{i \in [n]} \sum_{j \in [p]}\sum_{k \neq j}( u^i_{jk} - \beta_{jk}(\bz^i) )_2^2} \leq \varepsilon.$$
%  The covering number $\mathcal{N}_{2}(\varepsilon, \mathcal{F}_{p\times(p-1)},n)\nonumber$ of size $n$ on $\mathcal{F}$ is :
%  \begin{align}
% \sup\limits_{ \{\bz^i\}^n_1\in\mathcal \MZ^n} \min\{|\MU| \text{: $\MU$ is an $\varepsilon$-cover of $\mathcal{F}_{p\times(p-1)}$ on $\{\bz^i\}^n_1$}\}.
% \end{align}

% Further more, let $\MU$ be  $\varepsilon$-cover of $\MF_{p\times (p-1)}$ on $(\bx,\bz)^{1:n}$  so that $\forall \bsbeta \in \MF_{p\times (p-1)}$, $\exists \bu^{1:n}_{1:p} \in \MU$
%  so that $$\sqrt{\frac{1}{np(p-1)} \sum_{i \in [n]} \sum_{j \in [p]}\bigg \| \bu^i_j - \bsbeta_j(\bz^i)\bigg\|_2^2} \leq \varepsilon.$$
Clearly, let $\MU_{jk}$ be any collection of arbitrary 
 $p\times (p-1)$ $\varepsilon$-covers of $\MF$, the 
Cartesian product of $\MU_{jk}, j,k\in[p], j\neq k$ forms an $\varepsilon$-cover of $\MF_{p\times (p-1)}$, which implies that  $|\MU| \leq  |\MU_{jk}|^{p^2}$. Thus we have $\mathcal{N}_{2} \big(\varepsilon, \MF_{p\times (p-1)}, S_n \big) \leq \mathcal{N}_{2} \big(\varepsilon, \MF, S_n \big)^{p^2} $.  
Next, we show that given any $\frac{\varepsilon}{Lp}$-cover of $\MF_{p\times(p-1)}$  on $S_n$, denoted as $\MU$, one can construct $\MV:=\{\bv=(v^1,\cdots,v^n)'\in\mathbb{R}^n| v^i := \frac{1}{p} \sum_{j}^{p}\big( \ell(\innerux , x^i_j) - \ell(\innerbetastarxk , x^i_j)\big),i\in[n], \bu \in \MU\}$,  which is an $\varepsilon$-cover for $\MH$, i.e., for all $h\in \MH, \exists \ \bv \in \MV$ so that $\sqrt{\frac{1}{n} \sum_{i\in[n]}(h(\bx^i,\bz^i) - v^i)^2 } \leq \varepsilon$ :  
%\textcolor{red}{be PRECISE with notations. The equations below have notations that are not consistent with what's used at the front (should use $-j$). Also in general, if a equation is not referenced later, it should not have a number} \textcolor{blue}{fixed}
\begin{align}
&~\sqrt{\frac{1}{n}   \sum_{i \in [n]} \bigg( \frac{1}{p} \sum_{i \in [p]} \ell \big( \langle \bu^i_j, \bx^i_{-j}\rangle , x_j \big) -  \frac{1}{p} \sum_{j \in [p]} \ell \big( \langle   \bsbeta^*_{j}, \bx^i_{-j}\rangle 
, x_j \big)-\Delta_{\MML,\bsbeta}(\bx^i,\bz^i) \bigg)^2  }  \nonumber\\
= &~\sqrt{\frac{1}{n}  \sum_{i \in [n]} \bigg(  \frac{1}{p} \sum_{j \in [p]} \ell \big( \langle \bu^i_j, \bx^i_{-j}\rangle , x_j \big) - \frac{1}{p} \sum_{j \in [p]} \ell \big( \langle  \bsbeta_j, \bx^i_{-j}\rangle 
, x_j \big)\bigg)^2  }  \nonumber\\
\stackrel{(1)}{\leq}&~\sqrt{\frac{1}{n  p^2}  \sum_{k \in [n]} \bigg( \sum_{i \in [p]} 
L| \langle \bu^i_i - \bsbeta_j, \bx^i_{-j}\rangle | \bigg)^2} \nonumber\\
\stackrel{(2)}{\leq}&~\sqrt{\frac{L^2}{np}  \sum_{i \in [n]} \sum_{j \in [p]}\bigg(  
| \langle \bu^i_j - \bsbeta_j, \bx^i_{-j}\rangle | \bigg)^2} \nonumber\\
\stackrel{(3)}{\leq} &~\varepsilon L p.  \nonumber
\end{align}
In the above derivation, (1) leverages the fact that 
\begin{equation*}
    \Delta_{\MML,\bsbeta}(\bx^i,\bz^i) =  \frac{1}{p} \sum_{i \in [p]} \ell \bigg( \langle \bsbeta^i_j, \bx^i_{-j}\rangle , x_j \bigg) -  \frac{1}{p} \sum_{j \in [p]} \ell \bigg( \langle   \bsbeta^*_{j}, \bx^i_{-j}\rangle 
, x_j \bigg);
\end{equation*}
(2) is by Lipschitz continuity, and (3) is by the definition of the covering number and the fact that $|x^i_{k}| \leq 1, \forall k\in[p]$. Combining the above inequality with Corollary 3.7 from~\cite{bartlett2005local}, we have
\begin{align*}
    &~\log \mathcal{N}_{2}(\varepsilon, *\MH, S_n)\nonumber  \leq \log \bigg\{ \mathcal{N}_{2} \bigg(\frac{\varepsilon}{2}, \MH, S_n\bigg) \bigg( \ceil{\frac{2}{\varepsilon}} + 1\bigg) \bigg\} \\
\leq &~\log \bigg\{ \mathcal{N}_{2} \bigg(\frac{\varepsilon}{8Lp}, \MF_{p\times(p-1)}, S_n \bigg) \bigg( \ceil{\frac{2}{\varepsilon}} + 1\bigg) \bigg\} \nonumber\\
\leq &~p^2\log \bigg\{ \mathcal{N}_{2} \bigg(\frac{\varepsilon}{8Lp}, \MF, S_n \bigg) \bigg( \ceil{\frac{2}{\varepsilon}} + 1\bigg) \bigg\}.
\end{align*}
Next, we bound $\frac{\text{const}}{\sqrt{n}} \mathbb{E} \int_{0}^{\sqrt{2 r^*}} \sqrt{ \log \mathcal{N}_2 (\varepsilon, *\MH, S_n)} d\varepsilon$ from~\eqref{entropy_int_1}. Note that by Haussler's  bound on the covering number~\citep{haussler1995sphere} we have:
%\textcolor{red}{by the definition of the cover of $*\mathcal{H}???$; what about $c$, undefined}
\begin{equation*}
    \log \mathcal{N}_{2} \bigg(\frac{\varepsilon}{8Lp}, \MF,S_n \bigg) \leq \text{const}\cdot  d_{P} (\MF) \log\bigg(\frac{Lp}{\varepsilon}\bigg),~~\forall \ S_n,
\end{equation*}
and therefore
\begin{align}
\frac{\text{const}}{\sqrt{n}} \mathbb{E}_{S_n} \int_{0}^{\sqrt{2 r^*}} & \sqrt{ \log \mathcal{N}_2 (\varepsilon, *\MH, S_n)} d\varepsilon 
 \leq  \frac{\text{const}}{\sqrt{n}} \mathbb{E}_{S_n} \int_{0}^{\sqrt{2 r^*}} \sqrt{ \log \mathcal{N}_{2} \bigg(\frac{\varepsilon}{2}, \MH,  S_n \bigg) \bigg( \ceil{\frac{2}{\varepsilon}} + 1\bigg)} d\varepsilon \nonumber\\
& \leq \frac{\text{const}\cdot p}{\sqrt{n}} \mathbb{E}_{S_n} \int_{0}^{\sqrt{2 r^*}} \sqrt{ \log \mathcal{N}_{2} \bigg(\frac{\varepsilon}{8Lp}, \MF,  S_n \bigg) \bigg( \ceil{\frac{2}{\varepsilon}} + 1\bigg)} d\varepsilon \nonumber\\
&\leq \text{const} \cdot p \sqrt{\frac{d_{P}(\MF) }{n}}  \int_{0}^{\sqrt{2 r^*}} \sqrt{ \log \bigg(\frac{Lp}{\varepsilon} \bigg)} \leq \text{const}\cdot p\sqrt{\frac{d_{P}(\MF) r^* \log (L/r^*)}{n}} \nonumber\\
& \leq \text{const}  \cdot p  \sqrt{  \frac{d^2_{P}(\MF)  }{n^2} + \frac{d_{P}(\MF) r^* \log (nLp/ed_{P}(\MF)) }{n}   } \label{ineq_rstar},
\end{align}
%\textcolor{red}{I don't follow how the last inequality follows. What is $e$??}
where $e$ in~\eqref{ineq_rstar} refers to Euler's constant and $\text{const}$ represents some universal constant that may change from line to line in the above derivation. The inequality in~\eqref{ineq_rstar} comes from the fact that $r^*\cdot \log(1/r^*)$ is a monotone increasing function for $r^*\leq e$: in the case where $r^*\leq \frac{ed}{n}\frac{1}{\log^2(n/ed)}\leq e $, we have $\frac{d_{P}(\MF) r^* \log (1/r^*)}{n} \lesssim  \frac{d^2_P(\MF)}{ n^2}$; in the case where $r^*\geq \frac{ed}{n}\frac{1}{\log^2(n/ed)}$, we have $\frac{d_{P}(\MF) r^* \log (1/r^*)}{n} \lesssim  \frac{d_P(\MF)r^*}{ n} \log(n/ed_P(\MF)) $. Combining these two cases yields inequality~\eqref{ineq_rstar}.

Together with~\eqref{eq_r_star_upper_1} one can solve for $$r^* \lesssim \frac{ p^2 L^4 d_{P}(\MF) \log (\frac{nLp}{d_{P}(\MF)})}{\alpha^2n }.$$ 

% \textcolor{red}{Define $d_P(\mathcal{F})$ used above}

 % Thus we have $\log \mathcal{N}_2 (\varepsilon, *\MH, (\bz,\boldsymbol x)^{1:n}) \leq \log \mathcal{N}_2 (\varepsilon, *\MF, \boldsymbol x_{1:n})$
% And covering number $\log \mathcal{N}_{2}(\varepsilon, \mathcal{F},n)$ can be bounded using VC-dimension of $\mathcal{F}$ using Haussler's bound on the covering number~\citep{haussler1995sphere,wellner2013weak}:
% $$
%  \log \mathcal{N}_{2} \bigg(\frac{\varepsilon}{2}, \mathcal{F}, n \bigg) \leq c_1 d_{VC} \log \bigg(\frac{1}{\varepsilon}\bigg)
% $$
% where $c_1$ is some universal constant.
By Theorem 3.3 in~\cite{bartlett2005local}, we have that for all $\bsbeta \in \MF_{p\times(p-1)}$, with probability at least $1-\delta$:
 \begin{align}\label{equation_transition}
    & \MMR( \bsbeta)- \MMR(\bsbeta^*)  \lesssim  \MMR_n( \bsbeta)- \MMR_n(\bsbeta^*) +\frac{  d_{P}(\MF)}{\alpha n} L^2p^2\log(nLp)\log\bigg( \frac{1}{\delta}\bigg), \nonumber\\
    &  \MMR_n( \bsbeta)- \MMR_n(\bsbeta^*) \lesssim    \MMR( \bsbeta)- \MMR(\bsbeta^*)   +\frac{  d_{P}(\MF)}{\alpha n} L^2p^2\log(nLp)\log\bigg( \frac{1}{\delta}\bigg),
\end{align}
which gives
\begin{align}\label{equation_transition2}
    & \underbrace{ \MMR( \widehat\bsbeta)- \MMR(\bsbeta^*)}_{\text{excess error}}\lesssim \underbrace{ \MMR_n(  \widehat \bsbeta)- \MMR_n(\bsbeta^*)}_{\text{empirical excess error}} +\underbrace{\frac{  d_{P}(\MF)}{\alpha n} L^2p^2\log(nLp)\log\bigg( \frac{1}{\delta}\bigg)}_{\text{generalization error}}.
\end{align}
Due to the fact that $ \MMR(\widehat \bsbeta)$ is the empirical risk minimizer, we have:
\begin{align}
    \underbrace{\MMR_n(\widehat{\bsbeta})- \MMR_n(\bsbeta^*)}_{\text{empirical excess error}} = &~\frac{1}{np}\sum _{i\in[n]} \bigg[  \sum_{j=1}^{p} \ell \bigg(\langle \widehat \bsbeta_j(\bz^i) , \bx^i_{-j}  \rangle , x^i_j \bigg) - \sum_{j=1}^{p} \ell \bigg(\langle   \bsbeta^*_{j}(,\bz^i) , \bx^i_{-j}  \rangle , x^i_j \bigg)\bigg] \nonumber\\
    \leq &~\frac{1}{np}\sum _{i\in[n]} \bigg[  \sum_{j=1}^{p} \ell \bigg(\langle  \bsbeta^{\opt}_i(\bz^i) , \bx^i_{-j}  \rangle , x^i_j \bigg) - \sum_{j=1}^{p} \ell \bigg(\langle   \bsbeta^*_{j}(\bz^i) , \bx^i_{-j}  \rangle , x^i_j \bigg)\bigg]  \label{step1}\\
    \leq &~\frac{L}{np}\sum _{i\in[n]} \bigg[  \sum_{j=1}^{p} \| \bsbeta^{\opt}_i -   \bsbeta^*_{j}\|_{\infty}\bigg]  \label{step2}\\
    \leq  &~ \underbrace{L \cdot \err_{\approxi}(\MF)}_{\text{approximation error}}. \nonumber
\end{align}
Equation~\eqref{step1} is due to the fact that $\widehat \bsbeta$ is the empirical risk minimizer within $\MF_{p\times (p-1)}$ and thus $\MMR_n(\widehat \bsbeta) \leq\MMR_n(\bsbeta^{\opt}) $. 
Finally, Equation~\eqref{step2} is derived by the Lipshitzness of $\ell(\cdot,\cdot)$. 
\end{proof}
% \textcolor{red}{$\beta^{opt}$ is never defined. Why does 47 hold?}
\subsection{Proof of Corollary~\ref{cor-consistency-beta}}

% \begin{cor}[Closeness of $\bsbeta$]\label{cor-consistency-beta}
% Under the assumption in Theorem ~\ref{main-thm-1}, let  $A_j(\bz) =  \dbE_{\bx } \bigg[\bx_{-j} {  \bx_{-j}}^\top  \bigg| \bz \bigg] $, the following inequality holds with probability at least $1-\delta$: 
% \begin{equation}\label{closeness_beta}
%      \dbE_{\bz}  \bigg[ \frac{1}{p} \sum_{j=1}^{p}  \| \widehat  \bsbeta_j (\bz) -   \bsbeta^*_{j}(\bz)\|_{A_j (\bz)}^2 \bigg] \leq \frac{L^2  d_{P}(\MF)}{\alpha^2 n} p^2\log(nL)\log\bigg( \frac{1}{\delta}\bigg) + \frac{L \cdot \err_{\approxi}}{\alpha  }
% \end{equation}

% \end{cor}

\begin{proof}
%\item 
Let $ \varepsilon:= \frac{ p^2  L^2 d_p(\MF) }{  \alpha n } \log(Lpn)\log\big(\frac{1}{\delta} \big) $, by~\eqref{main_ineq} we have $\MMR(\widehat \bsbeta)- \MMR(\bsbeta^*) \lesssim  L \cdot \err_{\approxi}(\MF) + \varepsilon$, which could be expressed as:
   \begin{align*}
    \dbE_{\bx,\bz} \bigg[ \sum_{j=1}^{p} \ell \bigg( \langle \widehat  \bsbeta_j (\bz) , \bx_{-j}  \rangle, x_j \bigg) -\sum_{j=1}^{p} \ell \bigg( \langle  \bsbeta^*_{j}  (\bz), \bx_{-j}  \rangle, x_j \bigg) \bigg]  \lesssim  L \cdot \err_{\approxi}(\MF) + \varepsilon.
\end{align*}
By the $\alpha$-strongly convexity assumption (Assumption \ref{assume2}), namely,  
\begin{equation*}
  \ell ( a_1; b) - \ell ( a_2; b) \geq \ell' ( a; b)_{|a=a_2} (a_1-a_2) + \frac{\alpha}{2} (a_1-a_2)^2,  
\end{equation*}
by setting  $a_1 = \langle  \widehat \bsbeta_{j} (\bz), \bx_{-j}\rangle$, $a_2=\langle  \bsbeta^*_{j} (\bz), \bx_{-j} \rangle$, $b=x_j$ and the optimality of $ \bsbeta^*_{j}$, we have:
\begin{align*}
   \varepsilon  + L \cdot \err_{\approxi}  \geq &~\dbE_{\bx,\bz} \bigg[ \sum_{j=1}^{p} \ell \bigg( \langle \widehat  \bsbeta_j (\bz) , \bx_{-j}  \rangle, x_j \bigg) -\sum_{j=1}^{p} \ell \bigg( \langle  \bsbeta^*_{j}  (\bz), \bx_{-j}  \rangle, x_j \bigg) \bigg] \\
   \geq &~\underbrace{ \dbE_{\bx,\bz}  \bigg[ \sum_{j=1}^{p} \bigg\{  
 \ell' ( \innerbetax, x_j )_{|\bsbeta_j = \bsbeta_j^*}    \cdot \langle \widehat  \bsbeta_j (\bz) -   \bsbeta^*_{j} (\bz), \bx_{-j}  \rangle \bigg \} \bigg ] }_{=0 \text{ by optimality of $\bsbeta_j^*$}}\\
   &~~+\frac{\alpha }{2} \dbE_{\bx,\bz}  \bigg[ \sum_{j=1}^{p}|\langle \widehat  \bsbeta_j (\bz) -   \bsbeta^*_{j} (\bz), \bx_{-j}  \rangle|^2   \bigg ],
\end{align*}
which implies that :
\begin{align*}
    \dbE_{\bx,\bz}  \bigg[ \sum_{j=1}^{p} |\langle \widehat \bsbeta_j (\bz) -   \bsbeta^*_{j} (\bz), \bx_{-j}  \rangle|^2   \bigg ]
    =&~\dbE_{\bz}  \bigg[ \sum_{j=1}^{p} \langle \widehat \bsbeta_j (\bz) -   \bsbeta^*_{j} (\bz), \dbE_{\bx } \bigg[\bx_{-j} {\bx_{-j}}^\top \bigg | \bz \bigg] ( \widehat \bsbeta_j (\bz) -   \bsbeta^*_{j} (\bz))  \rangle   \bigg ]\\
    \leq &~\frac{2\varepsilon }{\alpha  } + \frac{2L \cdot \err_{\approxi}}{\alpha }.
\end{align*}

% If we assume that $\dbE_{\bx } \bigg[\bx_{-j} {\bx_{-j}}^\top \bigg] \succeq \gamma I, \forall i \in [p]$, we have that:
% \begin{equation}
%      \dbE_{\bz}  \bigg[  \sum_{j=1}^{p} \|\bsbeta_j (\bz) -   \bsbeta^*_{j}\|_2^2 \bigg] \leq \frac{2\varepsilon}{ \alpha \gamma} + \frac{L \cdot \err_{\approxi}}{\alpha \gamma }
% \end{equation}
\end{proof}

\subsection{Proof of Theorem~\ref{main-thm-2}}

The result aims to balance the function approximation error ($O(\err_{\approxi})$) and the generalization error ($O(\frac{d_P}{n})$).
% We begin with the formal statement of ex
% \begin{thm}\label{main-thm-2}
%     Under the assumptions in Theorem~\ref{main-thm-1}, if we further assume Assumption~\ref{assume-pcsmooth} with $m \lesssim q$, $\forall \bz, \|\bz\| \leq 1$ and $m \lesssim q$, and let 
%      $$\xi = \bigg( \frac{L m^4d^6p^2 \log^2(nL)\log(p)\log(1/\delta)\log(1/\alpha) }{\alpha n} \bigg)^{\frac{m}{m+d}},$$
%  $\MF$ be the family of fully connected neural networks with \textbf{RELU} activation functions which have number of layers $H \simeq  \xi ^ {-q/2m}$ and number of neurons $  r \simeq (2e)^q \binom{m+q}{q} q^2 $, 
%      we have the following that holds with probability at least $1-\delta$:
%     \begin{align}
%          \plaw\bigg[ \MMR(\widehat \bsbeta)- \MMR(\bsbeta^*)  \bigg] \lesssim   L \xi,
%     \end{align}
%  which implies that :
%  \begin{align}\label{major}
%      \dbE_{\bz}  \bigg[ \frac{1}{p} \sum_{j=1}^{p}  \| \widehat  \bsbeta_j (\bz) -   \bsbeta^*_{j}(\bz)\|_{A_j (\bz)}^2 \bigg] \lesssim   \frac{L\xi}{\alpha}
%  \end{align}
% \end{thm}
\begin{proof}
%  \textcolor{red}{We need to clarify this in the main text, where we say that the focus is on the approximation error only.}
%  \textcolor{blue}{I think this includes both bounding the approximation error and balance between generalization and approx error.}

 The major tool that we leverage is from ~\cite{kohler2021rate}. In particular,  if we invoke Theorem $2$ from ~\cite{kohler2021rate} with $a= 1$, $p \simeq  m$, $M \simeq  \xi^{-1/2m} $, $L\simeq  \xi ^ {-q/2m}$, $r \simeq (2e)^q \binom{m+q}{q} q^2 $, we have that $\forall \ i \in [p], j\in [p]/{i}, \exists\,\beta^{\opt}_{jk} \in \MF, \|\beta^{\opt}_{jk} - \beta^*_{jk}\|_{\infty} \leq \xi$ thus $\err_{\approxi}(\MF) \leq \xi$. On the other hand, by Theorem 6 from~\cite{bartlett2019nearly}, we can bound the VC-dimension in terms of number of layers and neurons.  In the case of a deep neural network (Theorem 2b from \cite{kohler2021rate}):
  \begin{align*}
   d_P \leq &~\text{const} \cdot  H ^2 r^2 log(H r)\\
   \leq &~\text{const} \cdot \bigg((2eM)^q \binom{m+q}{q} m \cdot q^2 \bigg)^2\log(H r)\\
   \leq &~\text{const} \cdot (2e^{2+m/q}M)^{2q}  (m^2  d^5)(\log(2e^{2+m/q}M)+\log(m) + \log(q)).
  \end{align*}
  It suffices to pick $M = \text{const} \cdot \xi^{-1/2m}  $ and one can get that 
   \begin{align*}
   &d_P \leq \text{const} \cdot   -\xi^{-q/m} m^4q^5 \log(\xi).
  \end{align*}
Plugging the above into~\eqref{main_ineq}, one has:
\begin{align}\label{ineq_vcd}
     \MMR(\widehat \bsbeta)- \MMR(\bsbeta^*) \lesssim   L \xi +\frac{L^2   \xi ^ {-q/m} m^4q^5}{ \alpha n} p^2\log(1/\xi)\log(nLp)\log\bigg( \frac{1}{\delta}\bigg).
\end{align}
To minimize the right-hand side of the above inequality, it suffices to choose 
\begin{equation}
 \xi = \bigg( \frac{L m^4q^6p^2 \log^2(nLp)\log(p)\log(1/\delta)\log(1/\alpha) }{\alpha n} \bigg)^{\frac{m}{m+q}} .  
\end{equation}
Finally, Equation~\eqref{maj_bound} could be derived using a similar strong convexity argument as in Corollary~\ref{cor-consistency-beta} to bridge the gap between excess risk and the  quality of $\bsbeta$: $\dbE_{\bz}[\|\bsbeta(\bz) - \bsbeta^*(\bz)\|_{A(\bz)}]$.

\end{proof}
\subsection{Extension of Theorem~\ref{main-thm-2} with An Additional Assumption on the Loss Function }\label{sec:subsec_extension}
In this section we provide an extension of Theorem~\ref{main-thm-2} under the following additional assumption: 
\begin{assume}\label{assumption:l-gradient} Assume the following holds in addition for the loss function $\ell(\cdot,\cdot)$, namely its gradient is also Lipschitz:
\begin{equation*}%\label{lambda_smooth}
 \ell ( a_1; b) - \ell ( a_2; b) \leq  \ell' (a; b)_{|a=a_2} (a_1-a_2) + \frac{\lambda}{2} (a_1-a_2)^2.
\end{equation*}
\end{assume}
%\textcolor{red}{I wonder whether we should use proper gradient notation $\nabla$ (including those in the main). Also, is this the gradient $\lambda$-Lipschitz equivalent or a sufficient condition? What does it mean by ``it suffices to assume"?}
This is equivalent to assuming that the second derivative of the loss function $\ell''(\cdot)$ is upper bounded.
%\textcolor{blue}{this is a scalar derivative so I believe $\nabla$ is not needed. It is equivalent to assume $\ell''()$ is upper bounded (gradient Lipschitz.)}
%To satisfy such assumption, it suffices to assume that the derivative of $\ell(\cdot)$ is $\lambda$-Lipschitz~\citep{nesterov2013introductory}. 
Next, we state the result formally.
\begin{thm}\label{thm4}
Let $\MF$ correspond to a family of fully connected neural networks with ReLU activation functions, with number of layers $H \simeq  \xi ^ {-q/2m}$ and number of neurons $  r \simeq (2e)^q \binom{m+q}{q} q^2$. Then, 
    under the assumptions of Theorem~\ref{main-thm-1}, together with Assumption~\ref{assume-pcsmooth} with $m\lesssim q$ and Assumption~\ref{assumption:l-gradient}, by setting 
    \begin{equation}\label{ieq}
        \xi = \bigg( \frac{L m^4d^6p^2 \log^2(nLp)\log(p)\log(1/\delta)\log(1/\alpha) }{ \lambda\alpha n} \bigg)^{\frac{m}{2m+q}},
    \end{equation}
     % $$,$$
  the following holds with probability at least $1-\delta$:
    % \begin{align}
    % \end{align}
 % which implies that :
 \begin{align}\label{major_v2}
      \MMR(\widehat \bsbeta)- \MMR(\bsbeta^*)  \lesssim   \lambda \xi^2, \text{ 
   }\dbE_{\bz}  \bigg[ \frac{1}{p} \sum_{j=1}^{p}  \| \widehat  \bsbeta_j (\bz) -   \bsbeta^*_{j}(\bz)\|_{A_j (\bz)}^2 \bigg] \lesssim   \frac{\lambda\xi^2}{\alpha}.
 \end{align}
\end{thm}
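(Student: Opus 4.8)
The plan is to follow the architecture of the proofs of Theorems~\ref{main-thm-1} and~\ref{main-thm-2} essentially verbatim for the generalization component, and to replace only the treatment of the approximation error, exploiting the extra smoothness in Assumption~\ref{assumption:l-gradient} to upgrade its contribution from linear to quadratic in $\err_{\approxi}(\MF)$. Concretely, the local Rademacher complexity analysis---the verification of the Bernstein condition (Lemma~\ref{bernstein_lemma}), the subroot/fixed-point argument from \cite{bartlett2005local}, and the covering-number reduction $\mathcal{N}_2(\varepsilon,\ast\MH,S_n)\le\mathcal{N}_2(\varepsilon/Lp,\MF_{p\times(p-1)},S_n)$ via Haussler's and Dudley's bounds---uses only Assumptions~\ref{assume2}--\ref{assume3.5} and is therefore unchanged. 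In particular, both directions of the transfer inequality~\eqref{equation_transition} and the excess-error decomposition~\eqref{equation_transition2} continue to hold with the same generalization term $\tfrac{L^2 d_{P}(\MF)}{\alpha n}p^2\log(nLp)\log(1/\delta)$.

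First I would redo the empirical-excess step. By ERM optimality $\MMR_n(\widehat\bsbeta)\le\MMR_n(\bsbeta^{\opt})$, so $\MMR_n(\widehat\bsbeta)-\MMR_n(\bsbeta^*)\le\MMR_n(\bsbeta^{\opt})-\MMR_n(\bsbeta^*)$; since $\bsbeta^{\opt}\in\MF_{p\times(p-1)}$ is a fixed (data-independent) hypothesis, the second inequality in~\eqref{equation_transition} transfers this to the population excess $\MMR(\bsbeta^{\opt})-\MMR(\bsbeta^*)$ up to the same generalization term. At the population level I would then apply Assumption~\ref{assumption:l-gradient} with $a_1=\langle\bsbeta^{\opt}_j(\bz),\bx_{-j}\rangle$, $a_2=\langle\bsbeta^*_j(\bz),\bx_{-j}\rangle$, $b=x_j$, obtaining
{\small
\begin{equation*}
\MMR(\bsbeta^{\opt})-\MMR(\bsbeta^*)\le \dbE_{\bx,\bz}\Big[\tfrac{1}{p}\sum\nolimits_j\ell'(\langle\bsbeta^*_j,\bx_{-j}\rangle;x_j)\langle\bsbeta^{\opt}_j-\bsbeta^*_j,\bx_{-j}\rangle\Big]+\tfrac{\lambda}{2p}\dbE_{\bx,\bz}\Big[\sum\nolimits_j\langle\bsbeta^{\opt}_j-\bsbeta^*_j,\bx_{-j}\rangle^2\Big],
\end{equation*}
}
where the first (linear) term vanishes by the optimality of $\bsbeta^*$ (Assumption~\ref{assume3}), exactly as in the ``$=0$ by optimality'' step of Corollary~\ref{cor-consistency-beta}. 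Using $\bx\in[-1,1]^p$ and $|\beta^{\opt}_{jk}(\bz)-\beta^*_{jk}(\bz)|\le\err_{\approxi}(\MF)$ coordinatewise, the quadratic term is $\lesssim\lambda\,\err_{\approxi}(\MF)^2$ (up to the same $p$ factors already present in~\eqref{main_ineq}). Substituting this, in place of the linear $L\cdot\err_{\approxi}(\MF)$, into~\eqref{equation_transition2} yields $\MMR(\widehat\bsbeta)-\MMR(\bsbeta^*)\lesssim\lambda\,\err_{\approxi}(\MF)^2+\tfrac{L^2 d_{P}(\MF)}{\alpha n}p^2\log(nLp)\log(1/\delta)$.

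I expect the main obstacle to be precisely this population routing. Assumption~\ref{assumption:l-gradient} is a pointwise second-order upper bound whose first-order term is \emph{not} zero sample by sample; it vanishes only in expectation under Assumption~\ref{assume3}. Hence, unlike the Lipschitz bound used in Theorem~\ref{main-thm-1} (which is valid per sample and so applied directly at the empirical level in~\eqref{step1}--\eqref{step2}), one cannot bound the empirical gap $\MMR_n(\bsbeta^{\opt})-\MMR_n(\bsbeta^*)$ directly by $\lambda\,\err_{\approxi}^2$, and must first pass to the population risk via~\eqref{equation_transition} before invoking optimality of $\bsbeta^*$; getting the quadratic rate hinges entirely on this ordering.

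With the quadratic bound in hand, the remainder mirrors Theorem~\ref{main-thm-2}. I would invoke the approximation result of \cite{kohler2021rate} for the stated ReLU architecture to obtain $\err_{\approxi}(\MF)\le\xi$ together with the pseudo-dimension estimate $d_{P}(\MF)\lesssim\xi^{-q/m}m^4q^5\log(1/\xi)$, giving $\MMR(\widehat\bsbeta)-\MMR(\bsbeta^*)\lesssim\lambda\xi^2+\tfrac{L^2\xi^{-q/m}m^4q^5}{\alpha n}p^2\log(1/\xi)\log(nLp)\log(1/\delta)$. Minimizing the right-hand side balances $\lambda\xi^2$ against the $\xi^{-q/m}$ term, i.e.\ $\xi^{(2m+q)/m}\sim(\lambda n)^{-1}$ times the log/problem factors, which reproduces the choice~\eqref{ieq} and renders both contributions $\lesssim\lambda\xi^2$; note that the resulting rate $n^{-2m/(2m+q)}$ matches \cite{kohler2021rate}, consistent with the quadratic (rather than linear) dependence flagged in the remark after Theorem~\ref{main-thm-2}. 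Finally, the coefficient-recovery bound in~\eqref{major_v2} follows by the same $\alpha$-strong-convexity argument as in Corollary~\ref{cor-consistency-beta}: the excess risk dominates $\tfrac{\alpha}{2}\dbE_{\bz}[\tfrac1p\sum_j\|\widehat\bsbeta_j-\bsbeta^*_j\|^2_{A_j(\bz)}]$, so dividing through gives $\dbE_{\bz}[\tfrac1p\sum_j\|\widehat\bsbeta_j-\bsbeta^*_j\|^2_{A_j(\bz)}]\lesssim\lambda\xi^2/\alpha$.
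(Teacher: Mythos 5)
Your proposal is correct and follows essentially the same route as the paper's own proof: ERM optimality plus the transfer inequality~\eqref{equation_transition} to move from the empirical gap at $\bsbeta^{\opt}$ to the population gap, then Assumption~\ref{assumption:l-gradient} applied at the population level with the first-order term killed in expectation by Assumption~\ref{assume3}, yielding the quadratic bound $\lambda\,\err_{\approxi}(\MF)^2$, followed by the same Kohler--Langer approximation/pseudo-dimension estimates and balancing choice of $\xi$, and the same strong-convexity step for the coefficient bound. Your explicit observation that the second-order bound must be routed through the population risk (since its linear term vanishes only in expectation, not per sample) is exactly the ordering the paper's proof relies on.
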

\begin{proof}

Using a similar argument as in the proof of Theorem~\ref{main-thm-1}, starting from~\eqref{step1}:
\begin{align*}
\MMR_n(\widehat \bsbeta)- \MMR_n(\bsbeta^*) = &~\frac{1}{np}\sum _{i\in[n]} \bigg[  \sum_{j=1}^{p} \ell \bigg(\langle \widehat \bsbeta_j(\bz^i) , \bx^i_{-j}  \rangle , x^i_j \bigg) - \sum_{j=1}^{p} \ell \bigg(\langle   \bsbeta^*_{j}(,\bz^i) , \bx^i_{-j}  \rangle , x^i_j \bigg)\bigg]\\
\leq &~\frac{1}{np}\sum _{i\in[n]} \bigg[  \sum_{j=1}^{p} \ell \bigg(\langle  \beta^{\opt}_i(\bz^i) , \bx^i_{-j}  \rangle , x^i_j \bigg) - \sum_{j=1}^{p} \ell \bigg(\langle   \bsbeta^*_{j}(\bz^i) , \bx^i_{-j}  \rangle , x^i_j \bigg)\bigg]  \\
= &~\MMR_n( \bsbeta^{\opt})- \MMR_n(\bsbeta^*).
\end{align*}
With inequality~\eqref{equation_transition}, the following holds with probability at least $1-\delta$:
\begin{align*}
    \MMR_n( \bsbeta^{\opt})- \MMR_n(\bsbeta^*) \lesssim   \MMR( \bsbeta^{\opt})- \MMR(\bsbeta^*)  +\frac{  d_{P}(\MF)}{\alpha n} L^2p^2\log(nLp)\log\bigg( \frac{1}{\delta}\bigg).
\end{align*}
Together with~\eqref{equation_transition2}, we have:
\begin{align}\label{bridge_opt}
     & \MMR( \widehat\bsbeta)- \MMR(\bsbeta^*)  \lesssim  \MMR(   \bsbeta^{\opt})- \MMR(\bsbeta^*) +\frac{  d_{P}(\MF)}{\alpha n} L^2p^2\log(nLp)\log\bigg( \frac{1}{\delta}\bigg).
\end{align}
For the term $\MMR(\bsbeta^{\opt})- \MMR(\bsbeta^*)$ on the RHS, it satisfies
\begin{align*}
   \MMR(\bsbeta^{\opt})- \MMR(\bsbeta^*)   = &~\frac{1}{np}\dbE_{\bx,\bz} \bigg[ \sum_{j=1}^{p} \ell \bigg( \langle   \bsbeta^{\opt}_j (\bz) , \bx_{-j}  \rangle, x_j \bigg) -\sum_{j=1}^{p} \ell \bigg( \langle  \bsbeta^*_{j}  (\bz), \bx_{-j}  \rangle, x_j \bigg) \bigg] \\
   \leq &~\frac{1}{np} \underbrace{ \dbE_{\bx,\bz}  \bigg[ \sum_{j=1}^{p} \bigg\{  
 \ell' ( \innerbetax, x_j )_{|\bsbeta_j = \bsbeta_j^*}    \cdot \langle   \bsbeta^{\opt}_j (\bz) -   \bsbeta^*_{j} (\bz), \bx_{-j}  \rangle \bigg \} \bigg ] }_{=0,~\text{by the optimality of $\bsbeta_j^*$}}\\
   &~+\frac{\lambda }{2} \cdot \frac{1}{np}\dbE_{\bx,\bz}  \bigg[ \sum_{j=1}^{p}|\langle   \bsbeta^{\opt}_j (\bz) -   \bsbeta^*_{j} (\bz), \bx_{-j}  \rangle|^2   \bigg ]\\
   \leq  &~\frac{\lambda}{2np}\sum _{i\in[n]} \bigg[  \sum_{j=1}^{p} \| \bsbeta^{\opt}_i -   \bsbeta^*_{j}\|^2_{\infty}\bigg]  \\
   \leq &~\frac{\lambda}{2}  \cdot \err_{\approxi}(\MF)^2.
\end{align*}
Plugging the above inequality into~\eqref{bridge_opt} we have:
\begin{equation*}
\MMR( \widehat\bsbeta)- \MMR(\bsbeta^*)  \lesssim \frac{\lambda}{2}  \cdot \err_{\approxi}(\MF)^2 +\frac{  d_{P}(\MF)}{\alpha n} L^2p^2\log(nLp)\log\bigg( \frac{1}{\delta}\bigg).
\end{equation*}
Let $\err_{\approxi}(\MF) :=\xi$, by plugging inequality~\eqref{ineq_vcd} into above inequality, we have:
\begin{align*}
     & \MMR( \widehat\bsbeta)- \MMR(\bsbeta^*)  \lesssim \frac{\lambda}{2}  \cdot \xi^2 +\frac{ \xi^{-q/m}m^4q^5}{\alpha n} L^2p^2\log(nLp)\log\bigg( \frac{1}{\delta}\bigg).
\end{align*}
To minimize the RHS of above inequality, if suffices to choose 
\begin{equation*}
    \xi \simeq \bigg( \frac{L m^4q^6p^2 \log^2(nLp)\log(p)\log(1/\delta)\log(1/\alpha) }{ \lambda\alpha n} \bigg)^{\frac{m}{2m+q}}.
\end{equation*}
\end{proof}
\begin{remark}
The choice of $\xi$ in~\eqref{ieq} implies that the $ \MMR( \widehat\bsbeta)- \MMR(\bsbeta^*)$ converges at a rate of the order $n^{-2m/(2m+q)}$ which matches the rate from~\cite{kohler2021rate}.
\end{remark}

\subsection{Proof of Corollary~\ref{cor-edge-recovery-v1}}
From inequality~\eqref{closeness_beta} we have:
\begin{align*}
\dbE_{\bz}  \bigg[  \sum\limits_{j=1}^{p}  \big\langle \widehat{\bsbeta}_j(\bz)-\bsbeta^*_{j}(\bz),&~A_j(\bz)( \widehat{\bsbeta}_j(\bz) - \bsbeta^*_{j}(\bz)) \big\rangle  \bigg] \\
\lesssim & \frac{L^2  d_{P}(\MF)}{\alpha^2 (\barbelow{\beta} - \bar{\beta})^2 n} p^2\log(nLp)\log\bigg( \frac{1}{\delta}\bigg) + \frac{L \cdot \err_{\approxi}(\MF)}{\alpha(\barbelow{\beta} - \bar{\beta})^2}.
\end{align*}
Using Assumption~\ref{assume-cvx}, we have $\langle \widehat{\bsbeta}_j(\bz)-\bsbeta^*_{j}(\bz), ~A_j(\bz)( \widehat{\bsbeta}_j(\bz) - \bsbeta^*_{j}(\bz)) \big\rangle\geq \frac{1}{\gamma}\|\widehat{\bsbeta}_j(\bz)-\bsbeta^*_{j}(\bz)\|_2^2$ for each $j=1,\cdots,p$ and therefore:
\begin{align*}
\dbE_{\bz}  \bigg[  \sum\limits_{j=1}^{p}\sum\limits_{k\neq j}  \big(\hat{\beta}_{jk} (\bz) - \beta^*_{jk}(\bz)\big)^2   \bigg] \lesssim  \frac{L^2  d_{P}(\MF)}{\alpha^2 \gamma (\barbelow{\beta} - \bar{\beta})^2 n} p^2\log(nLp)\log\bigg( \frac{1}{\delta}\bigg) + \frac{L \cdot \err_{\approxi}(\MF)}{\alpha\gamma (\barbelow{\beta} - \bar{\beta})^2}.
\end{align*}
On the other hand, by the fact that the margin between strong and weak edges is bounded away from zero, namely, $(\barbelow{\beta} - \bar{\beta})\geq \phi>0$, we can select a threshold $\tau = \eta \bar\beta + (1-\eta) \barbelow{\beta}$ and measure the edge-wise guarantee using the binary risk: $\mathbb{1}\{ |\widehat \beta_{jk}(\bz) | \geq \tau\} \neq \mathbb{1}\{ |\beta^*_{jk}(\bz) | \geq \barbelow{\beta} \}$. 

Next, we establish the following inequality:
\begin{align}\label{beta_ineq}
\min\{\eta^2,(1-\eta)^2\}(\barbelow{\beta} -\bar{\beta})^2\bigg\{\mathbb{1}\{ |\widehat \beta_{jk}(\bz) | \geq \tau\} \neq \mathbb{1}\{ |\beta^*_{jk}(\bz) | \geq \barbelow{\beta} \}\bigg\} \leq (\widehat \beta_{jk}(\bz) - \beta^*_{jk}(\bz))^2.
\end{align}

The starting point is the following decomposition: 
\begin{align*}
(\widehat \beta_{jk}(\bz) - \beta^*_{jk}(\bz))^2 &
=   \underbrace{  \Big\{\mathbb{1}\{ |\widehat \beta_{jk}(\bz) | \geq \tau\} \neq \mathbb{1}\{ |\beta^*_{jk}(\bz) | \geq \barbelow{\beta} \}\Big\}(\widehat \beta_{jk}(\bz) - \beta^*_{jk}(\bz))^2}_{\text{Term I}}\\
    & \qquad +  \underbrace{ \Big\{\mathbb{1}\{ |\widehat \beta_{jk}(\bz) | \geq \tau\} = \mathbb{1}\{ |\beta^*_{jk}(\bz) | \geq \barbelow{\beta} \}\Big\}
    (\widehat \beta_{jk}(\bz) - \beta^*_{jk}(\bz))^2}_{\text{Term II}}.
\end{align*}
% \textcolor{teal}{} 
To show inequality~\eqref{beta_ineq}, it suffices to bound Term I. In case $ \mathbb{1}\{ |\widehat \beta_{jk}(\bz) | \geq \tau\} \neq \mathbb{1}\{ |\beta^*_{jk}(\bz) | \geq \barbelow{\beta} \}$, we have either $|\widehat \beta_{jk}(\bz) | \geq \tau, |\beta^*_{jk}(\bz) | \leq \barbelow{\beta}$, or  $|\widehat \beta_{jk}(\bz) | \leq \tau, |\beta^*_{jk}(\bz) | \geq \barbelow{\beta}$. In both cases, we have $\min\{\eta^2,(1-\eta)^2\})(\barbelow{\beta} -\bar{\beta})^2 \leq (\widehat \beta_{jk}(\bz) - \beta^*_{jk}(\bz))^2$, and hence
\begin{equation*}
\min\{\eta^2,(1-\eta)^2\})(\barbelow{\beta} -\bar{\beta})^2\bigg\{\mathbb{1}\{ |\widehat \beta_{jk}(\bz) | \geq \tau\} \neq \mathbb{1}\{ |\beta^*_{jk}(\bz) | \geq \barbelow{\beta} \}\bigg\} \leq (\widehat \beta_{jk}(\bz) - \beta^*_{jk}(\bz))^2.
\end{equation*}
The last expression implies that 
\begin{align*}
\min\{\eta^2,(1-\eta)^2\}(\barbelow{\beta} -\bar{\beta})^2 \bigg\{\sum\limits_{j=1}^{p} \sum\limits_{k\neq j}  \mathbb{1}\{ |\widehat \beta_{jk}(\bz) | \geq \tau\} \neq \mathbb{1}\{ |\beta^*_{jk}(\bz) | \geq \barbelow{\beta} \} \bigg\}
   \leq \sum\limits_{j=1}^{p}\sum\limits_{k\neq j}  (\hat{\beta}_{jk} (\bz) - \beta^*_{jk}(\bz))^2.
\end{align*}
Consequently, it follows that
\begin{align*}
\dbE_{\bz}  \bigg[  \sum\limits_{j=1}^{p} \sum\limits_{k\neq j} & \mathbb{1}\{ |\widehat \beta_{jk}(\bz) | \geq \tau\} \neq \mathbb{1}\{ |\beta^*_{jk}(\bz) | \geq \barbelow{\beta} \} \bigg]\\
\lesssim  & \frac{L^2  d_{P}(\MF)}{\alpha^2\min\{\eta^2,(1-\eta)^2\} \gamma (\barbelow{\beta} - \bar{\beta})^2 n} p^2\log(nLp)\log\bigg( \frac{1}{\delta}\bigg) + \frac{L \cdot \err_{\approxi}(\MF)}{\alpha\gamma\min\{\eta^2,(1-\eta)^2\}(\barbelow{\beta} - \bar{\beta})^2}.
\end{align*}

\section{Additional Details and Results for Synthetic Data Experiments}\label{appendix:extra-sim}

\paragraph{Additional illustration for the data generating process.} Figure~\ref{fig:sim-dgp} provides visualization for candidate skeletons $\Psi_l$'s that are used in settings G1, G2, N1 and N2, and selected $\Theta^i$'s that are obtained as the convex combination of the candidates, depending on the value of their corresponding $\boldsymbol{z}^i$'s. 
\begin{figure}[!ht]
\begin{subfigure}[b]{0.48\textwidth}
\includegraphics[scale=0.26]{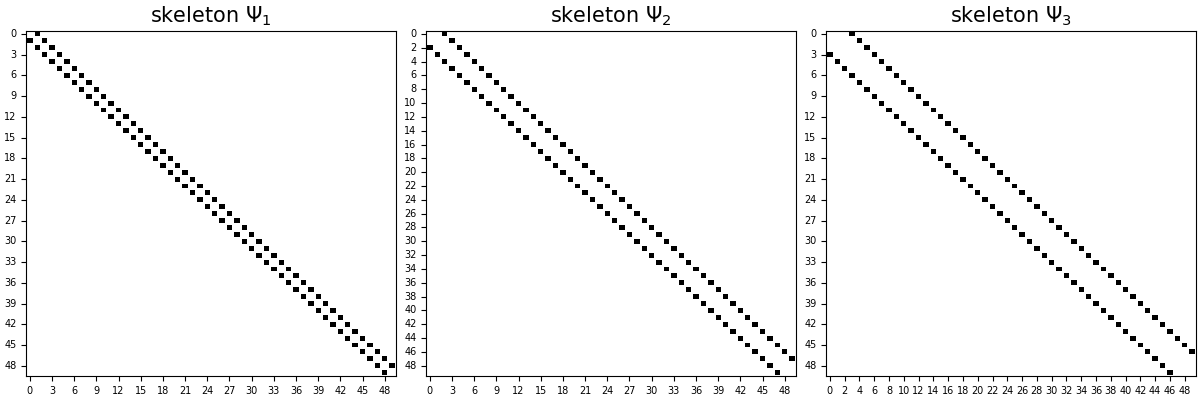}    
\caption{skeletons for $\Psi_1,\Psi_2,\Psi_3$, for settings G1 and N1}
\end{subfigure}
\begin{subfigure}[b]{0.48\textwidth}
\includegraphics[scale=0.26]{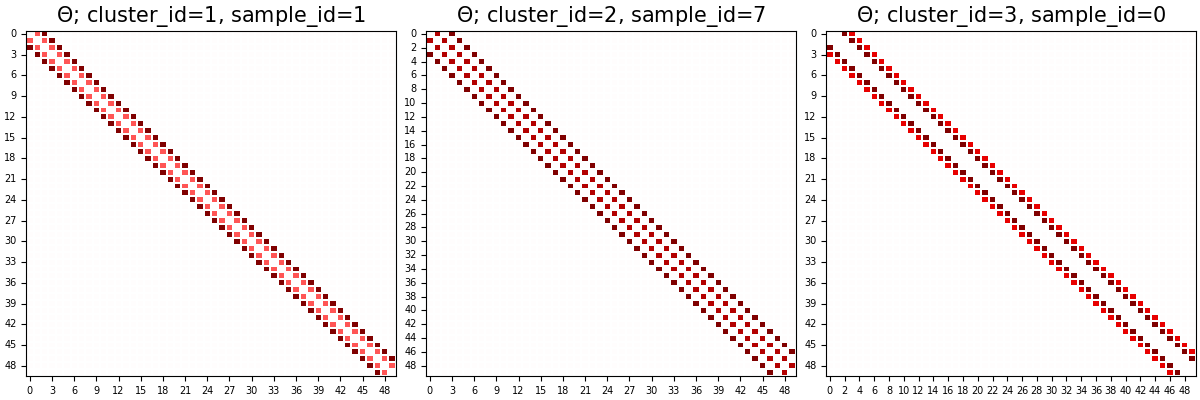}    
\caption{examples for $\Theta^i$'s whose skeleton depends on $\boldsymbol{z}^i$'s}
\end{subfigure}
\medskip

\begin{subfigure}[b]{0.48\textwidth}
\includegraphics[scale=0.26]{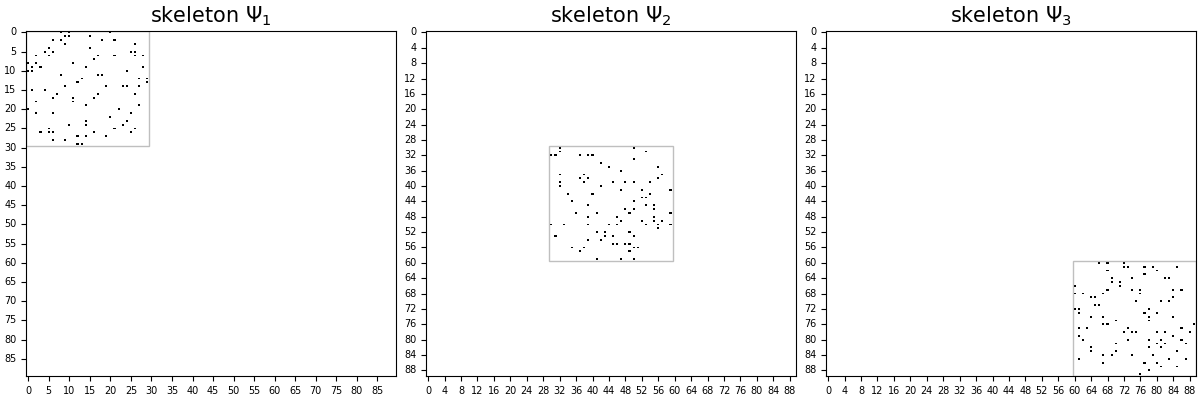}    
\caption{skeletons for $\Psi_1,\Psi_2,\Psi_3$, for settings G2 and N2}
\end{subfigure}
\begin{subfigure}[b]{0.48\textwidth}
\includegraphics[scale=0.26]{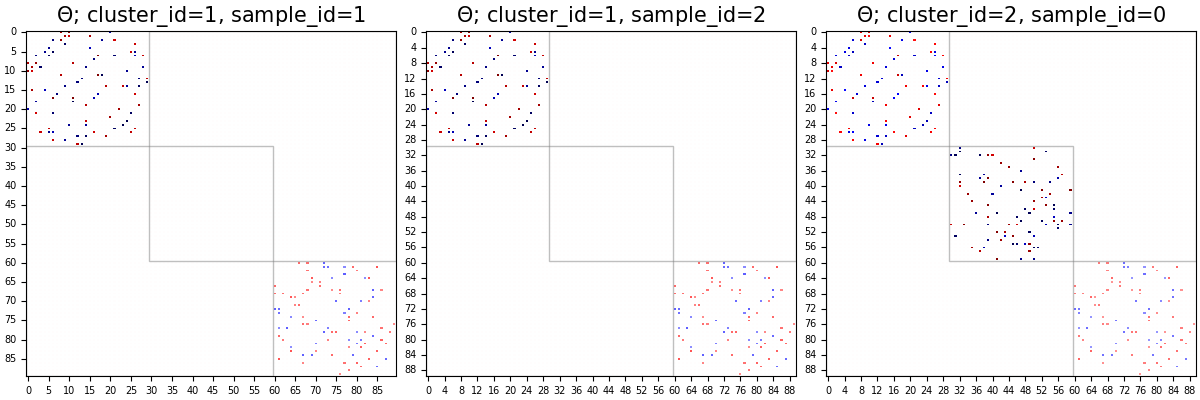}    
\caption{examples for $\Theta^i$'s whose skeleton depends on $\boldsymbol{z}^i$'s}
\end{subfigure} 
\caption{Pictorial illustration for settings G1,G2,N1,N2. Left: candidate skeletons $\Psi_1,\Psi_2,\Psi_3$. Right: different $\Theta^i$'s that are obtained from candidate skeletons, with the exact mixing depending on the values of the corresponding $\boldsymbol{z}^i$'s; their diagonals are suppressed for visualization purpose.}
\label{fig:sim-dgp}
\end{figure}

Figure~\ref{fig:sim-dgp-DAGa} provides visualizations for the candidate skeletons $B_1,B_2$ whose corresponding graphs have a tree structure\footnote{Note that by definition of the DAG, the nodes possess a topological ordering and thus the corresponding matrix can be written as a triangular one after reordering; here we are showing the skeleton matrix after such reordering.}, and their convex combination (no longer corresponds to a tree); they serve as the skeletons for the DAGs. Figure~\ref{fig:sim-dgp-DAGb} shows the resulting $\Theta^i$'s after moralization. The color (red/blue) and shade respectively reflect the sign (positive/negative) and the magnitude of the entries in the moralized graphs for the linear case where the exact values can be calculated. 
\begin{figure}[!ht]
\begin{subfigure}[b]{0.48\textwidth}
\includegraphics[scale=0.26]{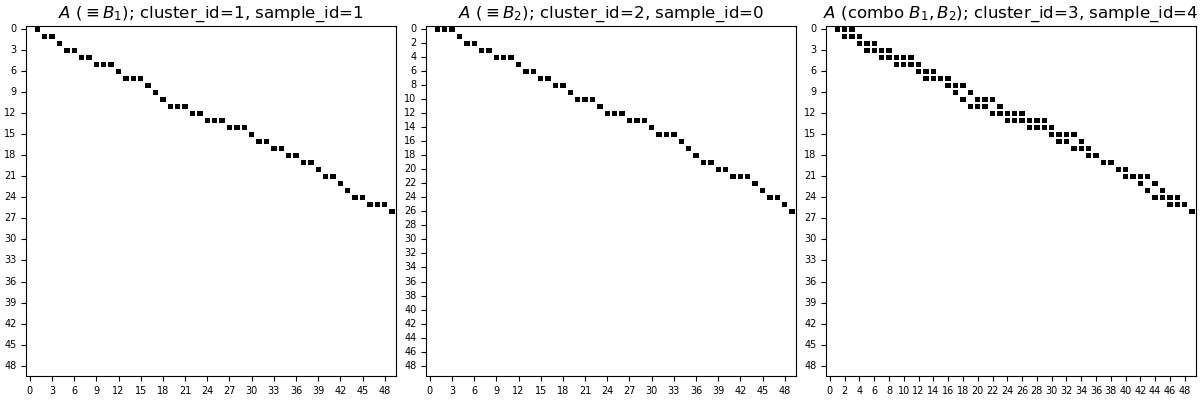}    
\caption{skeletons for $B_1,B_2$ and an example of their cvx combination}\label{fig:sim-dgp-DAGa}
\end{subfigure}
\begin{subfigure}[b]{0.48\textwidth}
\includegraphics[scale=0.26]{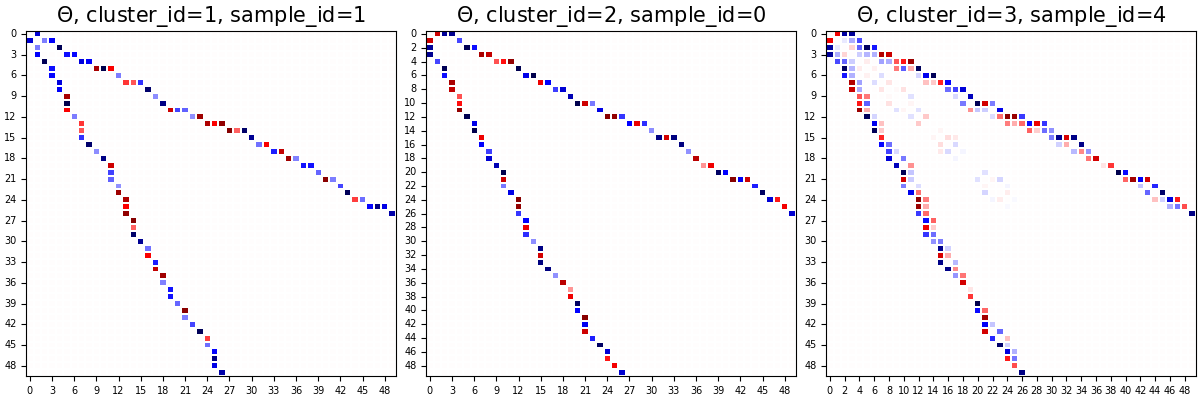}    
\caption{moralized graphs corresponding to the DAGs in~\ref{fig:sim-dgp-DAGa}}\label{fig:sim-dgp-DAGb}
\end{subfigure}
\medskip
\caption{Pictorial illustration for setting D1. Left: candidate skeletons $B_1, B_2$ and an example of their convex combination. Right: Moralized graphs for the respective DAGs, with their diagonals suppressed for visualization purpose.}
\label{fig:sim-dgp-DAG}
\end{figure}

Of particular note, in the special case where the DAG possesses a tree structure, the moralized graph is equivalent to removing the direction of the edges in the DAG, and therefore the skeleton matrix can be obtained by adding the ``transposed'' entries to that of the DAG; e.g., see the first two plots in Figure~\ref{fig:sim-dgp-DAGb}. Otherwise, in addition to the ``transposed'' entries, there are additional ``married'' entries as a result of connecting nodes having common children. As it can be seen from the rightmost plot, these ``married'' entries (the very faint ones in the middle of the heatmap) tend to be extremely weak in magnitude, and pose challenges for estimating them due to the very low signal-to-noise. Finally, for the non-linear case, one can obtain values of the moralized graph {\em approximately}, by considering linear approximation to the $f_{jk}^i$'s (recall that $x^i_j=\sum_{k\in\text{pa}(j)}f^i_{jk}(x^i_k) + \epsilon_j$), and they exhibit qualitatively similar patterns to the linear case.  

\paragraph{Additional simulation results.} 
Table~\ref{tab:metrics-extra} presents additional metrics for the DNN-based covariate-dependent graphical model estimation method. In particular, given the sparse nature of the underlying true graphs, we report the F1 score ($2*\tfrac{\text{precision}*\text{recall}}{\text{precision}+\text{recall}}$) and balanced accuracy ($\tfrac{\text{sensitivity}+\text{specificity}}{2}$) when the estimated graphs are thresholded at different levels. The result shows that for practical purposes, practitioners can effectively recover a sparse skeleton that is close to the truth by applying a reasonably small thresholding. 
\begin{table}[!ht]
\small\centering
\fontsize{7.6}{9}\selectfont
\caption{F1 score (F1) and Balance Accuracy (BA) for the estimated graphs by \textbf{DNN-GCM} at different thresholding levels.}\label{tab:metrics-extra}\vspace*{-3mm}
\begin{tabular}{r|cc|cc|cc|cc|cc|cc}
\toprule
 & \multicolumn{2}{c|}{G1} & \multicolumn{2}{c|}{G2} & \multicolumn{2}{c|}{N1} & \multicolumn{2}{c|}{N2} & \multicolumn{2}{c|}{D1} & \multicolumn{2}{c}{D2} \\
\arrayrulecolor{gray}%
\cmidrule(lr){2-3}\cmidrule(lr){4-5}\cmidrule(lr){6-7}\cmidrule(lr){8-9}\cmidrule(lr){10-11}\cmidrule(lr){12-13}
\arrayrulecolor{black}
threshold & F1 & BA &  F1 & BA & F1 & BA &  F1 & BA &F1 & BA &  F1 & BA  \\ \hline
0.010 &	0.23&	0.71&	0.15&	0.84&	0.23&	0.71&	0.15&	0.84&	0.23&	0.76&	0.24&	0.76 \\
0.025&	0.49&	0.91&	\textbf{0.72}&	\textbf{0.9}8&	0.50&	0.91&	\textbf{0.71}&	\textbf{0.96}&	0.56&	0.91&	\textbf{0.63}&	\textbf{0.85} \\
0.050&	0.83&	0.98&	0.70&	0.81&	0.89&	0.99&	0.65&	0.77&	\textbf{0.75}&	\textbf{0.89}&	0.56&	0.73 \\
0.075&	0.91&	0.98&	0.57&	0.71&	0.95&	0.98&	0.56&	0.69&	0.70&	0.84&	0.32&	0.60 \\
0.100&	\textbf{0.94}&	\textbf{0.97}&	0.48&	0.66&	\textbf{0.93}&	\textbf{0.96}&	0.41&	0.63&	0.65&	0.79&   0.02&	0.51 \\

\bottomrule
\end{tabular}
\flushleft
Note: reported metrics are first averaged across test samples for a single experiment, then averaged across experiments on 5 data replicates.
\end{table}

Table~\ref{tab:dag-extra} presents additional results for settings D1 and D2, where instead of evaluating the performance against the true moralized graph $\Theta^i$, we evaluate against a ``pseudo'' moralized graph $\tilde{\Theta}^i$ where we treat edges that are present due to married nodes as if they were non-existent. As previously mentioned, these edges typically exhibit very small magnitude and hence are difficult to recover. We expect the performance to improve slightly if the comparison were done against the $\tilde{\Theta}^i$'s, which is indeed the case: AUROC improves by around 3\% and AUPRC improves by around 5-7\% (see, e.g., the last two rows in Table~\ref{tab:simresults} for a comparison). 
\begin{table}[!ht]
\small\centering
\fontsize{7.6}{9}\selectfont
\caption{Performance evaluation for DAG-based settings against the pseudo moralized graph without edges from married nodes.}\label{tab:dag-extra}\vspace*{-3mm}
\begin{tabular}{l|cc|cc|cc|cc}
\toprule
 & \multicolumn{2}{c|}{\textbf{DNN CGM}} & \multicolumn{2}{c|}{RegGMM} & \multicolumn{2}{c|}{glasso - est. by cluster} & \multicolumn{2}{c}{nodewise Lasso - est. by cluster} \\
\arrayrulecolor{gray}%
\cmidrule(lr){2-3}\cmidrule(lr){4-5}\cmidrule(lr){6-7}\cmidrule(lr){8-9}
\arrayrulecolor{black}
 & AUROC & AUPRC &  AUROC & AUPRC &  AUROC & AUPRC & AUROC & AUPRC  \\ \hline
D1 & 0.98 (0.003) &	0.81 (0.028) & 0.97	(0.006)	& 0.81 (0.028) & 0.97 (0.003) & 0.50	(0.007) & 0.99	(0.002) & 0.89	(0.020)\\ 
D2 & 0.96 (0.013) &	0.74 (0.032) & 0.91 (0.015)	& 0.65 (0.023) & 0.94 (0.009) & 0.44	(0.002) & 0.92	(0.002) & 0.52	(0.110)
\\
\bottomrule
\end{tabular}
\end{table}
\begin{remark} For DNN-based method, the performance can improve when training is done with a larger sample size. In particular, for challenging settings such as D1  and D2, for estimates obtained from a model trained using 30,000 samples, when they are evaluated against the true graph $\Theta^i$, AUROC is given by 0.97 (0.004) and 0.95 (0.007) respectively for D1 and D2, and AUPRC is given by 0.91 (0.004) and 0.75 (0.015). When the estimates are evaluated against the pseudo moral graph $\tilde{\Theta}^i$ (so that extremely weak entries from married nodes are not counted toward the skeleton), AUROC reaches around 0.99 for both settings and AUPRC reaches 0.95 and 0.82, resp.  
\end{remark}

Finally, Table~\ref{tab:metrics-glasso-plain} shows the metrics for glasso and nodewise Lasso when they perform estimation on the full set of samples; note that in the absence of covariate-dependent estimation methods, this would be how the two methods perform in practice since the partitioning would not be known apriori. 
\begin{table}[!ht]
\scriptsize\centering
\caption{Evaluation for glasso and nodewise Lasso on full samples without partitioned by clusters.} \label{tab:metrics-glasso-plain}\vspace*{-3mm}
\resizebox{\textwidth}{!}{%
\begin{tabular}{rr|cccccccc}
\toprule
& & G1 & G2 & N1 & N2 & D1 & D2 & D1 - pseudo moralized graph & D2 - pseudo moralized graph\\
\midrule
\multirow{2}{*}{glasso} & AUROC & 0.95 & 0.98 & 0.96 & 0.98 & 0.91 & 0.89 & 0.95 & 0.94 \\
    & AUPRC & 0.47 & 0.46 & 0.47 & 0.45 & 0.39 & 0.37 & 0.43 & 0.41\\
\arrayrulecolor{gray}%
\midrule
\arrayrulecolor{black}%
\multirow{2}{*}{nodewise Lasso} & AUROC & 0.97 & 0.98 & 0.97& 0.97 & 0.92 & 0.88 & 0.97 & 0.94\\
    & AUPRC & 0.66 & 0.67 & 0.66 & 0.47 & 0.55 & 0.43 & 0.62 & 0.53\\
\bottomrule
\end{tabular}
}
\flushleft
Note: values correspond to the average over experiments on 5 data replicates. The last 2 columns show the evaluation against pseudo moralized graphs under DAG settings.
\end{table}

Compared with the results in Table~\ref{tab:simresults}, nodewise Lasso exhibits material improvement when the dependency on the covariate is linear/mildly non-linear, whereas glasso seems more susceptible to the mis-specification, induced by the varying magnitude across samples, as manifested by a much smaller improvement in performance in the case where estimation is conducted on partitioned samples. 
\section{Additional Real Data Experiments}\label{appendix:extra-real}

In this section, we present results obtained from applying the DNN-based covariate-dependent graphical model to a finance dataset involving S\&P 100 stocks. Such a model estimates the {\em partial correlation} across stocks, while conditioning on covariates that correspond to the broad market condition.  

\paragraph{The S\&P 100 constituent dataset.} We examine the inter-connectedness of the S\&P 100 Index\footnote{\url{https://www.spglobal.com/spdji/en/indices/equity/sp-100/\#overview}} constituent stocks under different market conditions. These constituent stocks correspond to 100 major blue chip companies in the United States and span various industry sectors. In the sequel, stocks and tickers may be used interchangeably, and they both correspond to the nodes of the network of interest. 

\paragraph{Data collection and preprocessing.} We collect daily stock return (calculated based on adjusted close price) data for those that are components of the S\&P 100 Index as of 
2023-12-29. At the initial data gathering stage, the first historical date is set to 2000-01-02, and the last to 2023-12-29. We require all tickers to have valid data on the first historical date, and the set of tickers are further filtered to ensure this; tickers such as \texttt{ABBV}, \texttt{GM}, \texttt{GOOG}, \texttt{META} etc are therefore excluded from subsequent analyses. After this filtering step, the remaining set encompasses 79 tickers and thus the size of the network $p=79$.

To obtain samples for $\boldsymbol{x}$, we consider beta-adjusted SPX residual returns, so that the market information is profiled out from the mean structure. Concretely, as a data-preprocessing step, let $m^i$ be the S\&P 500 Index return on day $i$, which is a proxy for the market return of US large-cap companies; for each ticker indexed by $j$, let $\check{x}^i_{j}$ be its raw return. We first estimate its beta $\beta^i_j$ by regressing its return against that of the S\&P 500 Index, over a lookback window of 252 days; the beta-adjusted residual return is then given by $x^i_j := \check{x}^i_k - \widehat{\beta}^i_j m^i$. This gives rise to $\{x^i_j,j=1,\cdots,p\}$ for all tickers. After this pre-processing step, the residual return data (namely, the $\boldsymbol{x}^i$'s) effectively starts from 2001-01-02 and there are a total number of 5785 observations. Note that the processed data exhibit de minimis serial correlation and the $\boldsymbol{x}^i$'s can be regarded as i.i.d. for practical purposes. For $\boldsymbol{z}$, we simply consider a set of variables that can reflect the high-level market condition of the day, and it encompasses the returns of the S\&P 500 Index, the Nasdaq Composite Index, respectively, and VIX. As such, $q=3$. Data is further split into train/val/test periods that respectively span 2001-2017, 2018-2019, 2020 onwards. Note that such a split is more for the purpose of demonstrating that the trained model is capable of producing interpretable results when applied to unseen data (e.g., COVID period); however, results are presented for all periods to demonstrate the patterns the proposed method uncover. 

\paragraph{Results.} Figure~\ref{fig:results-stock} shows various graph connectivity metrics over time (left) and estimated graphs on selected dates (right), after thresholding. The observation is 3-fold: (1) the estimated networks exhibit significantly higher connectivity during volatile market conditions\footnote{Similar patterns have been reported in existing financial econometrics literature \citep[e.g.,][]{billio2012econometric,diebold2014network}, albeit models/methods therein examine different quantities rather than the precision matrix. Note also that these methods do not support sample-specific estimates and thus their results were obtained through rolling window analyses.}. (2) Under normal market conditions, intra-sector connectivity level is higher, as manifested by the corresponding heatmap showing a block-diagonal pattern; however, in adverse conditions, such a pattern is diluted due to increased connectivity across the board. (3) Overall, the graph dynamics are slow-moving in the sense that the estimated graphs usually show high concordance for dates that are not far apart. 
\begin{figure*}[!ht]
\begin{subfigure}[b]{0.35\textwidth}
\includegraphics[scale=0.35]{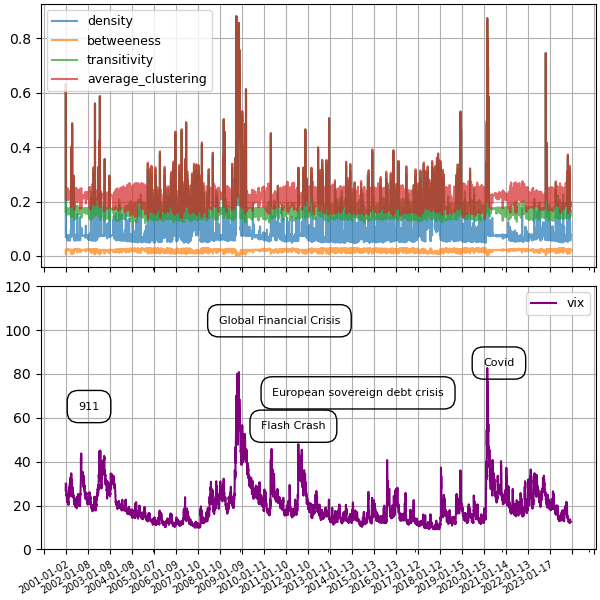}
\caption{graph connectivity metrics (top) and VIX level (bottom)}\label{fig:conn-over-time}
\end{subfigure}
\hfill
\begin{subfigure}[b]{0.65\textwidth}
\includegraphics[scale=0.35]{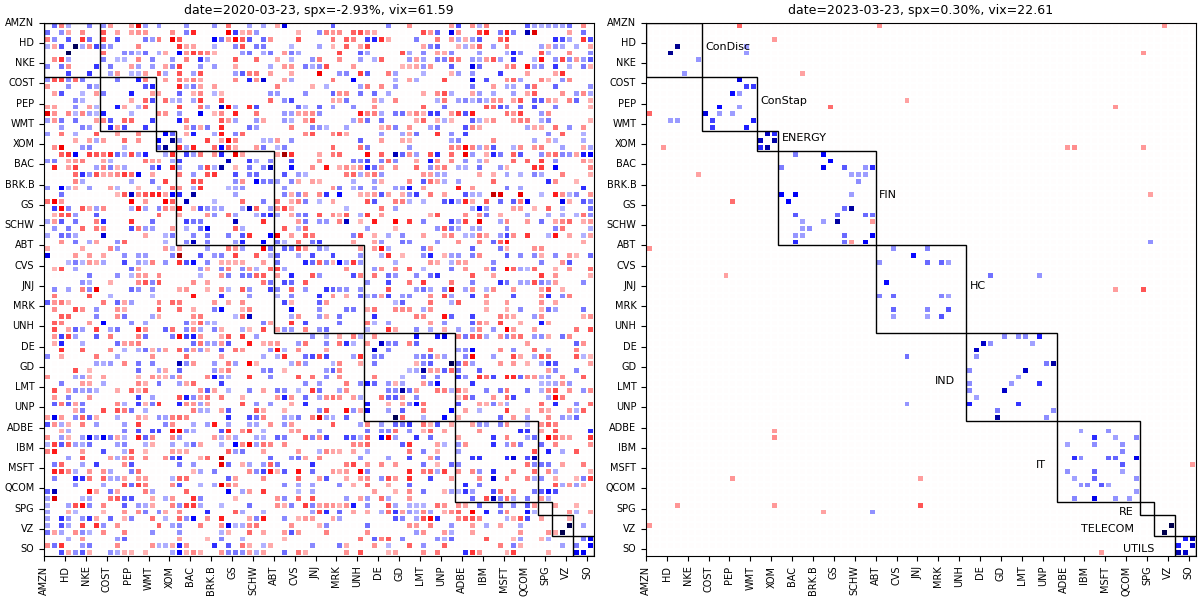}
\caption{heatmaps for estimated graphs on selected high-vix (left, 2020-03-23) and low-vix (right, 2023-03-23) days. }\label{fig:conn-high-low}
\end{subfigure}
\caption{\textbf{Left panel}: various connectivity metrics of the estimated graphs (top) and the corresponding VIX level (bottom), from the beginning of 2001 to the end of 2023. Notably events that significantly increased market volatility have been marked. \textbf{Right panel}: heatmaps of the estimated graphs on two representative dates that respectively have high and low-VIX, with red cells indicating positive partial correlation and blue cells negative ones. }
\label{fig:results-stock}
\end{figure*}

Figure~\ref{fig:high-vs-low} presents the {\em average} partial correlation graph of high and normal VIX days in the test period, respectively, where high VIX days encompass dates on which VIX exceeded the 90\% percentile of the period in question and normal VIX ones correspond to those that were below median. 
\begin{figure}[!ht]
\centering
\begin{subfigure}[b]{0.40\textwidth}
\centering
\includegraphics[scale=0.35]{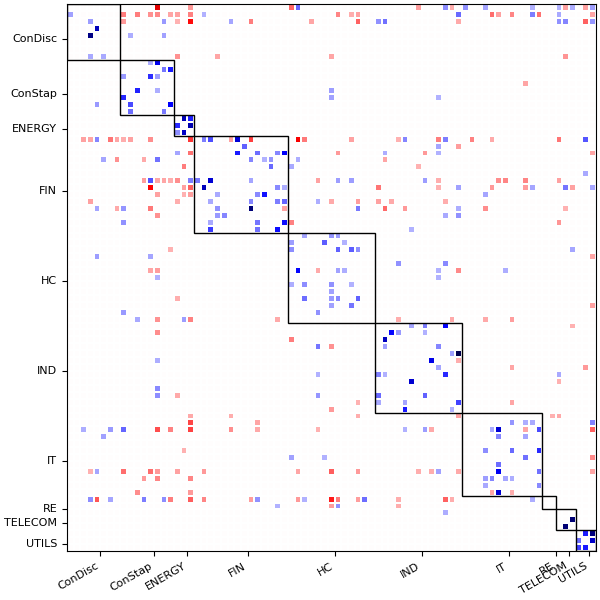}    
\caption{Averaged graph for high-VIX days}
\end{subfigure}
\begin{subfigure}[b]{0.40\textwidth}
\centering
\includegraphics[scale=0.35]{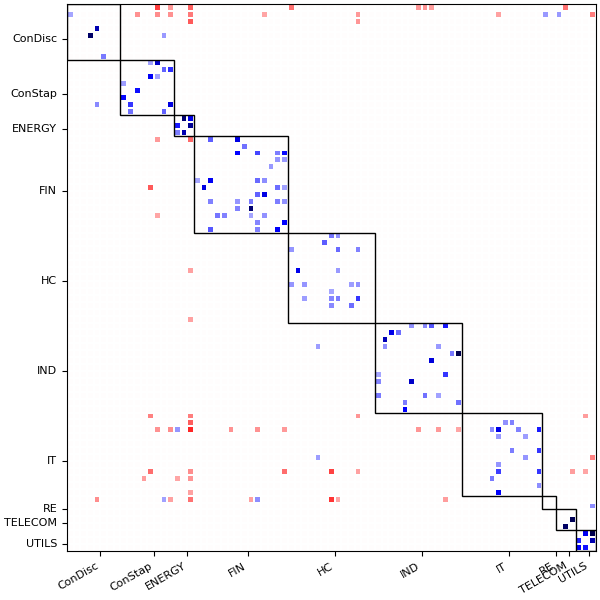}    
\caption{Averaged graph for normal-VIX days}
\end{subfigure}
\caption{Averaged partial correlation graphs for high-VIX (left) and normal-VIX (right) days, with red cells indicating positive partial correlation and blue cells negative ones.}\label{fig:high-vs-low}
\end{figure}

After the averaging step, the increased connectivity during the high-VIX period is not as pronounced as that on the extremes (e.g., 2020-03-23) where the intra-sector pattern no longer stands out. Nonetheless, compared with the average graph from the normal-VIX days, the connectivity is still at an elevated level, as manifested by the additional inter-sector connections while the intra-sector ones stay comparable. 

Finally, note that we refrain from dictating any use case of the estimated graphs, which highly depends on the specific context and application. 
\section{Graphical Model Preliminaries}\label{sec:other}

\subsection{Decomposition of a Gaussian Multivariate Distribution into Clique-wise Potential Functions According to a Graph $G$.}
\label{sec:ggm}

Consider a Gaussian random vector $\bx\in\mathbb{R}^p$ with mean vector $\mu=0$ and covariance matrix $\Sigma$ assumed to be positive definite. Its joint distribution function is given by
$$\mathbb{P}(\bx)=\frac{1}{\sqrt{(2\pi)^p \text{det}(\Sigma)}}
\exp\bigl(-\frac{1}{2}\bx^\top \Sigma^{-1}\bx\bigr).$$
Then,
$$\log(\bx)=C+\sum_{j,k=1}^p x_j x_k \Sigma^{-1}_{jk}, $$
and hence the potential functions over \textit{pairwise cliques} correspond to
$$\varphi_{(j,k)}(\bx)=x_jx_k\Sigma^{-1}_{jk};$$
they are functions of $\Sigma$ which is the parameter of the specific Gaussian distribution under consideration.
Further, it can be easily seen that $x_j\ind x_k\,|\,\bx_{-\{j,k\}}$, if and only if $\Sigma^{-1}_{jk}=0$. Hence, to estimate a Gaussian graphical model from data, it suffices to estimate the inverse covariance matrix and all the pairwise conditional independent relationships encoded by a graph $G$ correspond to the zero elements of $\Sigma^{-1}$.

It has be further shown that the Gaussian graphical model can also be estimated based on regression techniques; namely, by estimating the regression coefficients of the 
following model \citep{meinshausen2006high}, a procedure referred to as \textit{neighborhood selection}:
$$x_j = \sum_{k=1, k\neq j}^p \beta_{jk} x_k +\varepsilon_j,$$
where $\beta_{jk}:=-\Sigma^{-1}_{jk}/\Sigma^{-1}_{jj}$.

\subsection{An Exponential Family based Node-conditional Graphical Model} \label{sec:exp-gm}

Consider a random vector $\bx\in\mathbb{R}^p$ whose joint distribution is $\mathbb{P}(\bx)$. Further, consider the following collection of node conditional distributions from the \textit{univariate exponential family}, where the neighbors of a node $j$ according to an underlying graph $G$ are denoted by $\mathcal{N}_G(j)$:
\begin{align}\label{eqn:expo-node-cond}
\mathbb{P}(x_j | \bx_{-j}) & \propto \exp\Big\{ g(x_j) \big(\theta_j + 
\sum_{k\in\mathcal{N}_G(j)} \theta_{jk} g(x_k)+\sum_{k,\ell\in\mathcal{N}_G(j)} \theta_{jk\ell}g(x_k)g(x_l) \\ \nonumber 
& +\cdots+ \sum_{m_1,\cdots,m_C\in\mathcal{N}_G(j)} \theta_{jm_1\cdots,m_c}
\prod_{c=1}^C g(x_c)\big)\Big\},
\end{align}
where the multivariate \textit{canonical parameter} $\boldsymbol{\theta}$ is defined as linear combinations of up to $C$-order products of positive univariate functions $g(x_k)$ of neighboring nodes to $j$ according to the graph $G$.

An application of the Hammerseley-Clifford theorem (see Section~\ref{sec:markov}) and some algebra \citep{yang2015graphical} show that the collection of the node conditional distributions in \eqref{eqn:expo-node-cond} define a proper graphical model with respect to $G$.

\subsection{Markov Properties of Graphical Models}\label{sec:markov}

The conditional independence/dependence relationships defined by a graphical model are a consequence of the Markov properties encoded in the graph $G$. Specifically, the probability distribution $\mathbb{P}$ satisfies the \textit{pairwise Markov property} with respect to $G$, if $x_j\ind x_k \ | \ \bx_{-(j,k)}$ for all $\{k,j\}\not\in E$ 
%\textcolor{red}{I don't think this is the correct notation; edge set should not come in here. Also please take a look at Section 2 and be consistent in notation when defining the complement set.}; 
i.e., variables $x_j$ and $x_k$ are \textit{conditional independent} given all the other variables, if they are \textit{not} connected according to the edge set $E$ of $G$. It satisfies the \textit{local Markov property} with respect to $G$, if $x_k\ind \bx_{V-(\nb_G(j)\cup \{k\}) }\ | \ \bx_{\nb_G(j)}$; i.e., the conditional distribution of variable $x_j$ given all its neighbors is independent of any other nodes in $G$. Finally, $\mathbb{P}$ satisfies the \textit{global Markov property} with respect to $G$, if for any subsets $A, B, C$ of $V$ such that $C$ \textit{separates} $A$ and $B$ (i.e., every path between a node in $A$ and a node in $B$ contains a node in $C$), the following relationship holds: $\bx_A\ind \bx_B \ | \ \bx_C$. It can be shown that the global Markov property implies the local one and both imply the pairwise one.

Next, we elaborate on the relationship between the joint probability distribution $\mathbb{P}(\bx)$ of the random vector $\bx$ and the underlying graph $G$ regarding conditional independence/dependence relationships.
The connection between the graph $G$ and the probability distribution $\mathbb{P}(\mathbf{x})$ comes through the concept of graph factorization. Specifically, let $\mathcal{C}(G)$ denote the set of all cliques of $G$. Then, $\mathbb{P}(\bx)$ \textit{factorizes} with respect to $G$, if it can be written as $\mathbb{P}(\bx)=\frac{1}{Z}\prod_{C\in\mathcal{C}(G)} \phi_C(\bx_C)$, with $\phi_C>0$ for all $C\in\mathcal{C}(G)$; i.e., the joint distribution of $\bx$ can be written as a product of positive functions (called potential functions in the literature) that depend only on a subset of the random variables in the clique. However, the concept of graph factorization does not imply per se conditional independence/dependence relationships between subsets of the random vector $\mathbf{x}$. Such relationships are associated through the various Markov properties of $G$, as outlined above \citep[see also][for an indepth presentation]{lauritzen1996graphical}. Hence, if $\mathbb{P}(\bx)$ factorizes over $G$, and $G$ possesses a Markov property, then conditional independence relationships for subsets of $\bx$ are present under $\mathbb{P}$. The reverse relationship, namely that if $\mathbb{P}$ satisfies a Markov property with respect to $G$, then it also factorizes with respect to $G$, is established by the famous Hammerseley-Clifford theorem, provided that $\mathbb{P}(\bx)$ possesses a positive and continuous density \citep{besag1974spatial}. 
%The upshot is that a graphical model may thus be defined by specifying families of potential functions. 

%first specifies the distribution of each node $x_k$ conditionally on all other nodes $\bx_{V-\{j\}}$ and subsequently combines the $|V|$ conditional distributions \citep{yang2014mixed,feng2019high}.
%Note that the specification of the node conditional distributions plays an important role in the construction of a proper graphical model for the \textit{joint distribution} $\mathbb{P}(\bx)$. For example, \citep{yang2014mixed} showed that if the node conditional distributions are members of the exponential family, whose canonical parameter is assumed to be defined as a linear combination of $L$-th order products of univariate functions of neighbors in $G$ of each node $j$, then a proper graphical model is defined. In general, the question of the compatibility of a graphical model for the joint distribution $\mathbb{P}(\bx)$ based on a particular specification of all $V$ node conditional distributions is rather open and is generally addressed on a case by case basis \citep{berti2014compatibility}.

\section{Notes on Implementation}

\paragraph{Implementation of the neural network.} For all synthetic and real data experiments, we use fully connected MLPs (i.e., stacked modules consisting of Linear-ReLU-Dropout layers) with residual connections as the underlying architecture to parameterize the $\beta_{jk}(\cdot)$'s. In particular, we adopt the implementation scheme as mentioned in Remark~\ref{remark:implementation}, where we use a single neural network that takes $\boldsymbol{z}$ as the input (i.e., input dim is $q$), but with a multi-dimensional output head that produces outputs for all $\beta_{jk}(\cdot),j,k\in[p],k\neq j$ (i.e., output dim is $p(p-1)$).
\begin{figure}[!ht]
\centering
\def\layersep{1.2}
\def\verticalsep{1}
\begin{tikzpicture}[framed,scale=0.65,every node/.style={scale=0.65}]
\tikzstyle{every pin edge}=[<-,shorten <=1pt]
\tikzstyle{neuron}=[circle,draw=black!80,minimum size=25pt,inner sep=0pt]
\tikzstyle{dneuron}=[rectangle,fill=black!10,draw=black!80,inner sep=0pt,minimum width=30pt,minimum height=30pt]
\tikzstyle{annot} = [text width=5em, text centered];

%%%%%%%%%%%%%%%%%%%%%
%% input 
%%%%%%%%%%%%%%%%%%%%%
\node[neuron] (z) at (0,0) {$\boldsymbol{z}$};

%%%%%%%%%%%%%%%%%%%%%
%% fully connected
%%%%%%%%%%%%%%%%%%%%%
\draw[draw=red, thick] (1.25*\verticalsep,-0.5*\layersep) rectangle ++(2*\verticalsep,1*\layersep);

%%%%%%%%%%%%%%%%%%%%%
%% after fc
%%%%%%%%%%%%%%%%%%%%%
\node[dneuron] (z-hidden) at (5*\verticalsep,0) {$\boldsymbol{z}^{\text{hidden rep}}$};
\node[neuron] (z-copy) at (5*\verticalsep,-1.5*\layersep) {$\boldsymbol{z}$};
\draw[draw=black,thick,dotted] (4*\verticalsep,-2*\layersep) rectangle ++ (2*\verticalsep,2.5*\layersep);

%%%%%%%%%%%%%%%%%%%%%
%% fc again
%%%%%%%%%%%%%%%%%%%%%
%%node2edge
\draw[draw=red, thick] (7*\verticalsep,-1.25*\layersep) rectangle ++(2*\verticalsep,1*\layersep);

%%%%%%%%%%%%%%%%%%%%%
%% output
%%%%%%%%%%%%%%%%%%%%%

%% hidden representation h
\matrix[{matrix of math nodes},xshift=350*\verticalsep,yshift=-20*\layersep] (mtxh)
{
 & \beta_{1,2} & \beta_{1,3} & \cdots & \beta_{1,p-1} & \beta_{1,p} \\
\beta_{2,1} &  & \beta_{2,3} & \cdots  & \beta_{2,p-1} & \beta_{,2p} \\
\vdots & \vdots & \vdots & & \vdots & \vdots \\
\beta_{p,1} & \beta_{p,2} & \beta_{p,3} & \cdots & \beta_{p,p-1}\\
};  
\draw[draw=black] (9.8*\verticalsep,-1.8*\layersep) rectangle ++(5*\verticalsep,2.1*\layersep);

%%%%%%%%%%%%%%%%%%%%%
%% paths
%%%%%%%%%%%%%%%%%%%%%
\path[->,thick,black] (z) edge (1.25*\verticalsep,0);
\path[->,thick,black] (3.25*\verticalsep,0) edge (z-hidden);
\path[->,thick,black] (6*\verticalsep,-0.75*\layersep) edge (7*\verticalsep,-0.75*\layersep);
\path[->,thick,black] (9*\verticalsep,-0.75*\layersep) edge (9.8*\verticalsep,-0.75*\layersep);

\path[-,dashed,black] (z) edge (0*\verticalsep,-1.5*\layersep);
\path[->,dashed,black] (0*\verticalsep,-1.5*\layersep) edge (z-copy);

%%%%%%%%%%%%%%%%%%%%%
\node[annot,text width=2cm] at (0,-2.5*\layersep) {input\\size~$=q$};
\node[annot,text width=4cm,color=red] at (2.25*\verticalsep,0*\layersep) {MLP};
\node[annot,text width=4cm] at (2.2*\verticalsep,-1.8*\layersep) {copy};
\node[annot,text width=4cm] at (5.0*\verticalsep,-2.2*\layersep) {concat};
\node[annot,text width=4cm,color=red] at (8*\verticalsep,-0.8*\layersep) {MLP};
\node[annot,text width=3cm] at (12.5*\verticalsep,-2.5*\layersep) {Output\\size~$=p(p-1)$};

\end{tikzpicture}
\caption{Diagram for the $\boldsymbol{\beta}(\boldsymbol{z}):\mathbb{R}^q\mapsto \mathbb{R}^{p(p-1)}$ network; an MLP block consists of Linear-ReLU-Dropout layers.}\label{fig:beta}
\end{figure}
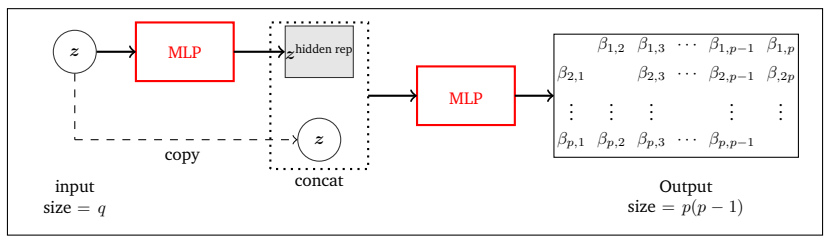

Overall, we observe that the above-mentioned implementation scheme is fairly robust to the choice of hyperparameters; in addition, it shows material improvement in performance when compared with a single MLP block without the residual connection, when $q$ becomes moderately large. Other ``intermediate'' sharing-backbone schemes have been experimented, yet the performance is generally inferior and more sensitive to the choice of hyper-parameters. During training, we fix batch size at 512 with gradient clipped at $1.0$,  while other hyperparameters are tuned over a hyper-grid, with the best set determined based the performance on a validation set of 1000 samples, with MSE as the criterion. All experiments are done on a NVIDIA RTX A5000 GPU. 
\begin{table}[!ht]
\centering\scriptsize
\caption{hyper-parameters for the MLPs and model training}\label{tab:hyper}\vspace*{-2mm}
\begin{tabular}{cccccc}
\toprule
 & hidden layer size/dropout & learning rate & scheduler type & scheduler stepsize(milestones) / decay & epochs  \\ \hline
G1 &  [128, 64], [128] / 0.3 & 0.0005 & StepLR & 20/0.25 & 50  \\
G2 &  [128, 64], [128] / 0.3 & 0.0005 & StepLR & 20/0.25 & 80\\
N1 &  [128, 64], [128] / 0.3 & 0.0005 & StepLR & 20/0.25 & 50\\
N2 &  [128, 64], [128] / 0.3 & 0.0005 & StepLR & 20/0.25 & 80\\
D1 &  [64, 32], [64] / 0.1 & 0.0005 & StepLR & 20/0.25 & 80\\
D2 &  [64, 32], [64] / 0.1 & 0.0005 & StepLR & 20/0.25 & 80\\
\bottomrule
\end{tabular}
\end{table}

In regards to the benchmarking methods, for glasso and nodewise Lasso, we rely on the implementation in \texttt{R} package \texttt{huge}\footnote{\url{https://cran.r-project.org/package=huge}} \citep{zhao2012huge}; for RegGMM, we did not find any publicly available packages/modules and thus implemented the method ourselves leveraging \texttt{PyTorch}, where $\boldsymbol{\beta}(\bz)$ effectively reduces to a network with a single linear layer.

\paragraph{Data and code availability.} The resting-state fMRI scans dataset was provided by the authors of \citet{lee2023conditional}. The S\&P 100 Index constituents dataset can be collected from Yahoo!Finance\footnote{\url{https://finance.yahoo.com}}, with the list of tickers corresponding to the constituents available through Wikipedia\footnote{\url{https://en.wikipedia.org/wiki/S\%26P_100\#Components}}. The code repository containing all the implementation is available at \url{https://github.com/GeorgeMichailidis/covariate-dependent-graphical-model}.

\end{document}